\newcommand{\termref}[1]{\textit{#1}}
\newcommand{\mathsc}[1]{\text{\normalfont\textsc{#1}}}
\newcommand{\relmiddle}[1]{\mathrel{}\middle#1\mathrel{}}
\newcommand{\bbNzero}{{\mathbb{N}_{0}}}
\newcommand{\bbN}{{\mathbb{N}}}
\newcommand{\bbR}{{\mathbb{R}}}
\newcommand{\modulo}{\mathbin{\mathrm{mod}}}
\newcommand{\veci}{\boldsymbol{i}}
\newcommand{\vecx}{\boldsymbol{x}}
\newcommand{\vecy}{\boldsymbol{y}}
\newcommand{\veclambda}{\boldsymbol{\lambda}}
\newcommand{\vecF}{\boldsymbol{F}}
\DeclareMathOperator{\id}{id}
\DeclareMathOperator{\head}{head}
\DeclareMathOperator{\tail}{tail}
\DeclareMathOperator{\src}{src}
\DeclareMathOperator{\snk}{snk}
\DeclareMathOperator{\op}{op}
\newcommand{\calG}{\mathcal{G}}
\newcommand{\BC}{\mathrm{BC}}
\newcommand{\zerothproj}{\mathsf{P}_0}
\newcommand{\firstproj}{\mathsf{P}_1}
\newcommand{\upperset}{\mathord{\uparrow}}
\newcommand{\lowerset}{\mathord{\downarrow}}
\newcommand{\AC}{\mathrm{AC}}
\newtheoremstyle{MyTheoremStyle}
  {\topsep}
  {\topsep}
  {\normalfont}
  {}
  {\bfseries}
  {}
  { }
  {}
\theoremstyle{MyTheoremStyle}
\newtheorem{MyDefinition}{Definition}
\newtheorem{MyLemma}{Lemma}
\newtheorem{MyTheorem}{Theorem}
\newtheorem{MyCorollary}{Corollary}
\newtheorem{MyExample}{Example}
\newtheorem{MyFramework}{Framework}
\newtheorem{MyCondition}{Condition}
\begin{document}

\title{An Algebraic Formalization of Forward and Forward-backward Algorithms}

\author{\name Ai Azuma \email ai-a@is.naist.jp \\
        \name Masashi Shimbo \email shimbo@is.naist.jp \\
        \name Yuji Matsumoto \email matsu@is.naist.jp \\
        \addr Graduate School of Information Science\\
        Nara Institute of Science and Technology\\
        8916-5 Takayama, Ikoma, Nara 630-0192, Japan}

\editor{EDITOR NAME}

\maketitle

\begin{abstract}%   <- trailing '%' for backward compatibility of .sty file
In this paper, we propose an algebraic formalization of the two important classes of dynamic programming algorithms called forward and forward-backward algorithms. They are generalized extensively in this study so that a wide range of other existing algorithms is subsumed. Forward algorithms generalized in this study subsume the ordinary forward algorithm on trellises for sequence labeling, the inside algorithm on derivation forests for CYK parsing, a unidirectional message passing on acyclic factor graphs, the forward mode of automatic differentiation on computation graphs with addition and multiplication, and so on. In addition, we reveal algebraic structures underlying complicated computation with forward algorithms. By the aid of the revealed algebraic structures, we also propose a systematic framework to design complicated variants of forward algorithms. Forward-backward algorithms generalized in this study subsume the ordinary forward-backward algorithm on trellises for sequence labeling, the inside-outside algorithm on derivation forests for CYK parsing, the sum-product algorithm on acyclic factor graphs, the reverse mode of automatic differentiation (a.k.a. back propagation) on computation graphs with addition and multiplication, and so on. We also propose an algebraic characterization of what can be computed by forward-backward algorithms and elucidate the relationship between forward and forward-backward algorithms.
\end{abstract}

\begin{keywords}
forward-backward algorithm, inside-outside algorithm, sum-product algorithm, back propagation, semiring
\end{keywords}

\section{Introduction}

In this paper, we propose an algebraic formalization of the two important classes of dynamic programming algorithms on computation over commutative semirings. One of the classes is called forward algorithms, in which the order of computation is consistent with the dependencies among intermediate values. The other is called forward-backward algorithms, in which forward and backward passes are combined.

Algebraic generalizations of ``forward algorithms'' are formalized for many kinds of data structures, but they are developed independently. Here, the term ``forward algorithms'' includes not only the ordinary forward algorithm~\citep{rabiner1989tutorial} on trellises~\citep{forney1973viterbi} for sequence labeling like hidden Markov models (HMMs) or linear-chain conditional random fields~\citep[CRFs,][]{lafferty2001conditional}, but also the inside algorithm on derivation forests for CYK parsing, a properly scheduled unidirectional message passing on acyclic factor graphs, etc.%
\footnote{These confusing uses of the terms ``forward algorithms'' and ``forward-backward algorithms''  in this paper may bring discomfort. However, the formalization in this paper justifies them.\label{footnote:forward_and_forward_backward}}
Examples of data structures on which computation is algebraically generalized include trellises for sequence labeling, the set of derivations by weighted deduction system or logic programming~\citep{goodman1999semiring,lopez2009translation,eisner2011dyna,kimmig2011algebraic}, junction trees~\citep{aji2000generalized}, factor graphs~\citep{kschischang2001factor}, directed graphs~\citep{mohri2002semiring}, directed hypergraphs%
\footnote{To be more precise, a subset of directed hypergraphs called \termref{B-graphs}~\citep{gallo1993directed}.}%
~\citep{klein2004parsing,huang2008advanced}, binary decision diagrams~\citep[BDDs,][]{wilson2005decision}, and sum-product networks~\citep{friesen2016sum}.

Previous studies have not considered any algebraic formalization of ``bidirectional counterparts'' of forward algorithms. Hereinafter, they are called ``forward-backward algorithms.''\cref{footnote:forward_and_forward_backward}
Forward-backward algorithms include the ordinary forward-backward algorithm \citep{rabiner1989tutorial} for the ordinary forward algorithm on trellises for sequence labeling, the inside-outside algorithm~\citep{lari1990estimation} for the inside algorithm on derivation forests for CYK parsing, the sum-product algorithm~\citep{kschischang2001factor} for a unidirectional message-passing on acyclic factor graphs, and so on. In the current status of such one-sided research stream, we overlook an integrated and organic linkage between forward and forward-backward algorithms.

In some machine learning tasks, complicated variants of forward or forward-backward algorithms are necessary to be designed. However, there is no systematic framework to design such algorithms. Examples of such complicated computation include ``forward-only computation'' of the Baum-Welch algorithm~\citep{tan1993adaptive,sivaprakasam1995forward,turin1998unidirectional,miklos2005linear,churbanov2008implementing}, the entropy gradient of CRFs~\citep{mann2007efficient}, the gradient of entropy or risk of acyclic hypergraphs~\citep{li2009first}, Hessian-vector products of CRFs~\citep{tsuboi2011fast}, cross moments of factor graphs~\citep{ilic2012computation}. It is a great loss to the research community that  individual studies independently manage to develop these algorithms.

\begin{figure}[t]
\centering
\includegraphics[width=\columnwidth,draft=false]{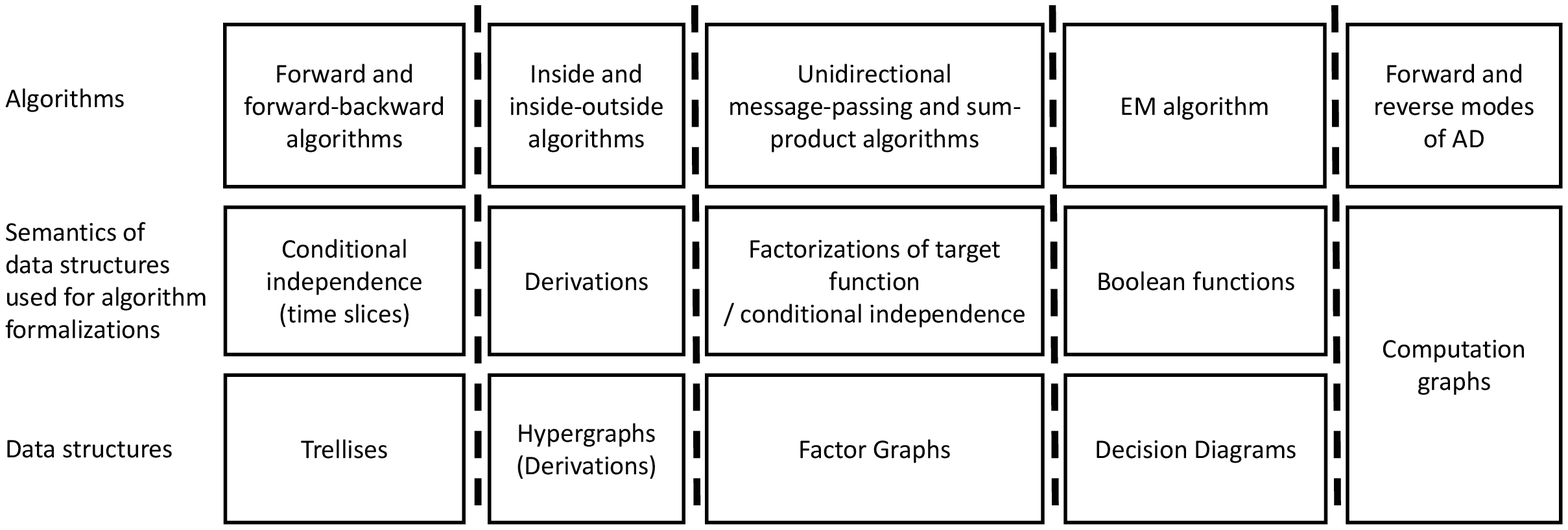}
\caption[Before]{Our grasp of the current status of the formalizations of target algorithms}
\label{fig:introduction_before}
\end{figure}%
\begin{figure}[t]
\centering
\includegraphics[width=0.8\columnwidth,draft=false]{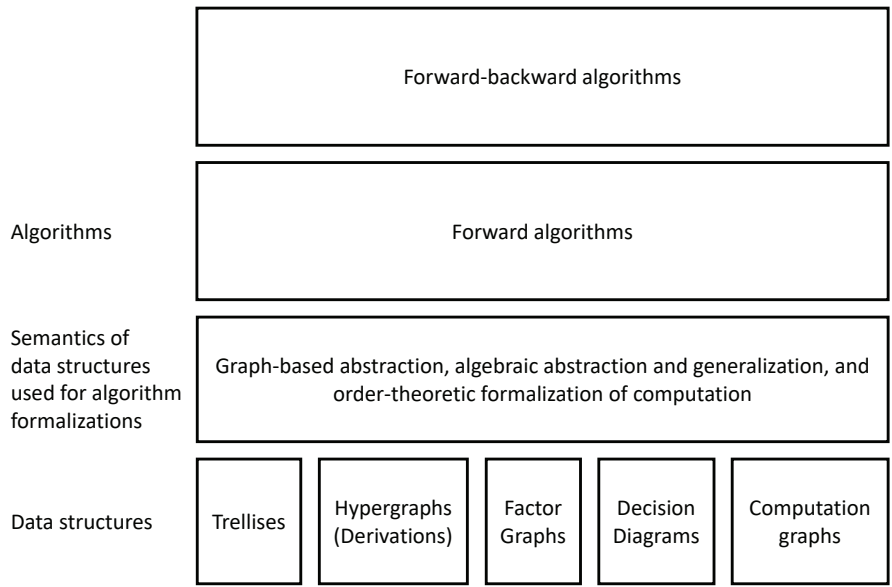}
\caption[After]{A rough sketch of the formalization presented in this paper}
\label{fig:introduction_after}
\end{figure}%
Here, we outline our approach in this paper by comparing our formalization with the usual formalizations of target algorithms. In Fig. \ref{fig:introduction_before}, we illustrate our grasp of the current status of the formalizations of target algorithms. Each of them is formalized independently. The reason is mainly because each formalization is built on top of the individual semantics of the target data structure. In contrast, we present only one formalization. Figure \ref{fig:introduction_after} illustrates a rough sketch of the formalization presented in this paper. Before we formalize forward and forward-backward algorithms, we introduce a unified abstraction of computation on a variety of data structures. The abstraction completely separates the formalization of the algorithms from details of data structures. Accordingly, we need only one formalization of forward and forward-backward algorithms while maximizing the range of their application. In addition, there is one more important point in Fig. \ref{fig:introduction_after}. Forward-backward algorithms are built on top of forward algorithms. The formalization in this way provides new insight into forward-backward algorithms and the relationship between forward and forward-backward algorithms.

Our contribution in this paper is roughly four-folded. First, we propose a computation model that can represent arbitrary computation consisting of a finite number of applications of additions and/or multiplications of a commutative semiring. The formalization presented in this paper is applicable to computation on various kinds of data structures including trellises for sequence labeling, derivation forests or acyclic hypergraphs for CYK parsing, acyclic factor graphs, a variety of decision diagrams, and so on. This wide applicability is due to the versatility of the proposed computation model on which the formalization is built. Second, algebraic structures underlying complicated computation with forward algorithms are revealed. Third, we propose a systematic framework to design complicated variants of forward algorithms by the aid of the revealed underlying algebraic structures. The framework allows us to compose a complicated and difficult-to-design forward algorithm from primitive and easy-to-design forward algorithms. Fourth, an algebraic formalization of forward-backward algorithms is proposed. It naturally reveals a relationship between forward and forward-backward algorithms. In particular, it turns out that what can be computed by forward-backward algorithms is a specific case of what can be computed by forward algorithms. This fact immediately implies that what can be computed by forward-backward algorithms can be always computed by forward algorithms. In addition, what can be computed by some instances of forward algorithms can be also done by forward-backward algorithms. The transformations between forward and forward-backward algorithms can be done in a completely systematic way, which even include a systematic transformation from the forward mode of automatic differentiation (AD) to the reverse mode (a.k.a. back propagation). We also identify a time-space trade-off between corresponding forward and forward-backward algorithms.

This paper is organized as follows: Section \ref{sec:forward_algorithms} describes an algebraic formalization of forward algorithms, and Section \ref{sec:forward_backward_algorithms} describes an algebraic formalization of forward-backward algorithms. In more detail, Section \ref{subsec:commutative_semiring_and_computation_graph} introduces a computation model on which we build formalizations throughout this paper. Sections \ref{subsec:commutative_semiring_computation_graph_parametrized_by_monoid_homomorphism} and \ref{subsec:tensor_product_of_semialgebras_for_forward_algorithm} reveal algebraic structures underlying complicated forward algorithms. Section \ref{subsec:tensor_product_of_semialgebras_for_forward_algorithm} also offers a systematic framework to compose complicated and difficult-to-design forward algorithms from primitive and easy-to-design forward algorithms. Section \ref{subsec:backward_invariants_and_forward_backward_algorithms} formalizes forward-backward algorithms in an algebraic way. Section \ref{subsec:conditions_favoring_forward_backward_algorithms} compares forward and forward-backward algorithms from the point of view of time-space trade-offs. Section \ref{subsec:checkpoints_for_forward_backward_algorithms} provides a brief note on so-called \termref{checkpoints}, which trade off time and space complexity in forward-backward algorithms. Section \ref{sec:conclusion} concludes this paper.

\section{Forward Algorithms}
\label{sec:forward_algorithms}

\subsection{Commutative Semiring and Computation Graph}
\label{subsec:commutative_semiring_and_computation_graph}

Unless otherwise stated, when the extensional definition of a set, say, $X = \left\{x_1, \dots, x_n\right\}$, is given, we henceforth assume that the listed elements are pairwise distinct ($x_i \neq x_j$ for $i \neq j$ in other words) and thus $\left|X\right| = n$.

Let $\bbNzero$ denote the set of all non-negative integers and $\bbN$ the set of all positive integers.

In this subsection, we first introduce some notations related to directed acyclic graphs (dags) and then definitions related to the theory of semiring. They are combined to formalize a model of computation over a commutative semiring.

In this paper, parallel arcs are allowed in dags. For this reason, the arc set $E$ of a dag $G = (V, E)$ is a set equipped with the head function $\head: E \to V$ and tail function $\tail: E \to V$ that map an arc to its head and tail, respectively. For a dag $G = (V, E)$, we denote:
\begin{itemize}
\item the set of all \termref{in-arcs} of a node $v \in V$ by $E^{-}_{G}(v)$, i.e., $E^{-}_{G}(v) = \left\{e \in E \relmiddle| \head(e) = v\right\}$,
\item the set of all \termref{out-arcs} of a node $v \in V$ by $E^{+}_{G}(v)$, i.e., $E^{+}_{G}(v) = \left\{e \in E \relmiddle| \tail(e) = v\right\}$,
\item the set of all \termref{source nodes} (i.e., nodes without any in-arc) by $\src(G)$, i.e., $\src(G) = \left\{v \in V \relmiddle| E^{-}_{G}(v) = \emptyset\right\}$, and
\item the set of all \termref{sink nodes} (i.e., nodes without any out-arc) by $\snk(G)$, i.e., $\snk(G) = \left\{v \in V \relmiddle| E^{+}_{G}(v) = \emptyset\right\}$.
\end{itemize}

\begin{MyDefinition}[Commutative Monoid]
Let $M \neq \emptyset$ be a set, $\cdot$ a binary operation on $M$, and $1_M$ an element of $M$. Then $\left(M, \cdot, 1_M\right)$ is called a \termref{commutative monoid} if and only if it satisfies, for every $a, b, c \in M$:
\begin{itemize}
\item $1_M$ is the \termref{identity element}, i.e., $1_M \cdot a = a \cdot 1_M = a$,
\item the operation obeys the \termref{commutative law}, i.e., $a \cdot b = b \cdot a$, and
\item the operation obeys the \termref{associative law}, i.e., $a \cdot (b \cdot c) = (a \cdot b) \cdot c$.
\end{itemize}
\label{def:commutative_monoid}
\end{MyDefinition}

When there will be no confusion, an algebraic structure is often denoted simply by its underlying set. This leads to objectionable notations, e.g., $M = \left(M, \cdot, 1_M\right)$.

Definition \ref{def:commutative_monoid} is ``multiplicatively-written,'' that is to say, the operation is denoted by a multiplication-suggestive symbol, and the identity element is denoted by ``$1$.'' However, a commutative monoid may be ``additively-written'' as the situation demands, i.e., the operation is denoted by an addition-suggestive symbol such as ``$+$,'' and the identity element is denoted by ``$0$.''

For an additively-written commutative monoid $M = \left(M, +, 0_M\right)$, the \termref{summation symbol $\sum$} is used. To be precise, let $X$ be a set, and let each element $x \in X$ be associated with an element $a_x \in M$. Even if $X$ is an infinite set, we assume $X' = \left\{a_x \relmiddle| x \in X \text{ and } a_x \neq 0_M\right\}$ is a finite set. Then we define
\begin{equation}
\sum_{x \in X} a_x = \begin{cases}
  0_M                 & \text{if $X' = \emptyset$,} \\
  \sum_{x \in X'} a_x & \text{otherwise.}
\end{cases}
\label{def:summation}
\end{equation}
In particular, if $\left|X'\right| = n$ and there exists $a \in M$ such that $a_x = a$ for every $a_x \neq 0_M$, then \eqref{def:summation} defines the \termref{$n$ repetitions} of $a$, denoted by $na$. For a multiplicatively-written commutative monoid $\left(M', \cdot, 1_{M'}\right)$, the \termref{product symbol $\prod$} is also defined in a similar fashion. Note, however, $\prod_{x \in X} a_x = 1_{M'}$ if $X' = \emptyset$. The \termref{$n$-th power} of $a \in M'$, denoted by $a^n$, is defined likewise.

\begin{MyDefinition}[Monoid Homomorphism]
Let $M = \left(M, \cdot, 1_{M}\right)$ and $M' = \left(M', \odot, 1_{M'}\right)$ be commutative monoids. Then a mapping $f \colon M \to M'$ is called a \termref{(monoid) homomorphism} from $M$ to $M'$ if and only if it satisfies, for every $a, b \in M$:
\begin{itemize}
\item $f\!\left(a \cdot b\right) = f\!\left(a\right) \odot f\!\left(b\right)$, and
\item $f\!\left(1_{M}\right) = 1_{M'}$.
\end{itemize}
\end{MyDefinition}

\begin{MyDefinition}[Commutative Semiring]
Let $S \neq \emptyset$ be a set, $+$ and $\cdot$ binary operations on $S$, and $0_S$ and $1_S$ elements of $S$. Then $\left(S, +, \cdot, 0_{S}, 1_{S}\right)$ is called a \termref{commutative semiring} if and only if it satisfies:
\begin{itemize}
\item $\left(S, +, 0_{S}\right)$ is a commutative monoid,
\item $\left(S, \cdot, 1_{S}\right)$ is a commutative monoid,
\item the two operations are connected by the \termref{distributive law}, i.e., $a \cdot \left(b + c\right) = \left(a \cdot b\right) + \left(a \cdot c\right)$ and $\left(a + b\right) \cdot c = \left(a \cdot c\right) + \left(b \cdot c\right)$ for every $a, b, c \in S$, and
\item $0_S$ is \termref{absorbing}, i.e., $0_{S} \cdot a = a \cdot 0_{S} = 0_{S}$ for every $a \in S$.
\end{itemize}
\end{MyDefinition}

For a commutative semiring $S = \left(S, +, \cdot, 0_S, 1_S\right)$, $\left(S, +, 0_S\right)$ and $\left(S, \cdot, 1_S\right)$ are called the \termref{additive monoid} and \termref{multiplicative monoid} of $S$, respectively. For every $a \in S$, the $n$ repetitions of $a$, denoted by $na$, (resp. the $n$-th power of $a$, denoted by $a^n$) is defined in terms of one defined on the additive (resp. multiplicative) monoid of $S$.

Now we are ready to introduce a model of computation over a commutative semiring. The computation model to be introduced is so-called computation graphs~\citep[a.k.a. Kantorovich graphs,][]{rall1981automatic} but where the kinds of operations are limited to two binary ones, i.e., addition and multiplication of the commutative semiring. Each value involved in the computation is attached to a source node of a dag, and each internal node of the dag designates either addition or multiplication operation of the commutative semiring. Intermediate values during the computation are associated with nodes and arcs by the \termref{forward variable}, and dependencies among them are represented by arcs.

\begin{MyDefinition}[Commutative Semiring Computation Graph\footnote{The underlying graph of a commutative semiring computation graph is allowed to be disconnected. Disconnected commutative semiring computation graphs are necessary to model computation on ``disconnected'' data structures (e.g., disconnected factor graphs).}]
Let $G = \left(V, E\right)$ be a finite dag, $\op$ a mapping from $V \setminus \src\!\left(G\right)$ to $\left\{\text{``$+$''}, \text{``$\cdot$''}\right\}$, $S$ a commutative semiring, and $\xi$ a mapping from $\src\!\left(G\right)$ to $S$. Then the quadruple $\left(G, \op, S, \xi\right)$ is called a \termref{commutative semiring computation graph}.
\end{MyDefinition}

Unless otherwise noted, we henceforth use the term ``computation graph'' to denote a commutative semiring computation graph for the sake of brevity.

\begin{MyDefinition}[Forward Variable\footnote{For purely technical reasons, forward variables (and backward variables in Definition \ref{def:beta}) are defined not only on nodes but also on arcs. In Section \ref{sec:forward_backward_algorithms}, we formalize forward-backward algorithms with the partially ordered set (poset) naturally induced by computation graphs. If the poset were induced only by nodes, the poset would belong to the class of arbitrary posets for which some problems are intractable. In contrast, defining forward variables on both nodes and arcs makes the induced posets fall into a tractable subset called \termref{chain-antichain-complete} posets, \termref{edge-induced} posets, \termref{N-free} posets, or \termref{quasi-series-parallel} posets, which are equivalent to each other~\citep{mohring1989computationally}. In fact, the proofs of some statements in Section \ref{sec:forward_backward_algorithms} rely on the tractability of this subclass.\label{footnote:tractable_posets}}]
Let $G = \left(V, E\right)$ be a finite dag, $S = \left(S, +, \cdot, 0_S, 1_S\right)$ a commutative semiring, and $\mathcal{G} = \left(G, \op, S, \xi\right)$ a computation graph. Then the \termref{forward variable} of $\mathcal{G}$, denoted by $\alpha_{\mathcal{G}}$, is a mapping from $V \cup E$ to $S$ that is defined by, for every node $v \in V$ and every arc $e \in E$,
\begin{equation}
\begin{aligned}
\alpha_{\mathcal{G}}\!\left(v\right) &= \begin{cases}
  \xi\!\left(v\right)
    & \quad \text{if $v \in \src(G)$,}         \\
  \sum_{e \in E^{-}_{G}(v)} \alpha_{\mathcal{G}}(e)
    & \quad \text{if $\op(v) = \text{``$+$''}$,}       \\
  \prod_{e \in E^{-}_{G}(v)} \alpha_{\mathcal{G}}(e)
    & \quad \text{otherwise (i.e., if $\op(v) = \text{``$\cdot$''}$),}
\end{cases} \\
\alpha_{\mathcal{G}}\!\left(e\right) &= \alpha_{\mathcal{G}}(\tail(e)) \; .
\end{aligned}
\label{eq:alphadef}
\end{equation}
\label{def:alpha}
\end{MyDefinition}

It is obvious that the above mutually recursive definition of $\alpha_{\mathcal{G}}$ on $V \cup E$ is well-defined since $G$ is a finite dag.

\termref{Forward algorithms} are algorithms to compute values of the forward variable of computation graphs. Leaving aside ``scheduling problems,'' we can readily compute values of the forward variable of a computation graph since forward variables are constructively defined in Definition \ref{def:alpha}. We postpone the presentation of the pseudo-code of forward algorithms with a full discussion of scheduling problems and others until Section \ref{sec:forward_backward_algorithms} because some additional notions are necessary to be introduced.

Let $\mathcal{G} = \left(G, \op, S, \xi\right)$ be a computation graph, and $\src\!\left(G\right) = \left\{s_1, \dots, s_n\right\}$. By induction on the recursive definition \eqref{eq:alphadef} in Definition \ref{def:alpha}, it is easy to show that, for every node and arc $t \in V \cup E$, the forward variable $\alpha_{\mathcal{G}}\!\left(t\right)$ is of the form
\begin{equation}
\alpha_{\mathcal{G}}\!\left(t\right) =
\sum_{\veci \in {\mathbb{N}_0}^n} c_{t, \veci}
\left(\xi\!\left(s_1\right)\right)^{i_1} \cdots \left(\xi\!\left(s_n\right)\right)^{i_n} \; ,
\label{eq:plain_alpha}
\end{equation}
where $\veci = (i_1, \dots, i_n)$, and $c_{t, \veci} \in \mathbb{N}_0$ for every $t \in V \cup E$ and $\veci \in {\mathbb{N}_0}^n$ but only finitely many of the coefficients $c_{t, \veci}$ in the summand are different from $0$, and the values of $c_{t, \veci}$ are dependent only on $G$, $\op$, and $t$, i.e., they are independent of $S$ and $\xi$. Note that while the right-hand side in \eqref{eq:plain_alpha} is an infinite sum over ${\mathbb{N}_0}^n$ at first glance, it is actually a finite sum and well-defined because of the condition imposed on $c_{t, \veci}$.

The fact that $\alpha_{\mathcal{G}}\!\left(t\right)$ takes on the form of \eqref{eq:plain_alpha} can be rephrased as follows. The pair $\left(G, \op\right)$ solely determines a polynomial in the indeterminates $x_1, \dots, x_n$ over $\mathbb{N}_0$, which is of the form
\begin{equation}
\sum_{\veci \in {\mathbb{N}_0}^n} c_{t, \veci} {x_1}^{i_1} \cdots {x_n}^{i_n}
\label{eq:form_of_free_forward_variable}
\end{equation}
for every node and arc $t \in V \cup E$. On the other hand, the pair $\left(S, \xi\right)$ specifies ``substitution'' of the polynomial, or ``replacing $x_i$ by $\xi(s_i)$ for every $i$'' in other words, and $\alpha_{\mathcal{G}}\!\left(t\right)$ is equal to the result of the substitution on every $t \in V \cup E$.%
\footnote{%
  See \citet[Chapter II]{hebisch1998algebraic} for the formal definitions of \termref{indeterminates}, \termref{polynomials}, \termref{substitution}, etc., in particular, Definition II.1.1, Theorems II.1.3, II.1.6, II.1.8, and Remark II.1.10.}

The modularity offered by algebraic abstraction is based on the division of the roles between the two pairs $\left(G, \op\right)$ and $\left(S, \xi\right)$ in a given computation graph $\left(G, \op, S, \xi\right)$. On the one hand, the pair $\left(G, \op\right)$ represents intermediate procedure that applies addition- and multiplication-like operations to given values. In this procedure, the details of the underlying set and the two operations are completely abstracted away; they can be anything that obeys the axioms of commutative semiring. On the other hand, the pair $\left(S, \xi\right)$ specifies these details and gives the abstract computation specified by $\left(G, \op\right)$ concrete meaning.

In order to formalize this modularity, we define the \termref{free forward variable}%
\footnote{%
  The term ``free forward variable'' is named after the fact that $\mathbb{N}_0\!\left[x_1, \dots, x_n\right]$ is the \termref{free} commutative semiring on $\left\{x_1, \dots, x_n\right\}$.}
on a computation graph by using the \termref{polynomial semiring over $\bbNzero$}. Let $\mathbb{N}_0\!\left[x_1, \dots, x_n\right]$ denote the set of all polynomials in the indeterminates $x_1, \dots, x_n$ over $\mathbb{N}_0$, that is,
\begin{equation*}
\begin{aligned}
&\mathbb{N}_0\!\left[x_1, \dots, x_n\right] = \\
&\qquad\left\{
{\textstyle\sum_{\veci \in {\mathbb{N}_0}^n} c_{\veci} {x_1}^{i_1} \cdots {x_n}^{i_n}}
\relmiddle|
\text{$\veci = (i_1, \dots, i_n)$, and
  $c_{\veci} \in {\mathbb{N}_0}$ but almost all $c_{\veci}$ are $0$.}
\right\} \; .
\end{aligned}
\end{equation*}
Then we can equip $\mathbb{N}_0\!\left[x_1, \dots, x_n\right]$ with the equality relation, addition, and multiplication in the usual way to make it a commutative semiring. The resulting semiring is called the \termref{polynomial semiring in the indeterminates $x_1, \dots, x_n$ over $\bbNzero$}.\footnote{%
  See \citet[Theorem II.1.3, Definition II.1.4, and Remark II.1.10]{hebisch1998algebraic} for the formal definitions of the equality relation, addition, and multiplication.}

\begin{MyDefinition}[Free Forward Variable]
Let $G = \left(V, E\right)$ be a finite dag, $\op$ a mapping from $V \setminus \src\!\left(G\right)$ to $\left\{\text{``$+$''}, \text{``$\cdot$''}\right\}$, $\src\!\left(G\right) = \left\{s_1, \dots, s_n\right\}$, and $\chi \colon \src\!\left(G\right) \to \bbNzero\!\left[x_1, \dots, x_n\right]$ a function that maps each $s_i \in \src\!\left(G\right)$ to $x_i$. Then, the \termref{free forward variable of $\left(G, \op\right)$ with respect to $\chi$}, denoted by $\alpha_{\left(G, \op, \chi\right)}$, is a mapping $\alpha_{\left(G, \op, \chi\right)} \colon V \cup E \to \bbNzero\!\left[x_1, \dots, x_n\right]$ defined by
\begin{equation*}
\alpha_{\left(G, \op, \chi\right)}\!\left(t\right) =
\alpha_{\left(G, \op, \bbNzero\left[x_1, \dots, x_n\right], \chi\right)}\!\left(t\right)
\end{equation*}
for every node and arc $t \in V \cup E$.
\end{MyDefinition}

By using the free forward variable of a given computation graph $\left(G, \op, S, \xi\right)$, the division of the roles between the two pairs $\left(G, \op\right)$ and $\left(S, \xi\right)$ can be summarized in the following lemma.

\begin{MyLemma}[Substitution Principle of Free Forward Variable]
Let $\mathcal{G} = \left(G, \op, S, \xi\right)$ be a computation graph, and $\src\!\left(G\right) = \left\{s_1, \dots, s_n\right\}$. Assume that one defines $\chi \colon \src\!\left(G\right) \to \bbNzero\!\left[x_1, \dots, x_n\right]$ mapping $s_i$ to $x_i$ for every $s_i \in \src\!\left(G\right)$, and obtains the following form of the free forward variable of $\left(G, \op\right)$ w.r.t. $\chi$
\begin{equation*}
\alpha_{\left(G, \op, \chi\right)}\!\left(t\right)
= \sum_{\veci \in \bbNzero^n} c_{t, \veci} {x_1}^{i_1} \cdots {x_n}^{i_n}
\end{equation*}
for every node and arc $t \in V \cup E$, then
\begin{equation*}
\alpha_{\mathcal{G}}\!\left(t\right) = \sum_{\veci \in \bbNzero^n} c_{t, \veci}
\left(\xi\!\left(s_1\right)\right)^{i_1} \cdots \left(\xi\!\left(s_n\right)\right)^{i_n} \; .
\end{equation*}
\label{lemma:substitution_principle_of_free_forward_variable}
\end{MyLemma}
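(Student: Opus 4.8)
The plan is to deduce the statement from the universal property of the polynomial semiring $\bbNzero[x_1,\dots,x_n]$ together with a general \emph{naturality} property of the forward variable with respect to semiring homomorphisms.

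First I would prove the following auxiliary lemma by induction along the well-founded recursion underlying Definition~\ref{def:alpha} (legitimate because $G$ is a finite dag, as already observed): if $\psi\colon S_1 \to S_2$ is a homomorphism of commutative semirings and $\mathcal{G}_1 = (G, \op, S_1, \xi_1)$, $\mathcal{G}_2 = (G, \op, S_2, \psi \circ \xi_1)$ are computation graphs sharing the pair $(G, \op)$, then $\alpha_{\mathcal{G}_2}(t) = \psi(\alpha_{\mathcal{G}_1}(t))$ for every $t \in V \cup E$. The base case is a source node $s$, where both sides equal $\psi(\xi_1(s))$ by definition. For a non-source node $v$ with $\op(v) = \text{``$+$''}$, the set $E^{-}_{G}(v)$ is finite, so $\alpha_{\mathcal{G}_2}(v) = \sum_{e \in E^{-}_{G}(v)} \alpha_{\mathcal{G}_2}(e) = \sum_{e \in E^{-}_{G}(v)} \psi(\alpha_{\mathcal{G}_1}(e)) = \psi\bigl(\sum_{e \in E^{-}_{G}(v)} \alpha_{\mathcal{G}_1}(e)\bigr) = \psi(\alpha_{\mathcal{G}_1}(v))$, using the induction hypothesis on the in-arcs and the fact that $\psi$ is a homomorphism of the additive monoids; the case $\op(v) = \text{``$\cdot$''}$ is identical with $\prod$ and the multiplicative monoids in place of $\sum$ and the additive ones. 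For an arc $e$, $\alpha_{\mathcal{G}_2}(e) = \alpha_{\mathcal{G}_2}(\tail(e)) = \psi(\alpha_{\mathcal{G}_1}(\tail(e))) = \psi(\alpha_{\mathcal{G}_1}(e))$ by the induction hypothesis on $\tail(e)$.

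Next, because $\bbNzero[x_1,\dots,x_n]$ is the free commutative semiring on $\{x_1,\dots,x_n\}$, there is a unique semiring homomorphism $\phi\colon \bbNzero[x_1,\dots,x_n] \to S$ with $\phi(x_i) = \xi(s_i)$ for all $i$ --- the substitution map; see \citet[Chapter~II]{hebisch1998algebraic}. Since $\chi(s_i) = x_i$ we get $\phi \circ \chi = \xi$, so the auxiliary lemma applied with $\psi = \phi$, $S_1 = \bbNzero[x_1,\dots,x_n]$, $\xi_1 = \chi$, and $S_2 = S$ gives $\alpha_{\mathcal{G}}(t) = \alpha_{(G,\op,S,\phi\circ\chi)}(t) = \phi\bigl(\alpha_{(G,\op,\bbNzero[x_1,\dots,x_n],\chi)}(t)\bigr) = \phi\bigl(\alpha_{(G,\op,\chi)}(t)\bigr)$ for every $t \in V \cup E$. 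It then remains to expand: writing $\alpha_{(G,\op,\chi)}(t) = \sum_{\veci \in \bbNzero^n} c_{t,\veci}\, x_1^{i_1} \cdots x_n^{i_n}$ (a finite sum, in which $c_{t,\veci}\, m$ abbreviates the $c_{t,\veci}$-fold repetition of $m$ in the additive monoid), and applying $\phi$ --- which preserves these finite sums and products, hence also powers and $\bbNzero$-repetitions --- yields $\phi(\alpha_{(G,\op,\chi)}(t)) = \sum_{\veci \in \bbNzero^n} c_{t,\veci}\, (\xi(s_1))^{i_1} \cdots (\xi(s_n))^{i_n}$, which is the asserted identity.

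I do not expect a genuine obstacle here; the only point requiring care is the bookkeeping around the summations, namely confirming that $\sum_{e \in E^{-}_{G}(v)}$ and $\sum_{\veci}$ really are finite sums so that the homomorphism property --- which a priori concerns only the binary operations and the identities --- transfers to them, and that the coefficient notation $c_{t,\veci}\,(\cdot)$ is the additive-monoid repetition of Section~\ref{subsec:commutative_semiring_and_computation_graph} and is therefore preserved by $\phi$. Everything else is a routine induction along the dag order that is already known to be well-founded. Alternatively, one could bypass $\phi$ entirely and read the claim off \eqref{eq:plain_alpha}, since the coefficients $c_{t,\veci}$ there depend only on $G$, $\op$, and $t$; the homomorphism argument above is just the clean way of making that observation rigorous.
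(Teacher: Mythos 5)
Your proof is correct, and at bottom it rests on the same engine as the paper's (one-sentence) proof: an induction along the well-founded recursion of Definition~\ref{def:alpha} on the finite dag $G$. The difference is in how you package that induction. The paper inducts directly on the displayed identity; you instead isolate a naturality lemma --- $\alpha_{(G,\op,S_2,\psi\circ\xi_1)} = \psi \circ \alpha_{(G,\op,S_1,\xi_1)}$ for any commutative-semiring homomorphism $\psi$ --- and then specialize $\psi$ to the substitution homomorphism supplied by the freeness of $\bbNzero[x_1,\dots,x_n]$, which the paper itself records in a footnote. Your bookkeeping is sound: the sums $\sum_{e\in E^-_G(v)}$ are finite because $G$ is, the sum over $\veci$ is finite because almost all $c_{t,\veci}$ vanish, and the $\bbNzero$-repetitions and powers are preserved because $\psi$ is a homomorphism of both the additive and multiplicative monoids. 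What your factorization buys is reuse: the same naturality lemma, applied with $\psi = f$ a monoid homomorphism into the multiplicative monoid of $S$, is essentially what the paper invokes informally in the proof of Theorem~\ref{thm:parametrized_alpha} (``follows from Lemma~\ref{lemma:substitution_principle_of_free_forward_variable} and the homomorphism of $f$''), so stating it once as a general transport principle tightens both arguments. Your closing remark that one could instead read the claim off \eqref{eq:plain_alpha} is also consistent with the paper, since the coefficients $c_{t,\veci}$ there are stated to depend only on $G$, $\op$, and $t$.
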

\begin{proof}
The statement can be established by induction on the finite dag $G$.
\end{proof}

\begin{figure}[t]
\centering
\includegraphics[width=0.7\columnwidth,draft=false]{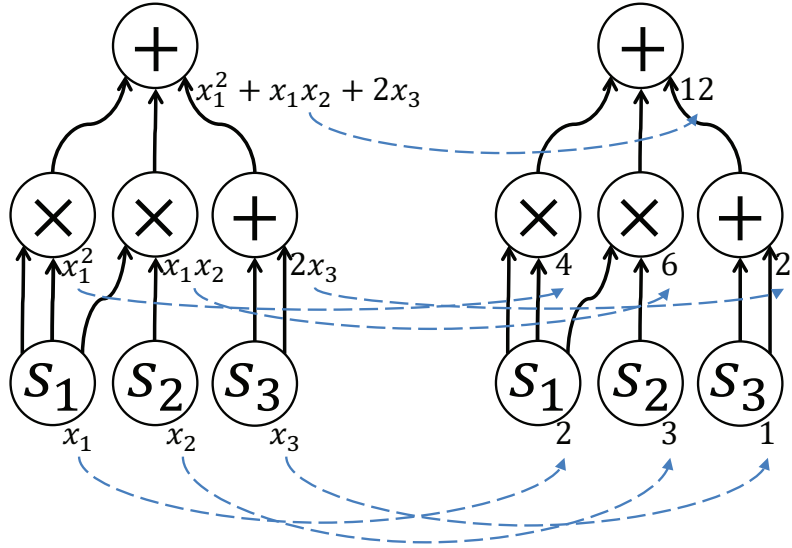}
\caption[An Example of Substitution Principle of Free Forward Variable]{The right dag is an example computation graph with $S = \mathbb{R}, \xi\!\left(s_1\right) = 2, \xi\!\left(s_2\right) = 3, \xi\!\left(s_3\right) = 1$, and the left one is the corresponding graph with the free forward variable ($\chi\!\left(s_i\right) = x_i$). The values of the forward variables on the arcs are omitted for the sake of brevity. Dashed arrows between the values of the forward variables of both the graphs represent the ``substitution of $x_i$ with $\xi\!\left(s_i\right)$.''}
\label{fig:substitution_principle_of_free_forward_variable}
\end{figure}

Figure \ref{fig:substitution_principle_of_free_forward_variable} shows the graphical illustration of the substitution principle of the free forward variable for an example computation graph.

Introducing free forward variables allows us to analyze how the computation changes depending on various structures equipped in $S$ and $\xi$ without getting into details of the computation structure represented by $\left(G, \op\right)$. In what follows, when a computation graph $\left(G, \op, S, \xi\right)$ is considered, details of $\left(G, \op\right)$ are specified only by the free forward variable, and a pure focus is placed on how each structure equipped in $S$ and $\xi$ affects the computation. The free forward variable provides information about the computation structure specified by $\left(G, \op\right)$ to the extent that is necessary and sufficient for the development of this and the subsequent sections.

Moreover, free forward variable combined with ``substitution'' (i.e., ``replacing $x_i$ by $\xi\!\left(s_i\right)$ for every $i$'') can model any computation insofar as the computation consists of a finite number of applications of additions and/or multiplications that obey the axioms of a commutative semiring. Therefore, this computation model subsumes such diverse computations as the ordinary forward algorithm on trellises for sequence labeling, the inside algorithm on derivation forests or hypergraphs for CYK parsing, a unidirectional message-passing on acyclic factor graphs.

In what follows, for various data structures, we illustrate the corresponding computation graphs.

\begin{MyExample}[Computation Graph for Sequence Labeling]
\label{ex:cg_for_sequence_labeling}
The upper diagram in Fig. \ref{fig:trellis_and_computation_graph} shows an example trellis for sequence labeling with $3$ states numbered $0$, $1$, and $2$, and the lower directed graph $\left(G, \op\right)$ in Fig. \ref{fig:trellis_and_computation_graph} shows the computation graph (without specifying the domain $S$ and the values on source nodes $\xi\!\left(s_i\right)$) corresponding to the ordinary forward algorithm on the upper trellis. In the computation graph, the source node $s_0$ conceptually represents the residence in the state $0$ at time $t = 0$. The same applies to $s_1$ and $s_2$. The source node $s_3$ conceptually represents the transition from the state $0$ at time $t = 0$ to the same state at time $t = 1$. The same applies to $s_i$ for every $i \in \left\{4, \dots, 11\right\}$. The source node $s_{12}$ conceptually represents the residence in the state $0$ at time $t = 1$. The same applies to $s_{13}$ and $s_{14}$. The same goes for all following source nodes.

\begin{figure}[t]
\centering
\includegraphics[width=0.8\columnwidth,draft=false]{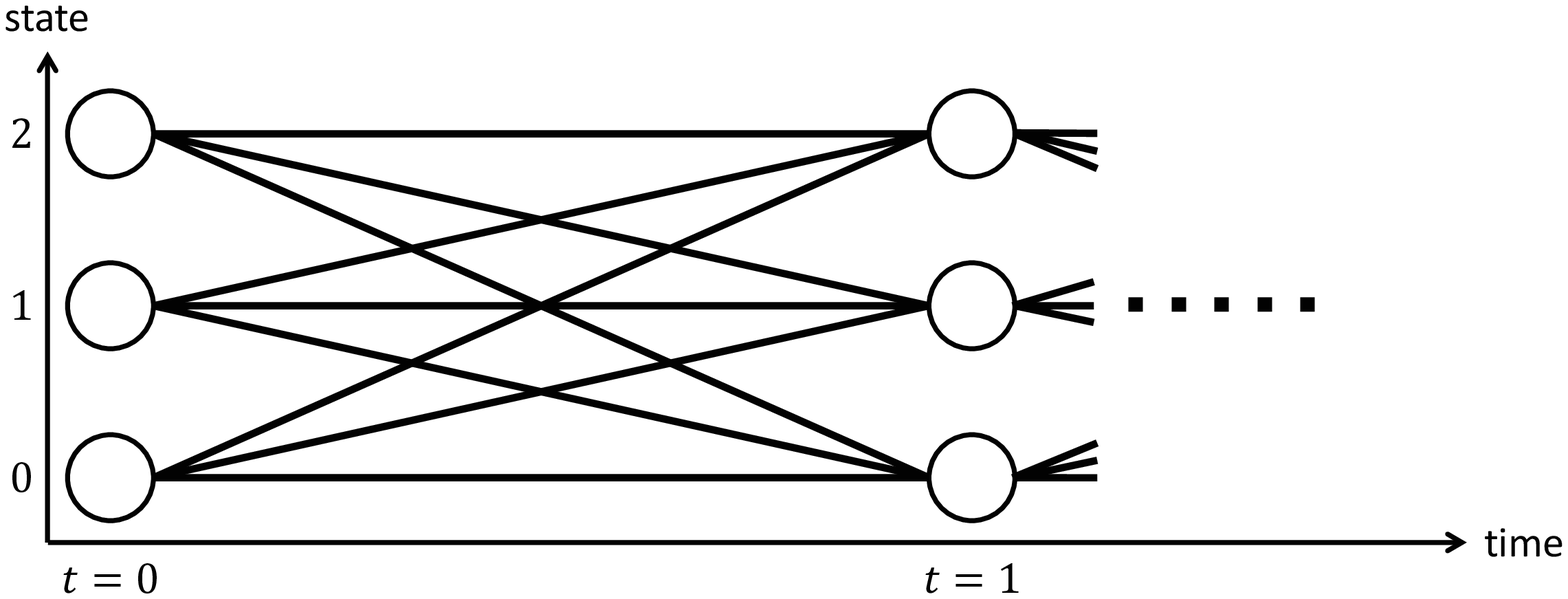}
\includegraphics[width=0.8\columnwidth,draft=false]{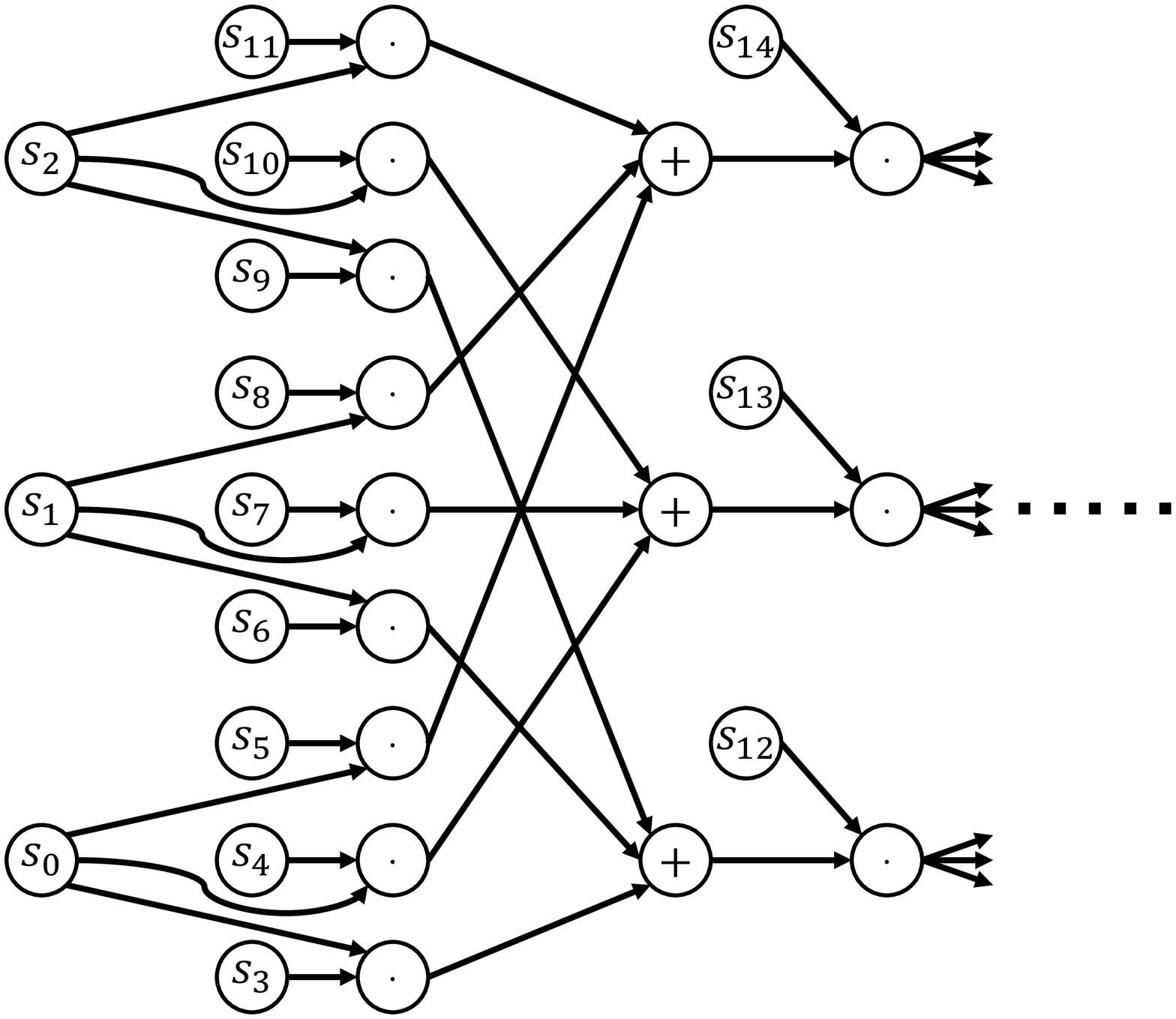}
\caption[An Example Trellis and Corresponding Computation Graph]{The upper diagram is an example trellis for sequence labeling (e.g., HMMs or CRFs) with $3$ states numbered $0$, $1$, and $2$. The lower directed graph is the computation graph corresponding to the ordinary forward algorithm on the upper trellis.}
\label{fig:trellis_and_computation_graph}
\end{figure}
Consider the free forward variable of the lower computation graph in Fig. \ref{fig:trellis_and_computation_graph} with respect to $\chi \colon s_i \mapsto x_i$. In the sum of the values of the free forward variable over all sink nodes, which obviously takes on the form \eqref{eq:form_of_free_forward_variable}, each term corresponds to a sequence (in other words, a joint assignment of states throughout the full period of time) in the upper trellis in Fig. \ref{fig:trellis_and_computation_graph}. Moreover, each factor in a term corresponds to the residence in a state at a moment in time or a transition in the sequence that the term corresponds to.

For example, each of all terms involving the factor $x_0$ conceptually represents a sequence passing through the state $0$ at time $t = 0$ in the upper trellis in Fig. \ref{fig:trellis_and_computation_graph}. For another example, each of all terms involving the factor $x_3$ conceptually represents a sequence containing the arc (transition) between the state $0$ at time $t = 0$ and the same state at time $t = 1$ in the upper trellis in Fig \ref{fig:trellis_and_computation_graph}.

Further consider an example of the concrete meaning of a ``substitution.'' Let\linebreak $\xi\colon \src(G) \to \bbR$ be defined by
\begin{equation*}
\xi\!\left(s_i\right) = \begin{cases}
  P\!\left(T = 0, S = i\right) P\!\left(O = o_0 \relmiddle| S = i\right)
  \hspace{97pt}\text{if $i \in \left\{0, 1, 2\right\}$,} \\
  \begin{aligned}
    &\!P\!\left(T = t + 1, S = k \relmiddle| T = t, S = j\right) \\
    &\hspace{126pt}\text{if $12t + 3 \leq i \leq 12t + 11$ for some $t \in \bbNzero$,} \\
    &\hspace{126pt}\text{\phantom{if }$j = \left(\left(i - 3\right) \modulo 12 - \left(i \modulo 3\right)\right) \!/ 3$, and} \\
    &\hspace{126pt}\text{\phantom{if }$k = \left(i - 3\right) \modulo 12$,}
  \end{aligned}\\
  P\!\left(O = o_t \relmiddle| S = \left(i \modulo 12\right)\right)
  \hspace{35pt}\text{if $12t \leq i \leq 12t + 2$ for some $t \in \bbN$},
\end{cases}
\end{equation*}
where $T$ is a discrete variable ranging over the full period of time, $S$ is a variable ranging over all the state numbers, $O$ is a variable ranging over a given domain of discrete observations, $o_i$ is the observation at time $t = i$, and $\modulo$ is the modulo operator. Then, it is obvious what the substitution of the free forward variable by $\xi$ means. That is, the values of the forward variable of the computation graph $\left(G, \op, \bbR, \xi\right)$ completely correspond to the ordinary forward variables of the HMMs on the upper trellis in Fig. \ref{fig:trellis_and_computation_graph} for a given observation sequence $\left(o_0, o_1, \dots\right)$.
\end{MyExample}

\begin{MyExample}[Computation Graph for Acyclic Factor Graph]
The upper diagram in Fig. \ref{fig:factor_graph_and_computation_graph} illustrates an example acyclic factor graph borrowed from \citet{kschischang2001factor}. The directed arrows in the upper factor graph indicate an example \termref{generalized forward/backward (GFB) schedule}. The lower directed graph is the computation graph corresponding to the unidirectional message passing on the upper factor graph with respect to the GFB schedule, provided that all variables of the factor graph take on the binary values $0$ or $1$.

Table \ref{table:factor_graph_cscg} shows what each source node of the computation graph in Fig. \ref{fig:factor_graph_and_computation_graph} conceptually represents. Each source node represents either an assignment of a value to a variable or an evaluation of a factor with specific arguments.

Consider the free forward variable of the lower computation graph in Fig. \ref{fig:factor_graph_and_computation_graph} with respect to $\chi\colon s_i \mapsto x_i$. In the value of the free forward variable on the last $+$ node, which obviously takes on the form \eqref{eq:form_of_free_forward_variable}, each term corresponds to a \termref{configuration} of the upper factor graph. Moreover, each factor in a term corresponds to either an assignment of a value to a variable or an evaluation of a factor with specific arguments that is consistent with the configuration the term  corresponds to.

Further consider an example of the concrete meaning of a ``substitution.'' Let the codomain of all factors be the set of real numbers, $S = \bbR$, and $\xi$ be defined such that $\xi(s_i) = 1$ if and only if $s_i$ represents an assignment of a value to a variable, and if $s_i$ represents an evaluation of a factor with specific arguments then $\xi(s_i)$ is equal to the value of the evaluation. Then, the value of the forward variable of $\left(G, \op, \bbR, \xi\right)$ on the last $+$ node is equal to the sum of values over all the possible configurations in the upper factor graph.
\begin{figure}[p]
\centering
\includegraphics[width=0.6\columnwidth,draft=false]{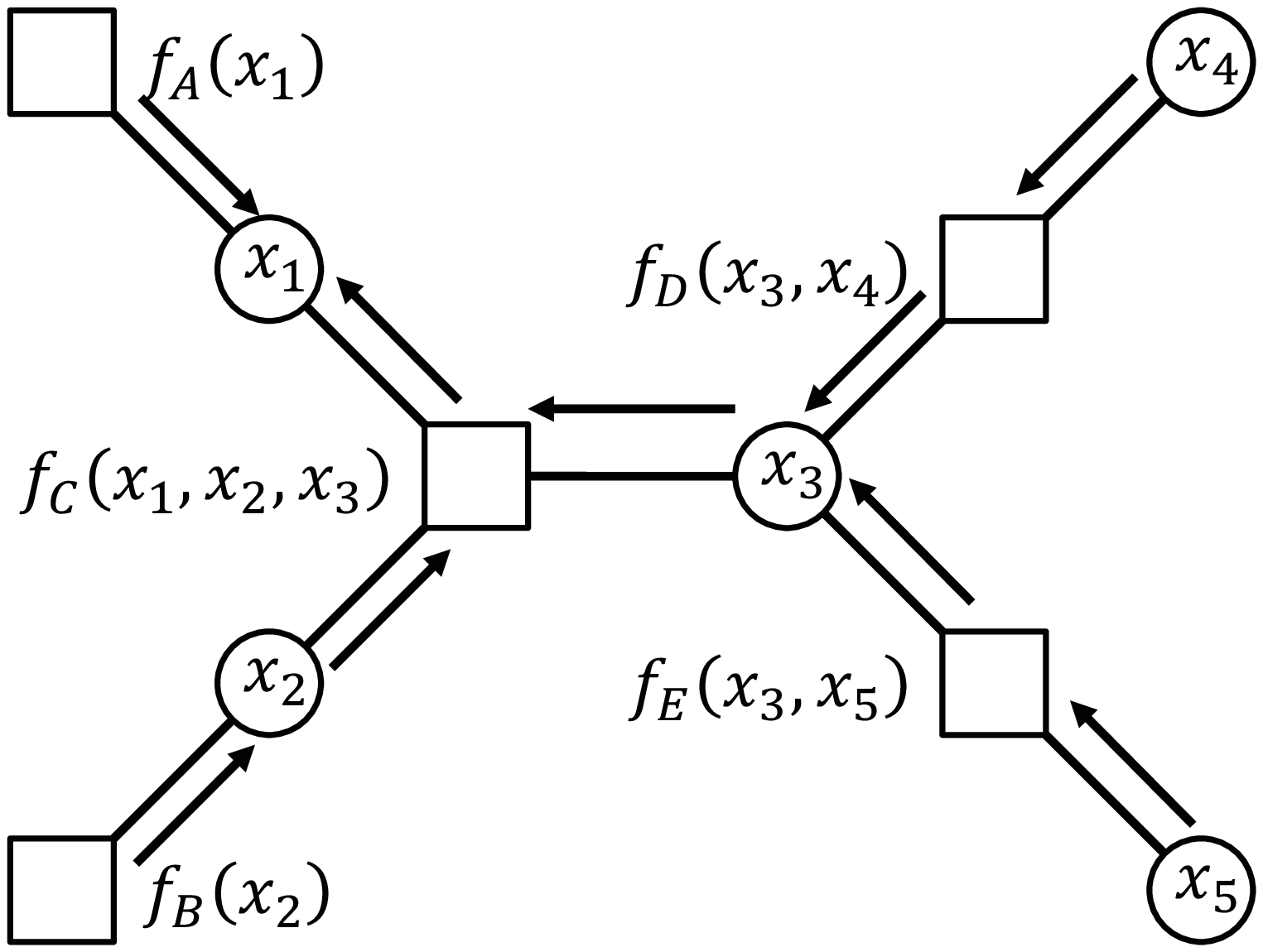}
\includegraphics[width=0.9\columnwidth,draft=false]{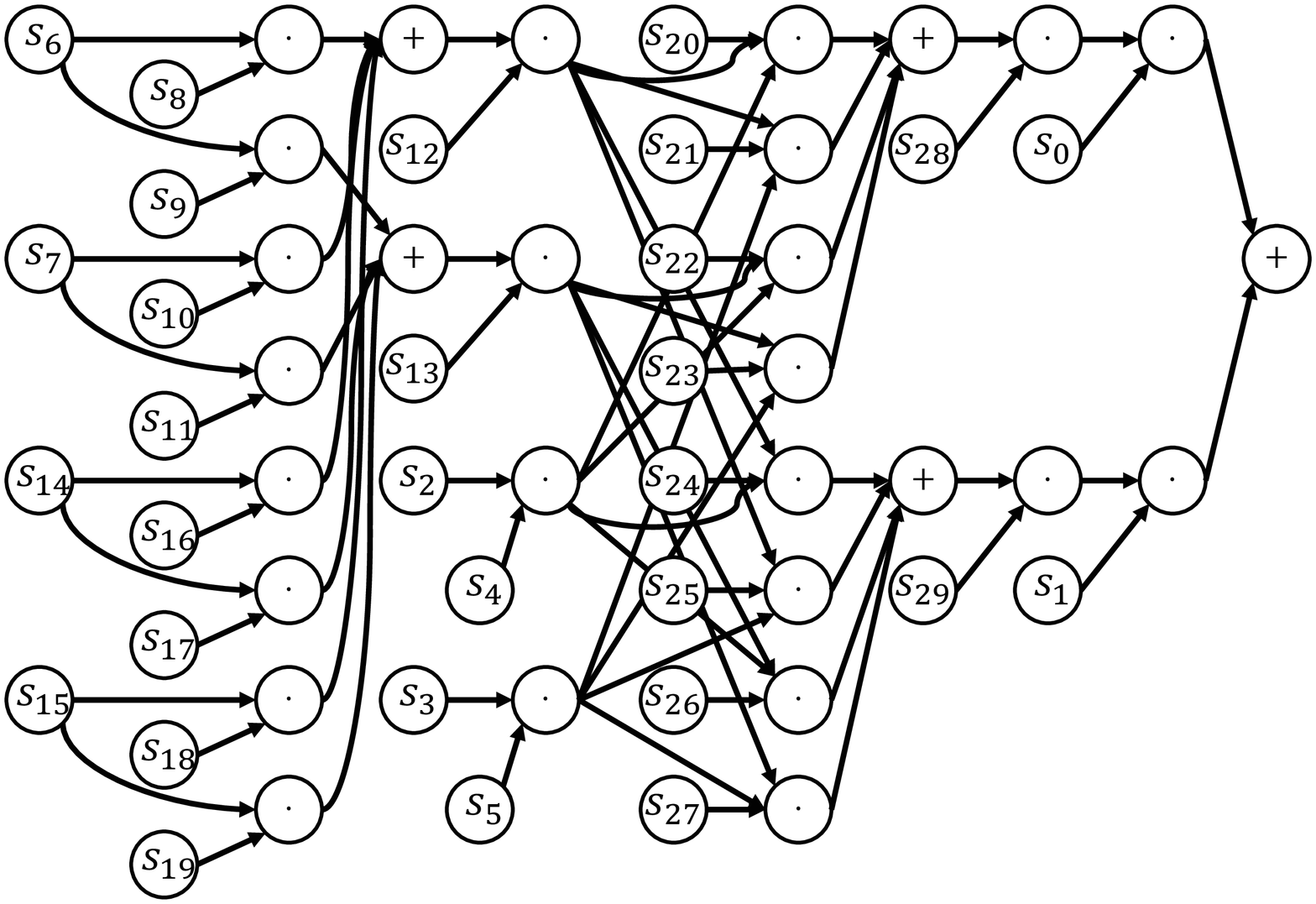}
\caption[An Example Factor Graph and Corresponding Computation Graph]{The upper diagram is an example acyclic factor graph. The directed arrows in the upper diagram indicate an example generalized forward/backward (GFB) schedule. The lower directed graph is the computation graph corresponding to the unidirectional message passing on the upper factor graph with respect to the GFB.}
\label{fig:factor_graph_and_computation_graph}
\end{figure}%
\label{ex:cg_for_acyclic_factor_graph}
\end{MyExample}
\begin{table}[t]
\centering
\begin{tabular}{|l|l||l|l|}
\hline
$s_0$    & $f_A(x_1 = 0)$          & $s_{16}$ & $f_E(x_3 = 0, x_5 = 0)$ \\
\hline
$s_1$    & $f_A(x_1 = 1)$          & $s_{17}$ & $f_E(x_3 = 1, x_5 = 0)$ \\
\hline
$s_2$    & $f_B(x_2 = 0)$          & $s_{18}$ & $f_E(x_3 = 0, x_5 = 1)$ \\
\hline
$s_3$    & $f_B(x_2 = 1)$          & $s_{19}$ & $f_E(x_3 = 1, x_5 = 1)$ \\
\hline
$s_4$    & $x_2 = 0$               & $s_{20}$ & $f_C(x_1 = 0, x_2 = 0, x_3 = 0)$ \\
\hline
$s_5$    & $x_2 = 1$               & $s_{21}$ & $f_C(x_1 = 0, x_2 = 0, x_3 = 1)$ \\
\hline
$s_6$    & $x_4 = 0$               & $s_{22}$ & $f_C(x_1 = 0, x_2 = 1, x_3 = 0)$ \\
\hline
$s_7$    & $x_4 = 1$               & $s_{23}$ & $f_C(x_1 = 0, x_2 = 1, x_3 = 1)$ \\
\hline
$s_8$    & $f_D(x_3 = 0, x_4 = 0)$ & $s_{24}$ & $f_C(x_1 = 1, x_2 = 0, x_3 = 0)$ \\
\hline
$s_9$    & $f_D(x_3 = 1, x_4 = 0)$ & $s_{25}$ & $f_C(x_1 = 1, x_2 = 0, x_3 = 1)$ \\
\hline
$s_{10}$ & $f_D(x_3 = 0, x_4 = 1)$ & $s_{26}$ & $f_C(x_1 = 1, x_2 = 1, x_3 = 0)$ \\
\hline
$s_{11}$ & $f_D(x_3 = 1, x_4 = 1)$ & $s_{27}$ & $f_C(x_1 = 1, x_2 = 1, x_3 = 1)$ \\
\hline
$s_{12}$ & $x_3 = 0$               & $s_{28}$ & $x_1 = 0$                        \\
\hline
$s_{13}$ & $x_3 = 1$               & $s_{29}$ & $x_1 = 1$ \\
\hline
$s_{14}$ & $x_5 = 0$               &          & \\
\hline
$s_{15}$ & $x_5 = 1$               &          & \\
\hline
\end{tabular}
\caption{What each source node of the computation graph in Fig. \ref{fig:factor_graph_and_computation_graph} conceptually represents.}
\label{table:factor_graph_cscg}
\end{table}%

\begin{MyExample}[Computation Graph for Zero-suppressed Binary Decision Diagram]
\label{ex:cg_for_zdd}
The left diagram in Fig. \ref{fig:zdd_and_computation_graph} illustrates an example zero-suppressed binary decision diagram~\citep[ZDD,][]{minato1993zero,knuth2009art}. The right directed graph is the computation graph representing the Boolean function expressed by the left ZDD.

The computation graph corresponding to a ZDD such as the one in Fig. \ref{fig:zdd_and_computation_graph} can be systematically constructed. It can be constructed as we traverse the ZDD from the top node to the bottom ($0$ and $1$) nodes.

In the computation graph, each source node conceptually represents an argument of the Boolean function expressed by the ZDD. $s_0$ in the computation graph in Fig \ref{fig:zdd_and_computation_graph} conceptually represents $A$ in the ZDD, $s_1$ for $B$, and $s_2$ for $C$.

Consider that the free forward variable of the right computation graph in Fig. \ref{fig:zdd_and_computation_graph} with respect to $\chi\colon s_i \mapsto x_i$. The value of the free forward variable on the sink node is equal to $x_0x_1 + x_0x_2 + x_2$. This is the polynomial representation of the Boolean function. Each term in the polynomial represents a specific joint assignment of the Boolean values to the arguments of the Boolean function that yields $1$. For example, the term $x_0 x_1$ represents the joint assignment $A = 1$, $B = 1$, and $C = 0$.

Further consider that an example of the concrete meaning of a ``substitution.'' Let $\xi$ be a function mapping each source node of the computation graph to a non-negative real number. Then, the value of the forward variable $\left(G, \op, \bbR, \xi\right)$ on the sink node is equal to the summation of the weights of all joint assignments of the Boolean values to the arguments of the Boolean function that yield $1$, each of whose weight, in turn, is expressed by the production of the weights $\xi$ over all arguments that are substituted by $1$. This value is fundamental in the combination of logic-based formalisms with statistical inference~\citep[e.g.,][]{poole1993probabilistic,sato2001parameter,de2007problog}.

Although we only show a computation graph for a ZDD as an example, we can construct computation graphs representing the Boolean functions expressed by a variety of decision diagrams, including (reduced and ordered) BDDs~\citep{akers1978binary,bryant1992symbolic}, case-factor diagrams~\citep{mcallester2004case}, and/or multi-valued decision diagrams~\citep{mateescu2008and}, and so on.
\end{MyExample}

\begin{figure}[t]
\centering
\includegraphics[width=0.5\columnwidth,draft=false]{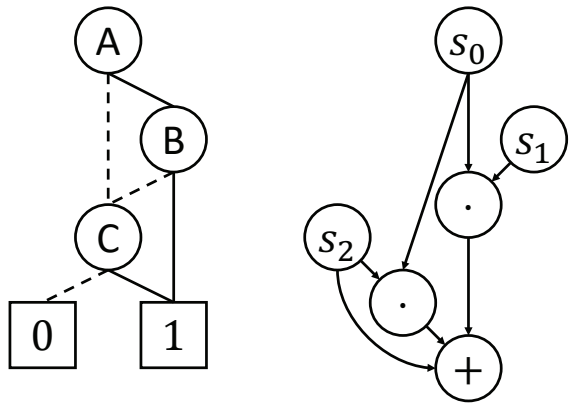}
\caption[An Example ZDD and Corresponding Computation Graph]{The left diagram is an example ZDD. The right directed graph is the computation graph representing the Boolean function expressed by the left ZDD.}
\label{fig:zdd_and_computation_graph}
\end{figure}%
\subsection{Commutative Semiring Computation Graph Parametrized by Monoid Homomorphism}
\label{subsec:commutative_semiring_computation_graph_parametrized_by_monoid_homomorphism}

Our goal in this subsection is to elucidate another aspect of modularity offered by an algebraic abstraction that is not described in past research. If an additional condition is imposed on $\xi$ of a computation graph $\left(G, \op, S, \xi\right)$, values of the forward variable of the computation graph can be shown to be equal to a form that is ``globally'' changed from the original form \eqref{eq:plain_alpha}.

Let us introduce a motivating example. Consider the dynamic programming computation of the normalization constant for linear-chain CRFs. The normalization constant is of the form
\begin{equation}
\sum_{\vecy} \exp\!\left(
  \sum_{i} \veclambda \cdot \vecF\!\left(\vecy, \vecx, i\right)\right) \; ,
\label{eq:crf_z}
\end{equation}
where $i$ ranges over positions in the sequence. The efficiency of dynamic programming is based on a step-by-step factorization of the target formula by using the distributive law of semiring. At first glance, the distributive law does not appear to be applicable to formula \eqref{eq:crf_z} since the function $\exp$ is applied to each summand. However, one can transform formula \eqref{eq:crf_z} into an equivalent one to suit the distributive law, i.e.,
\begin{equation*}
\sum_{\vecy} \exp\!\left(\veclambda \cdot \vecF\!\left(\vecy, \vecx, 1\right)\right) \cdot
             \exp\!\left(\veclambda \cdot \vecF\!\left(\vecy, \vecx, 2\right)\right) \cdots
\end{equation*}
by using the addition law of the exponential function.

The transformation of a formula like \eqref{eq:crf_z} into an equivalent one compatible with dynamic programming is very important for many machine learning tasks. In structured prediction, in particular, the number of summands may grow as an exponential function of the problem size. Therefore, naive computation of a formula like \eqref{eq:crf_z} is intractable in those cases.

This subsection presents an algebraic abstraction of the transformation mentioned above. That is, we present an algebraic condition that allows transformation of some formulas that are seemingly unsuitable for application of the distributive law and efficient computation by dynamic programming into equivalent ones that suit them. This algebraic abstraction covers some interesting and important problem instances that were not formalized in an algebraic way in past research.

\begin{MyDefinition}[Sextuple Specifying Commutative Semiring Computation Graph Parametrized by Monoid Homomorphism]
A sextuple $\left(G, \op, M, \phi, S, f\right)$ is said to \termref{specify the $f$-parametrized commutative semiring computation graph $\left(G, \op, S, f \circ \phi\right)$} if and only if:
\begin{itemize}
  \item $G = \left(V, E\right)$ is a finite dag,
  \item $\op$ is a mapping from $V \setminus \src\!\left(G\right)$ to $\left\{\text{``$+$''}, \text{``$\cdot$''}\right\}$,
  \item $M$ is a commutative monoid,
  \item $\phi$ is a mapping from $\src\!\left(G\right)$ to $M$,
  \item $S = \left(S, +, \cdot, 0_S, 1_S\right)$ is a commutative semiring, and
  \item $f$ is a monoid homomorphism from $M$ to the multiplicative monoid $\left(S, \cdot, 1_S\right)$ of $S$.
\end{itemize}
\end{MyDefinition}

\begin{MyTheorem}
Let a sextuple $(G, \op, M, \phi, S, f)$ specify the $f$-parametrized computation\linebreak graph $\calG = (G, \op, S, f \circ \phi)$, and $\src(G) = \{s_1, \dots, s_n\}$. Further assume that one defines $\chi\colon \src(G) \to \bbNzero[x_1, \dots, x_n]$ mapping $s_i$ to $x_i$ for every $s_i \in \src(G)$, and obtains the following form of the free forward variable of $(G, \op)$ w.r.t. $\chi$
\begin{equation*}
\alpha_{\left(G, \op, \chi\right)}\!\left(t\right) =
\sum_{\veci \in {\mathbb{N}_0}^n} c_{t, \veci} {x_1}^{i_1} \cdots {x_n}^{i_n}
\end{equation*}
for every node and arc $t \in V \cup E$. Then,
\begin{equation}
\alpha_{\mathcal{G}}\!\left(t\right) =
  \sum_{\veci \in {\mathbb{N}_0}^n}
    c_{t, \veci} f\!\left(\left(\phi\!\left(s_1\right)\right)^{i_1} \cdots
                          \left(\phi\!\left(s_n\right)\right)^{i_n}\right) \; .
\label{eq:parametrized_alpha}
\end{equation}
\label{thm:parametrized_alpha}
\end{MyTheorem}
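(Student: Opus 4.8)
The plan is to reduce the statement to the already-established Substitution Principle of Free Forward Variable (Lemma \ref{lemma:substitution_principle_of_free_forward_variable}) together with the multiplicativity of the monoid homomorphism $f$. First I would invoke Lemma \ref{lemma:substitution_principle_of_free_forward_variable} for the computation graph $\mathcal{G} = (G, \op, S, f \circ \phi)$; since here $\xi = f \circ \phi$, it yields, for every node and arc $t \in V \cup E$,
\begin{equation*}
\alpha_{\mathcal{G}}\!\left(t\right) = \sum_{\veci \in \bbNzero^n} c_{t, \veci} \left(f\!\left(\phi\!\left(s_1\right)\right)\right)^{i_1} \cdots \left(f\!\left(\phi\!\left(s_n\right)\right)\right)^{i_n} \; ,
\end{equation*}
with the \emph{same} coefficients $c_{t,\veci}$ as in the free forward variable. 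So it remains only to check that this sum coincides, summand by summand, with the right-hand side of \eqref{eq:parametrized_alpha}.

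Next I would record the elementary fact that a monoid homomorphism commutes with $n$-th powers: for $a \in M$ and $k \in \bbNzero$ one has $f\!\left(a^k\right) = \left(f\!\left(a\right)\right)^k$ in the multiplicative monoid of $S$. This follows by induction on $k$ — the base case $k = 0$ is exactly $f\!\left(1_M\right) = 1_S$, and the inductive step is $f\!\left(a^{k+1}\right) = f\!\left(a^k \cdot a\right) = f\!\left(a^k\right) \cdot f\!\left(a\right) = \left(f\!\left(a\right)\right)^k \cdot f\!\left(a\right) = \left(f\!\left(a\right)\right)^{k+1}$, using that $f$ preserves the operation. Combining this with the preservation property applied across a finite product gives, for each $\veci = \left(i_1, \dots, i_n\right)$,
\begin{equation*}
f\!\left(\left(\phi\!\left(s_1\right)\right)^{i_1} \cdots \left(\phi\!\left(s_n\right)\right)^{i_n}\right) = f\!\left(\left(\phi\!\left(s_1\right)\right)^{i_1}\right) \cdots f\!\left(\left(\phi\!\left(s_n\right)\right)^{i_n}\right) = \left(f\!\left(\phi\!\left(s_1\right)\right)\right)^{i_1} \cdots \left(f\!\left(\phi\!\left(s_n\right)\right)\right)^{i_n} \; .
\end{equation*}

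Finally I would substitute this identity into the sum obtained from Lemma \ref{lemma:substitution_principle_of_free_forward_variable}. The element of $S$ multiplied by the non-negative-integer coefficient $c_{t,\veci}$ is then literally the same on both sides, and since $c_{t,\veci} a$ merely abbreviates the $c_{t,\veci}$-fold repeated addition of $a$ in the additive monoid of $S$, the two summations agree term by term and hence are equal; because only finitely many $c_{t,\veci}$ are nonzero, the formally infinite sums are well-defined throughout. This establishes \eqref{eq:parametrized_alpha}.

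As for the main obstacle: there is no deep difficulty, but the point that must not be glossed over is that $f$ is assumed to be a homomorphism only of the \emph{multiplicative} monoid of $S$, not of the whole semiring — in particular it need not interact with $+$ or with the coefficients $c_{t,\veci}$ at all. The argument works precisely because $f$ is only ever pushed across products and powers of the $\phi\!\left(s_i\right)$, never across the outer sum, and the coefficients, being identical on the two sides, require no transport. (One could instead bypass Lemma \ref{lemma:substitution_principle_of_free_forward_variable} and argue directly by induction on the finite dag $G$, mirroring its proof, but the reduction above is shorter and reuses machinery already in place.)
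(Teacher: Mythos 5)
Your proposal is correct and follows exactly the route the paper takes: its entire proof is the one-line remark that the statement ``follows from Lemma \ref{lemma:substitution_principle_of_free_forward_variable} and the homomorphism of $f$,'' and you have simply filled in the details (substituting $\xi = f \circ \phi$ into the substitution principle, then pushing the multiplicative monoid homomorphism $f$ across the finite products and powers, term by term). Your remark that $f$ need never be moved across the outer sum or the coefficients $c_{t,\veci}$ is a correct and worthwhile clarification of why multiplicativity alone suffices.
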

\begin{proof}
The statement follows from Lemma \ref{lemma:substitution_principle_of_free_forward_variable} and the homomorphism of $f$.
\end{proof}

An important implication of Theorem \ref{thm:parametrized_alpha} is as follows. The number of summands of the summation in the right-hand side of \eqref{eq:parametrized_alpha} might be exponentially large compared to the size of $G$, and the function $f$ appears to interfere with a factorization of each summand. Therefore, at first glance, the summand of the summation of the right-hand side of \eqref{eq:parametrized_alpha} cannot be factorized nor computed efficiently by using the distributive law of the semiring. Actually, Eq. \eqref{eq:parametrized_alpha} means that it can be computed efficiently with time and space complexity proportional to the size of $G$ since $\alpha_{\mathcal{G}}$ in the left-hand side is defined and computed by recursion on $G$.

In what follows, we give some important or interesting examples of parametrized computation graphs.

\begin{MyExample}[Commutative Semiring Computation Graph Parametrized by Identity Function]
Let $S = (S, +, \cdot, 0_S, 1_S)$ be a commutative semiring, and the sextuple $(G, \op, (S, \cdot, 1_S), \phi, S, \id_S)$ specify the $\id_{S}$-parametrized computation graph where $\id_{S}$ is the identity function on $S$. Note that $\id_{S}$ is the trivial automorphism on the multiplicative monoid of $S$. The $\id_{S}$-parametrized computation graph is equivalent to the computation graph $\left(G, \op, S, \phi\right)$.
\label{ex:id_pcscg}
\end{MyExample}

Example \ref{ex:id_pcscg} shows that any computation graph can be interpreted as a computation graph parametrized by a suitable monoid homomorphism, and a formalization in terms of parametrized computation graphs is a pure extension of one in terms of plain computation graphs.

\begin{MyExample}[$\exp$-parametrized Commutative Semiring Computation Graph]
Let $\bbR = (\bbR, +, \cdot, 0, 1)$ be the ordinary semiring of real numbers, and $(G, \op, (\bbR, +, 0), \phi, \bbR, \exp)$ specify the $\exp$-parametrized computation graph $\mathcal{G} = \left(G, \op, \mathbb{R}, \exp \circ\> \phi\right)$. Note that the exponential function $\exp$ is a monoid homomorphism from $\left(\mathbb{R}, +, 0\right)$ to the multiplicative monoid $\left(\mathbb{R}, \cdot, 1\right)$ of $\mathbb{R}$, that is, $\exp\!\left(a + b\right) = \exp\!\left(a\right) \cdot \exp\!\left(b\right)$ for every $a, b \in \mathbb{R}$, which is nothing other than the addition law of the exponential function. Let $\src\!\left(G\right) = \left\{s_1, \dots, s_n\right\}$. Further assume that one defines $\chi \colon \src\!\left(G\right) \to \bbNzero\!\left[x_1, \dots, x_n\right]$ mapping $s_i$ to $x_i$ for every $s_i \in \src\!\left(G\right)$, and obtains the following form of the free forward variable of $\left(G, \op\right)$ w.r.t. $\chi$
\begin{equation*}
\alpha_{\left(G, \op, \chi\right)}\!\left(t\right) =
\sum_{\veci \in {\mathbb{N}_0}^n} c_{t, \veci} {x_1}^{i_1} \cdots {x_n}^{i_n}
\end{equation*}
for every node and arc $t \in V \cup E$. Then,
\begin{equation*}
\alpha_{\mathcal{G}}\!\left(t\right) = \sum_{\veci \in {\mathbb{N}_0}^n}
  c_{t, \veci} \exp\!\left(i_1 \phi\!\left(s_1\right) + \dots + i_n \phi\!\left(s_n\right)\right) \; .
\end{equation*}
\label{ex:exp_pcscg}
\end{MyExample}

Example \ref{ex:exp_pcscg} corresponds to the example cited in the beginning of this subsection.

\begin{MyExample}[$\left(\cos, \sin\right)$-parametrized Commutative Semiring Computation Graph]
Let $\mathcal{R}^2 = (\bbR^2, +, \cdot, (0, 0), (1, 0))$ be the commutative semiring equipped with the addition and multiplication defined by
\begin{equation*}
\begin{aligned}
\left(a, b\right) + \left(c, d\right)     &= \left(a + c, b + d\right) \text{ and} \\
\left(a, b\right) \cdot \left(c, d\right) &= \left(ac - bd, ad + bc\right) \; ,
\end{aligned}
\end{equation*}
respectively. We can easily confirm that $\mathcal{R}^2$ is a commutative semiring by the isomorphism between this semiring and the ordinary semiring of complex numbers. Further let $(\cos, \sin)\colon \bbR \to \bbR^2$ be a function that maps each $\theta \in \bbR$ to $(\cos(\theta), \sin(\theta))$, and let $(G, \op, (\bbR, +, 0), \phi, \mathcal{R}^2, (\cos, \sin))$ specify the $(\cos, \sin)$-parametrized computation graph $\calG = (G, \op, \mathcal{R}^2, (\cos, \sin) \circ \phi)$. Note that $(\cos, \sin)$ is a monoid homomorphism from $(\bbR, +, 0)$ to the multiplicative monoid of $\mathcal{R}^2$, that is, $(\cos, \sin)(a + b) = ((\cos, \sin)(a)) \cdot ((\cos, \sin)(b))$ for every $a, b \in \bbR$, which follows from the addition law of the trigonometric functions $\cos$ and $\sin$. Let $\src(G) = \{s_1, \dots, s_n\}$. Now further assume that one defines $\chi\colon \src(G) \to \bbNzero[x_1, \dots, x_n]$ mapping $s_i$ to $x_i$ for every $s_i \in \src(G)$, and obtains the following form of the free forward variable of $(G, \op)$ w.r.t. $\chi$
\begin{equation*}
\alpha_{\left(G, \op, \chi\right)}\!\left(t\right) =
\sum_{\veci \in {\mathbb{N}_0}^n} c_{t, \veci} {x_1}^{i_1} \cdots {x_n}^{i_n}
\end{equation*}
for every node and arc $t \in V \cup E$. Then,
\begin{equation*}
\alpha_{\mathcal{G}}\!\left(t\right) = \sum_{\veci \in {\mathbb{N}_0}^n}
c_{t, \veci} \left(
\left(\cos, \sin\right)\!\left(
i_1 \phi\!\left(s_1\right) + \dots + i_n \phi\!\left(s_n\right)
\right)
\right) \; .
\end{equation*}
\label{ex:cos_sin_pcscg}
\end{MyExample}

\begin{MyDefinition}[Binomial Convolution Semiring]
Let $S = \left(S, +, \cdot, 0_S, 1_S\right)$ be a commutative semiring, and $n$ a non-negative integer. For every $\left(a_i\right)_{i = 0, \dots, n},\, \left(b_i\right)_{i = 0, \dots, n} \in S^{n + 1}$, the binary operations $+$ and $\diamond$ on $S^{n + 1}$ are defined by
\begin{equation}
\left(\left(a_i\right)_{i = 0, \dots, n}\right) + \left(\left(b_i\right)_{i = 0, \dots, n}\right) =
\left(a_i + b_i\right)_{i = 0, \dots, n}
\label{eq:bc_add}
\end{equation}
and, using binomial coefficients $\binom{n}{k} = \frac{n!}{k!(n - k)!} \in \mathbb{N}_0$,
\begin{equation}
\left(\left(a_i\right)_{i = 0, \dots, n}\right) \diamond
\left(\left(b_i\right)_{i = 0, \dots, n}\right) =
\left(
  \sum_{j \in \left\{k \in \mathbb{N}_0 \relmiddle| k \leq i\right\}}
    \binom{i}{j} \left(a_j \cdot b_{i - j}\right)
\right)_{\!\!\!\!i = 0, \dots, n} \; ,
\label{eq:bc_mult}
\end{equation}
respectively. Further let $0_{\BC_S^n} = \left(0_S, \dots, 0_S\right)$ and $1_{\BC_S^n} = \left(1_S, 0_S, \dots, 0_S\right)$. Then $\BC_{S}^{n} = \left(S^{n + 1}, +, \diamond, 0_{\BC_S^n}, 1_{\BC_S^n}\right)$ is called the \termref{$n$-th order binomial convolution semiring over $S$}.
\label{def:binomial_convolution_semiring}
\end{MyDefinition}

\begin{MyLemma}
Let $S$ be a commutative semiring and $n$ a non-negative integer. Then the $n$-th order binomial convolution semiring $\BC_{S}^{n}$ over $S$ is a commutative semiring.
\label{lemma:bcs}
\end{MyLemma}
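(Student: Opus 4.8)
The plan is to verify the four axioms of a commutative semiring for $\BC_S^n = \left(S^{n+1}, +, \diamond, 0_{\BC_S^n}, 1_{\BC_S^n}\right)$ directly, one at a time, exploiting throughout that the binomial coefficients $\binom{i}{j}$ are elements of $\bbNzero$ and that $\bbNzero$-repetitions interact with the operations of $S$ in the expected way, namely $m(a + b) = ma + mb$, $(m + m')a = ma + m'a$, $(mm')a = m(m'a)$, and $m(a \cdot b) = (ma) \cdot b = a \cdot (mb)$ for $m, m' \in \bbNzero$ and $a, b \in S$; these follow from the commutative-semiring axioms of $S$ by routine induction on $m$ and are what let us freely pull binomial coefficients in and out of products.

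First, $\left(S^{n+1}, +, 0_{\BC_S^n}\right)$ is a commutative monoid: the operation \eqref{eq:bc_add} and the element $0_{\BC_S^n} = (0_S, \dots, 0_S)$ are both defined coordinatewise, so the monoid axioms are inherited coordinate by coordinate from the additive monoid of $S$. Next, $0_{\BC_S^n}$ is absorbing because every summand in each coordinate of $0_{\BC_S^n} \diamond a$ is $\binom{i}{j}(0_S \cdot a_{i-j}) = \binom{i}{j}\,0_S = 0_S$, using that $0_S$ is absorbing in $S$. Distributivity is equally direct: expanding $a \diamond (b + c)$ by \eqref{eq:bc_mult} and \eqref{eq:bc_add}, the $i$-th coordinate is $\sum_{j \le i} \binom{i}{j}\bigl(a_j \cdot (b_{i-j} + c_{i-j})\bigr)$, and distributivity in $S$ together with $m(x + y) = mx + my$ rewrites this as the $i$-th coordinate of $(a \diamond b) + (a \diamond c)$; the other one-sided distributive law follows the same way, or from the commutativity of $\diamond$ established next.

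For the multiplicative monoid $\left(S^{n+1}, \diamond, 1_{\BC_S^n}\right)$, commutativity comes from the substitution $j \mapsto i - j$ in the defining sum \eqref{eq:bc_mult}, using $\binom{i}{i-j} = \binom{i}{j}$ and commutativity of $\cdot$ in $S$. The identity law $1_{\BC_S^n} \diamond a = a$ holds because $1_{\BC_S^n} = (1_S, 0_S, \dots, 0_S)$ is supported on coordinate $0$: in the $i$-th coordinate of $1_{\BC_S^n} \diamond a$ every summand with $j \geq 1$ carries a factor $0_S$ and hence vanishes, leaving $\binom{i}{0}(1_S \cdot a_i) = a_i$.

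The only substantive point is associativity of $\diamond$. Expanding $\bigl((a \diamond b) \diamond c\bigr)_i$ and $\bigl(a \diamond (b \diamond c)\bigr)_i$ via \eqref{eq:bc_mult}, and using distributivity plus the $\bbNzero$-action compatibility in $S$ to bring all binomial coefficients to the front, turns both sides into sums over triples of nonnegative indices with total $i$; matching the two sums term by term reduces the claim to the purely combinatorial ``subset of a subset'' identity $\binom{i}{j}\binom{j}{k} = \binom{i}{k}\binom{i-k}{j-k}$ in $\bbNzero$, after reindexing the inner summation variable. I expect this bookkeeping — keeping the summation ranges consistent under the reindexing and invoking the binomial identity in exactly the right form — to be the main obstacle; everything else is mechanical. (As an alternative that sidesteps the triple-sum manipulation, one can check that $(a_i)_{i = 0, \dots, n}$ is sent injectively to the $(n+1)\times(n+1)$ matrix over $S$ whose $(j,k)$-entry is $\binom{k}{j} a_{k-j}$ for $j \leq k$ and $0_S$ otherwise, and that this map carries $+$, $\diamond$, $0_{\BC_S^n}$, and $1_{\BC_S^n}$ to the operations of the matrix semiring over $S$ — again by the same subset-of-a-subset identity — so that associativity and distributivity descend from the matrix semiring, leaving only commutativity and the exact shape of the image to be checked directly.)
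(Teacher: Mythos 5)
Your proof is correct, and it reaches the same combinatorial crux as the paper---associativity of $\diamond$ comes down to the symmetry of the trinomial coefficient, i.e., $\binom{i}{j}\binom{j}{k} = \binom{i}{k}\binom{i-k}{j-k}$---but by a genuinely different route. The paper does not verify the axioms on arbitrary elements: it first makes $S^{n+1}$ an $S$-semimodule with basis $\left\{\hat{e}_0, \dots, \hat{e}_n\right\}$, checks distributivity and the semialgebra-basis condition $(\sigma \hat{e}_i) \diamond (\tau \hat{e}_j) = (\sigma \cdot \tau)(\hat{e}_i \diamond \hat{e}_j)$, and then verifies commutativity, associativity, and unitality only on basis elements (where $\hat{e}_i \diamond \hat{e}_j = \binom{i+j}{i}\hat{e}_{i+j}$ and the structure constants are explicit), invoking a cited theorem of Hebisch and Weinert to extend these properties to all of $S^{n+1}$; absorption of $0_{\BC_S^n}$ then comes for free from the semialgebra axioms rather than being checked separately as you do. What the paper's route buys is that only basis elements are ever multiplied---the triple-sum bookkeeping you rightly flag as the main obstacle collapses to an identity between structure constants---and the semialgebra viewpoint is reused throughout Section 2.3 and the tensor-product constructions. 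What your route buys is self-containment: it uses nothing beyond the semiring axioms of $S$ and the stated $\bbNzero$-action compatibilities, all of which are legitimate, and your reindexing for associativity does check out (matching $a_k \cdot b_{j-k} \cdot c_{i-j}$ against $a_{j'} \cdot b_{k'} \cdot c_{i-j'-k'}$ forces $j' = k$, $k' = j - k$, and the coefficients agree by exactly the identity you name). Your alternative embedding into upper-triangular matrices over $S$ is also sound (injectivity is clear from the $(0,k)$-entries, and the homomorphism property is once more the same binomial identity) and does not appear in the paper; it is arguably the cleanest way to obtain associativity and distributivity, though, as you note, commutativity must still be checked by hand since the matrix semiring is not commutative.
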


Note that $\BC_{\mathbb{R}}^{1}$ is isomorphic to the ordinary semiring of dual numbers~\citep{yaglom1968complex}. It is also known as the (first-order) expectation semiring~\citep{eisner2001expectation}. $\BC_S^1$, where $S$ is a commutative semiring, plays a vital part in the formalization of forward-backward algorithms in Section \ref{sec:forward_backward_algorithms}.

\begin{MyExample}[Parametrized Commutative Semiring Computation Graph for Sequence of Powers]
Let $S = (S, +, \cdot, 0_S, 1_S)$ be a commutative semiring, $n$ a non-negative integer, $\mathcal{P}_S^n\colon S \to S^{n + 1}$ a function that maps each $a \in S$ to $(a^i)_{i = 0, \dots, n}$, and the sextuple $(G, \op, \left(S, +, 0_S\right), \phi, \BC_{S}^{n}, $ $\mathcal{P}_S^n)$ specify the $\mathcal{P}_{S}^{n}$-parametrized computation graph $\mathcal{G} = \left(G, \op, \BC_{S}^{n}, \mathcal{P}_{S}^{n} \circ \phi\right)$. Note that $\mathcal{P}_{S}^{n}$ is a monoid homomorphism from $\left(S, +, 0_S\right)$ to the multiplicative monoid $\left(S^{n + 1}, \diamond, 1_{\BC_S^n}\right)$ of $\BC_{S}^{n}$, that is, $\mathcal{P}_{S}^{n}\!\left(a + b\right) = \left(\mathcal{P}_{S}^{n}\!\left(a\right)\right) \diamond \left(\mathcal{P}_{S}^{n}\!\left(b\right)\right)$ for every $a, b \in S$, which is nothing other than the binomial theorem.%
\footnote{
  More precisely, this is the binomial theorem generalized to any commutative semiring. The proof of the generalization is omitted because it is trivial~\citep[cf.][Exercise I.2.12]{hebisch1998algebraic}.}
Let $\src\!\left(G\right) = \left\{s_1, \dots, s_m\right\}$. Further assume that one defines $\chi \colon \src(G) \to \bbNzero\!\left[x_1, \dots, x_m\right]$ mapping $s_i$ to $x_i$ for every $s_i \in \src\!\left(G\right)$, and obtains the following form of the free forward variable of $\left(G, \op\right)$ w.r.t. $\chi$
\begin{equation*}
\alpha_{\left(G, \op, \chi\right)}\!\left(t\right) =
\sum_{\veci \in {\mathbb{N}_0}^m} c_{t, \veci} {x_1}^{i_1} \cdots {x_m}^{i_m}
\end{equation*}
for every node and arc $t \in V \cup E$. Then,
\begin{equation}
\begin{split}
\alpha_{\mathcal{G}}\!\left(t\right)
&= \sum_{\veci \in {\mathbb{N}_0}^{m}}
c_{t, \veci} \mathcal{P}_{S}^{n}\!\left(
i_1 \phi\!\left(s_1\right) + \cdots + i_m \phi\!\left(s_m\right)\right) \\
&= \sum_{\veci \in {\mathbb{N}_0}^{m}}
c_{t, \veci} \!\left(
\left(i_1 \phi\!\left(s_1\right) + \cdots + i_m \phi\!\left(s_m\right)\right)^j
\right)_{\!\!j = 0, \dots, n} \\
&= \left(\sum_{\veci \in {\mathbb{N}_0}^{m}}
c_{t, \veci} \!\left(i_1 \phi\!\left(s_1\right) + \cdots + i_m \phi\!\left(s_m\right)\right)^j
\right)_{\!\!\!\!j = 0, \dots, n} \; .
\end{split}
\label{eq:alpha_power_sequence}
\end{equation}
Note that the first equality in \eqref{eq:alpha_power_sequence} uses Theorem \ref{thm:parametrized_alpha}, and the last equality uses the fact that the addition of $\BC_{S}^n$ is done component-wise.
\label{ex:monomial_series_pcscg}
\end{MyExample}

\begin{MyExample}[Parametrized Commutative Semiring Computation Graph for Sequence of Higher-order Derivatives]
Let $n$ be a non-negative integer, $l$ a positive integer, $\mathcal{F}_{C^{n}, l}$ the set of all $n$-differentiable functions having $l$ independent variables from an open subset $D$ of $\mathbb{R}^l$ to $\mathbb{R}$, the binary operation $\cdot$ on $\mathcal{F}_{C^{n}, l}$ defined pointwise (i.e., for every $f, g \in \mathcal{F}_{C^n, l}$, $f \cdot g$ is defined as a function that maps each $\vecx \in \bbR^{l}$ to $f\!\left(\vecx\right) \cdot g\!\left(\vecx\right)$), $1_{\mathcal{F}_{C^{n}, l}} \in \mathcal{F}_{C^n, l}$ the constant function whose value is always $1$, $k \in \left\{1, \dots, l\right\}$, $\vecx_0 \in D$, $\Delta_{k, \vecx_0}^{n} \colon \mathcal{F}_{C^n, l} \to \mathbb{R}^{n + 1}$ a function that maps each $f \in \mathcal{F}_{C^{n}, l}$ to $\left(\left.\!\frac{\partial^i}{\partial {x_k}^i} f\!\left(\vecx\right)\right|_{\vecx = \vecx_0}\right)_{i = 0, \dots, n}$ where $\frac{\partial^0}{\partial {x_k}^0} f\!\left(\vecx\right) = f\!\left(\vecx\right)$ and $\vecx = \left(x_1, \dots, x_l\right)$, and the sextuple $\left(G, \op, \left(\mathcal{F}_{C^{n}, l}, \cdot, 1_{\mathcal{F}_{C^{n}, l}}\right), \phi, \BC_{\mathbb{R}}^n, \Delta_{k, \vecx_0}^n\right)$ specify the $\Delta_{k, \vecx_{0}}^{n}$-parametrized computation graph $\mathcal{G} = \left(G, \op, \BC_{\bbR}^{n}, \Delta_{k, \vecx_0}^{n} \!\circ \phi\right)$. Note that $\mathcal{F}_{C^{n}, l} = \left(\mathcal{F}_{C^{n}, l}, \cdot, 1_{\mathcal{F}_{C^{n}, l}}\right)$ is a commutative monoid, and $\Delta_{k, \vecx_0}^n$ is a monoid homomorphism from $\mathcal{F}_{C^{n}, l}$ to the multiplicative monoid $\left(\bbR^{n + 1}, \diamond, 1_{\BC_{\bbR}^{n}}\right)$ of $\BC_{\bbR}^{n}$, that is, $\Delta_{k, \vecx_0}^n\!\!\left(f \cdot g\right) = \left(\Delta_{k, \vecx_0}^n\!\!\left(f\right)\right) \diamond \left(\Delta_{k, \vecx_0}^n\!\!\left(g\right)\right)$ for every $f, g \in \mathcal{F}_{C^{n}, l}$, which is nothing other than the general Leibniz rule followed by an evaluation at the point $\vecx = \vecx_0$. Let $\src\!\left(G\right) = \left\{s_1, \dots, s_m\right\}$. Now further assume that one defines $\chi \colon \src\!\left(G\right) \to \bbNzero\!\left[y_1, \dots, y_m\right]$ mapping $s_i$ to $y_i$ for every $s_i \in \src\!\left(G\right)$, and obtains the following form of the free forward variable of $\left(G, \op\right)$ w.r.t. $\chi$
\begin{equation*}
\alpha_{\left(G, \op, \chi\right)}\!\left(t\right) =
\sum_{\veci \in {\mathbb{N}_0}^m} c_{t, \veci} {y_1}^{i_1} \cdots {y_m}^{i_m}
\end{equation*}
for every node and arc $t \in V \cup E$. Then,
\begin{equation}
\begin{split}
\alpha_{\mathcal{G}}(t)
  &= \sum_{\veci \in {\mathbb{N}_0}^m} c_{t, \veci} \Delta_{k, \vecx_0}^{n}\!\!\left(
       \left(\phi\!\left(\vecx; s_1\right)\right)^{i_1} \cdots
       \left(\phi\!\left(\vecx; s_m\right)\right)^{i_m}
     \right) \\
  &= \sum_{\veci \in {\mathbb{N}_0}^m} c_{t, \veci} \!\left(
       \left.
         \!\frac{\partial^j}{\partial {x_k}^j} \!\left(
           \left(\phi\!\left(\vecx; s_1\right)\right)^{i_1} \cdots
           \left(\phi\!\left(\vecx; s_m\right)\right)^{i_m}
         \right)
       \right|_{\vecx = \vecx_0}
     \right)_{\!\!\!j = 0, \dots, n} \\
  &= \left(
       \left.
         \!\frac{\partial^j}{\partial {x_k}^j} \!\left(
           \sum_{\veci \in {\mathbb{N}_0}^m} c_{t, \veci}
             \!\left(\phi\!\left(\vecx; s_1\right)\right)^{i_1} \cdots
               \left(\phi\!\left(\vecx; s_m\right)\right)^{i_m}
         \right)
       \!\right|_{\vecx = \vecx_0}
     \right)_{\!\!\!\!j = 0, \dots, n} \; .
\end{split}
\label{eq:alpha_diff_seq}
\end{equation}
Note that the first equality in \eqref{eq:alpha_diff_seq} uses Theorem \ref{thm:parametrized_alpha}, and the last equality uses linearity of differentiation and the fact that the addition of $\BC_{\mathbb{R}}^{n}$ is done component-wise.
\label{ex:n_differentiation_pcscg}
\end{MyExample}

Note that the instance of forward algorithms on the parametrized computation graph $\left(G, \op, \BC_{\mathbb{R}}^{1}, \Delta_{k, \vecx_{0}}^1 \circ \phi\right)$ specified by the sextuple $\left(G, \op, \mathcal{F}_{C^1, l}, \phi, \BC_{\mathbb{R}}^{1}, \Delta_{k, \vecx_{0}}^1\right)$, which is obtained by setting $n = 1$ in Example \ref{ex:n_differentiation_pcscg}, is equivalent to the forward mode of AD~\citep{griewank2008evaluating} on the computation graph $\left(G, \op, \left(\mathbb{R}, +, \cdot, 0, 1\right), \phi\right)$.

\subsection{Tensor Product of Semialgebras for Forward Algorithms}
\label{subsec:tensor_product_of_semialgebras_for_forward_algorithm}

In the previous subsection, we introduced the notion of a parametrized computation graph. The goal of this subsection is to provide a systematic way to ``compose'' a new parametrized computation graph from those that have the same computation structure $\left(G, \op\right)$. In the composed graph, the values of its forward variable are the ``composition'' of those of the original computation graphs.

In short, our contribution in this subsection is to reveal algebraic structures underlying complicated computation with forward algorithms and to construct a systematic framework to compose a complicated and difficult-to-design forward algorithm from primitive and easy-to-design forward algorithms.

To illustrate our motivation, let us introduce computation with the second-order expectation semiring~\citep{li2009first}. Roughly speaking, for a computation graph with the following free forward variable
\begin{equation*}
\alpha_{\left(G, \op, \chi\right)}\!\left(t\right)
=\sum_{\veci \in \bbNzero^n} c_{t, \veci}{x_{1}}^{i_1} \cdots {x_{n}}^{i_n} \; ,
\end{equation*}
the second-order expectation semiring is used to compute values of the form
\begin{equation}
\sum_{\veci \in \bbNzero^n}
  c_{t, \veci}
  \left(\left(\mu\!\left(s_1\right)\right)^{i_1} \cdots \left(\mu\!\left(s_n\right)\right)^{i_n}\right)
  \left(i_1\phi\!\left(s_1\right) + \cdots + i_n\phi\!\left(s_n\right)\right)
  \left(i_1\psi\!\left(s_1\right) + \cdots + i_n\psi\!\left(s_n\right)\right) \; .
\label{eq:form_by_second_order_expectation_semiring}
\end{equation}
The second-order expectation semiring is a commutative semiring on quadruples of real numbers equipped with the following addition and multiplication,
\begin{equation}
\begin{aligned}
&\left(p_{1}, r_{1}, s_{1}, t_{1}\right) + \left(p_{2}, r_{2}, s_{2}, t_{2}\right)
 =\left(p_{1} + p_{2}, r_{1} + r_{2}, s_{1} + s_{2}, t_{1} + t_{2}\right) \\
&\left(p_{1}, r_{1}, s_{1}, t_{1}\right) \cdot \left(p_{2}, r_{2}, s_{2}, t_{2}\right) \\
&\qquad=\left(p_{1}p_{2}, p_{1}r_{2} + p_{2}r_{1}, p_{1}s_{2} + p_{2}s_{1},
              p_{1}t_{2} + p_{2}t_{1} + r_{1}s_{2} + r_{2}s_{1}\right) \; .
\end{aligned}
\label{eq:second_order_semiring_adhoc}
\end{equation}
However, the derivation of this semiring in \citet{li2009first} is ad-hoc and limited to the above form.

Actually, the complex Eqs. \eqref{eq:second_order_semiring_adhoc} can be derived systematically. First observe that the summand in formula \eqref{eq:form_by_second_order_expectation_semiring} consists of $c_{t, \veci}$ and three other factors. Now consider summations with simpler summands that consist of $c_{t, \veci}$ and only one of the other three factors, i.e., sums of the form $\sum_{\veci \in \bbNzero^n} c_{t, \veci} \left(\mu\!\left(s_1\right)\right)^{i_1} \cdots \left(\mu\!\left(s_n\right)\right)^{i_n}$, $\sum_{\veci \in \bbNzero^n} c_{t, \veci} \left(i_1\phi\!\left(s_1\right) + \cdots + i_n\phi\!\left(s_n\right)\right)$, and $\sum_{\veci \in \bbNzero^n} c_{t, \veci} \left(i_1\psi\!\left(s_1\right) + \cdots + i_n\psi\!\left(s_n\right)\right)$. These simpler summations can be computed easily, as we have already specified the computation graphs for these summations in Example \ref{ex:id_pcscg}, and Example \ref{ex:monomial_series_pcscg} with $S = \mathbb{R}$ and $n = 1$. The goal of this subsection is to give a way that allows systematic derivation of \eqref{eq:second_order_semiring_adhoc} from the computation graphs for these simpler summations, and to elucidate the underlying abstract structure. At the end of this subsection, we replicate the derivation of \eqref{eq:second_order_semiring_adhoc} on a comprehensive mathematical foundation, and show how to construct a systematic way to compute \eqref{eq:form_by_second_order_expectation_semiring}.

Let us state our goal more formally. When two sextuples $(G, \op, M, \phi, S, f)$ and $(G, \op,\allowbreak M', \psi, S', g)$ are already known to specify the $f$- and $g$-parametrized computation graphs $\left(G, \op, S, f \circ \phi\right)$ and $\left(G, \op, S', g \circ \psi\right)$, respectively, and the form of the free forward variable of $\left(G, \op\right)$ with respect to $\chi \colon \src\!\left(G\right) \to \bbNzero\!\left[x_1, \dots, x_n\right]$ mapping $s_i$ to $x_i$ for every $s_i \in \src\!\left(G\right) = \left\{s_1, \dots, s_n\right\}$ is known to be equal to
\begin{equation*}
\alpha_{\left(G, \op, \chi\right)}\!\left(t\right) =
\sum_{\veci \in {\mathbb{N}_0}^n} c_{t, \veci} {x_1}^{i_1} \cdots {x_n}^{i_n}
\end{equation*}
for every node and arc $t \in V \cup E$, then the goal is to construct a systematic way to compute
\begin{equation}
\sum_{\veci \in {\mathbb{N}_0}^n} c_{t, \veci}
  B\!\left(
    f\!\left(\left(\phi\!\left(s_1\right)\right)^{i_1} \cdots
             \left(\phi\!\left(s_n\right)\right)^{i_n}\right),
    g\!\left(\left(\psi\!\left(s_1\right)\right)^{i_1} \cdots
             \left(\psi\!\left(s_n\right)\right)^{i_n}\right)
  \right) \; ,
\label{eq:forward_variable_tensor_product}
\end{equation}
where $B$ is any \termref{bilinear mapping}. Moreover, for $m$ sextuples $(G, \op, M_i, \phi_i, S_i, f_i) \;$$(i = 1, \dots, m)$ specifying the $f_i$-parametrized computation graph $\left(G, \op, S_i, f_i \circ \phi_i\right)$, respectively, we also construct a systematic way to compute
\begin{equation}
\sum_{\veci \in \bbNzero^n} c_{t, \veci}
  \mathcal{M}\!\left(
    f_1\!\left(\left(\phi_1\!\left(s_1\right)\right)^{i_1} \cdots
               \left(\phi_1\!\left(s_n\right)\right)^{i_n}\right),
    \dots,
    f_m\!\left(\left(\phi_m\!\left(s_1\right)\right)^{i_1} \cdots
               \left(\phi_m\!\left(s_n\right)\right)^{i_n}\right)
  \right) \; ,
\label{eq:forward_variable_multilinear}
\end{equation}
where $\mathcal{M}$ is any \termref{$m$-linear mapping}.

The above-mentioned goal is feasible under some reasonable assumptions. However, there are things to rigorously formalize in order to achieve the goal. Therefore, we introduce the definitions of \termref{semimodule}, \termref{basis}, \termref{semialgebra}, \termref{structure constants}, \termref{bilinear mapping}, \termref{tensor product}, \termref{$n$-linear mapping}, and so on. Semimodule just models ``vector-like'' objects. A semimodule generalizes the concept of ordinary vector space (over a field), wherein the corresponding scalars are elements of a semiring. Bases of a semimodule are a concept analogous to that of ordinary vector space. A semialgebra is a semimodule equipped with ``multiplication between vectors.'' Structure constants provide a primitive description of ``multiplication between vectors'' for a semialgebra. A bilinear mapping roughly performs a ``multiplication-like'' operation between elements of two semimodules and yields an element of another semimodule. Therefore, bilinear mappings include not only an ordinary multiplication as a closed binary operation but also various multiplication-like operations involving vectors such as scalar product, inner product, outer product, and so on. The notion of tensor product for semialgebras offers a systematic way to ``compose'' the multiplication structures of  given semialgebras as well as their domains. Once the tensor product of semialgebras is constructed, we can easily compute any bilinear mapping on the semialgebras via a succinct linear mapping. $n$-linear mapping is the ``$n$-ary extension'' of bilinear mapping.

Hereinafter, there is an application limitation of formalization. We consider only a class of semirings called \termref{cancellative} semirings. A ring is always a cancellative semiring. Thus, the ordinary semiring of real numbers and its variants, including Examples \ref{ex:exp_pcscg}, \ref{ex:cos_sin_pcscg}, and \ref{ex:n_differentiation_pcscg}, and Examples \ref{ex:id_pcscg} and \ref{ex:monomial_series_pcscg} with $S = \mathbb{R}$ are cancellative semirings. They are frequently used in a learning or optimization phase in machine learning tasks. In contrast, non-cancellative semirings, including Boolean semiring, max-plus (tropical) semiring and its variants, are not covered in the subsequent part of this paper, although they are frequently used during a prediction phase.%
\footnote{
  See \citet[Chapters 15 and 16]{golan1999semirings} for the case of non-cancellative semirings. In particular, for non-cancellative semirings that have an element $x$ such that $a + x = x$ holds for every element $a$ (e.g., Boolean semiring and max-plus semiring), the tensor product of semimodules, which is a core notion in the subsequent part, degenerates to a trivial structure, and formalization becomes also trivial and meaningless.}

Definitions, theorems, etc. introduced in this subsection are also used to formalize forward-backward algorithms in an algebraic way in the next section.

\begin{MyDefinition}[Semimodule,\footnote{In this paper, we only define semimodules over a commutative semiring. Therefore, it is irrelevant to distinguish between a \termref{left} $S$-semimodule $M$ and the \termref{right} one if we set $m\sigma = \sigma m$ for every $\sigma \in S$ and $m \in M$. Note that an $S$-semimodule is an \termref{$(S, S)$-bisemimodule}~\citep{golan1999semirings} in this setting.} {\normalfont\citealt{hebisch1998algebraic,golan1999semirings}}]
Let $M = (M, +,\allowbreak 0_M)$ be a commutative monoid, $S = (S, +, \cdot, 0_S, 1_S)$ a commutative semiring, and let there exist a mapping from $S \times M$ to $M$, denoted by the juxtaposition of an element of $S$ and an element of $M$, and called \termref{scalar multiplication}. Then $M$ is called a \termref{semimodule over $S$} or an \termref{$S$-semimodule} if and only if the following conditions are satisfied for every $\sigma, \tau \in S$ and $a, b \in M$:
\begin{itemize}
  \item $\sigma (a + b) = \sigma a + \sigma b$,
  \item $(\sigma + \tau) a = \sigma a + \tau a$,
  \item $(\sigma \cdot \tau) a = \sigma (\tau a)$,
  \item $1_S a = a$, and
  \item $0_S a = 0_M$.
\end{itemize}
\end{MyDefinition}

\begin{MyDefinition}[$S$-homomorphism, {\normalfont\citealt{hebisch1998algebraic,golan1999semirings}}]
  Let $S$ be a commutative semiring, and let $\left(M, +, 0_M\right)$ and $\left(N, +, 0_N\right)$ be $S$-semimodules. Then a mapping $f \colon M \to N$ is called an \termref{$S$-homomorphism} or a \termref{linear mapping} if and only if the following conditions are satisfied for every $\sigma \in S$ and $m, m' \in M$:
  \begin{itemize}
    \item $f\!\left(m + m'\right) = f\!\left(m\right) + f\!\left(m'\right)$, and
    \item $f\!\left(\sigma m\right) = \sigma f\!\left(m\right)$.
  \end{itemize}
\end{MyDefinition}

\begin{MyDefinition}[Basis of Semimodule, {\normalfont\citealt{hebisch1998algebraic,golan1999semirings}}]
Let $\left(M, +, 0_M\right)$ be a semimodule over a commutative semiring $\left(S, +, \cdot, 0_S, 1_S\right)$, and $U \neq \emptyset$ a subset of $M$. Then $a \in M$ is called a \termref{linear combination of elements $u \in U$ (over $S$)} if and only if $a = \sum_{u \in U} \sigma_u u$ holds for $\sigma_u \in S$ but only finitely many of the coefficients $\sigma_{u}$ are different from $0_S$. If every element in $M$ can be obtained in this way, $U$ is said to \termref{generate $\left(M, +, 0_M\right)$ by linear combinations}. Further, $U$ is called \termref{linearly independent (over $S$)} if $\sum_{u \in U} \sigma_u u = \sum_{u \in U} \tau_u u$ for only finitely many non-zero coefficients $\sigma_u, \tau_u \in S$ implies $\sigma_u = \tau_u$ for every $u \in U$. Finally, $U$ is called a \termref{basis} of $\left(M, +, 0_M\right)$ if and only if $U$ is linearly independent and generates $\left(M, +, 0_M\right)$ by linear combinations.
\end{MyDefinition}

\begin{MyDefinition}[Extension by Linearity]
Let $M = \left(M, +, 0_M\right)$ and $N = \left(N, +, 0_N\right)$ be semimodules over a commutative semiring $S$, and $U \subseteq M$ a basis of $M$. Further, for every $m \in M$, let $m = \sum_{u \in U} \sigma_{m, u} u \; \left(\sigma_{m, u} \in S\right)$ be its unique expression as a linear combination of elements of $U$. Then, a mapping $f\colon U \to N$ is said to be \termref{extended by linearity} to a mapping $g\colon M \to N$ if and only if $g$ is defined by $g\!\left(m\right) = \sum_{u \in U} \sigma_{m, u} f\!\left(u\right)$ for every $m \in M$. $g$ is well-defined since the linear combination is unique. $g$ is also called the \termref{extension by linearity} of $f$. Clearly, $g$ is an $S$-homomorphism.
\end{MyDefinition}

\begin{MyDefinition}[Bilinear ($S$-balanced) Mapping, {\normalfont\citealt{golan1999semirings}}]
Let $S$ be a commutative semiring, and let $M$, $N$, and $P$ be $S$-semimodules. Then a mapping $B \colon M \times N \to P$ is \termref{bilinear} or \termref{$S$-balanced} if and only if, for every $m, m' \in M$, $n, n' \in N$, and $\sigma \in S$, we have:
\begin{itemize}
  \item $B\left(m + m', n\right) = B\left(m, n\right) + B\left(m', n\right),$
  \item $B\left(m, n + n'\right) = B\left(m, n\right) + B\left(m, n'\right),$ and
  \item $B\left(\sigma m, n\right) = B\left(m, \sigma n\right) = \sigma B\left(m, n\right).$
\end{itemize} 
\end{MyDefinition}

\begin{MyDefinition}[Cancellativeness, {\normalfont\citealt{hebisch1998algebraic,golan1999semirings}}]
  A commutative semiring $S = \left(S, +, \cdot, 0_{S}, 1_{S}\right)$ is called \termref{cancellative} if and only if, for every $a \in S$, $a + b = a + c$ for some $b, c \in S$ implies $b = c$. A semimodule $M = \left(M, +, 0_{M}\right)$ is \termref{cancellative} if and only if, for every $m \in M$, $m + m' = m + m''$ for some $m', m'' \in M$ implies $m' = m''$.
\end{MyDefinition}

\begin{MyDefinition}[Tensor Product of Semimodules, {\normalfont\citealt{takahashi1982bordism,golan1999semirings}}\footnote{This definition is a specific case of the general definition of the tensor product of semimodules and corresponds to \citet[Corollary 4.4]{takahashi1982bordism} and \citet[Proposition 16.15]{golan1999semirings}. The general definition \citep{takahashi1982bordism,golan1999semirings} does not assume cancellativeness of $P$.}]
Let $S$ be a commutative semiring, let $M$ and $N$ be $S$-semimodules, and $P$ a cancellative $S$-semimodule. The \termref{tensor product of $M$ and $N$ over $S$}, denoted by $M \otimes_S N$, is an $S$-semimodule equipped with a bilinear mapping $\otimes \colon M \times N \to M \otimes_S N$ such that for any bilinear mapping $B \colon M \times N \to P$ there is a unique $S$-homomorphism $L \colon M \otimes_S N \to P$ making $L\!\left(m \otimes n\right) = B\!\left(m, n\right)$ for every $m \in M$ and $n \in N$. 
\label{def:tensor_product_of_semimodules}
\end{MyDefinition}

Figure \ref{fig:tensor_prod} illustrates what Definition \ref{def:tensor_product_of_semimodules} says. Roughly speaking, for any bilinear mapping $B\colon M \times N \to P$, $B\!\left(m, n\right)$ can be calculated by way of $M \otimes_{S} N$. This is not just a roundabout way of calculating $B\!\left(m, n\right)$, but of great help in analyzing a complicated $B$ because a linear-algebraic method can be used on $M \otimes_{S} N$.
\begin{figure}[t]
  \centering
  \includegraphics[width=0.3\columnwidth,draft=false]{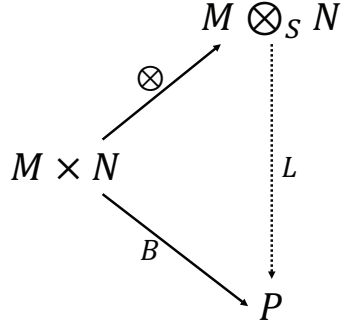}
  \caption[Tensor Product of Semimodules]{The commutative diagram for the tensor product of semimodules (Definition \ref{def:tensor_product_of_semimodules})}
  \label{fig:tensor_prod}
\end{figure}%

\begin{MyLemma}[Existence of Tensor Product of Semimodules, {\normalfont\citealt{takahashi1982bordism,golan1999semirings}}]
Let $S$ be a commutative semiring, and let $M = \left(M, +, 0_M\right)$ and $N = \left(N, +, 0_N\right)$ be $S$-semimodules. Then the tensor product $M \otimes_S N$ exists, $0_M \otimes 0_N$ is the zero element of $M \otimes_S N$, and $\left\{m \otimes n\right\}_{m \in M, n \in N}$ generates $M \otimes_S N$ by linear combinations.
\label{lemma:existence_of_tensor_product_of_semimodules}
\end{MyLemma}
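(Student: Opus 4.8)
The plan is to construct $M \otimes_S N$ explicitly as a quotient of a free $S$-semimodule and then read off the three ``in particular'' assertions along the way. First I would form the \termref{free $S$-semimodule} $F$ on the set $M \times N$: concretely, $F$ is the semimodule of finitely supported functions $M \times N \to S$ under pointwise operations, with distinguished generating set $\left\{e_{(m, n)}\right\}_{(m, n) \in M \times N}$ (the indicator functions), which is a basis. Its defining property is exactly \termref{extension by linearity}: any function from $M \times N$ into an $S$-semimodule extends uniquely to an $S$-homomorphism out of $F$.

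Next I would let $\equiv$ be the smallest $S$-semimodule congruence on $F$ (the intersection of all such congruences, itself a congruence) containing, for all $m, m' \in M$, $n, n' \in N$, and $\sigma \in S$, the pairs
\[
  \left(e_{(m + m', n)}, e_{(m, n)} + e_{(m', n)}\right),\quad
  \left(e_{(m, n + n')}, e_{(m, n)} + e_{(m, n')}\right),\quad
  \left(e_{(\sigma m, n)}, \sigma e_{(m, n)}\right),\quad
  \left(e_{(m, \sigma n)}, \sigma e_{(m, n)}\right) .
\]
Set $M \otimes_S N := F / \equiv$ with quotient $S$-homomorphism $q \colon F \to M \otimes_S N$, and define $\otimes \colon M \times N \to M \otimes_S N$ by $m \otimes n := q\!\left(e_{(m, n)}\right)$; bilinearity of $\otimes$ is immediate from the choice of generating pairs of $\equiv$. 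Since $q$ is a surjective $S$-homomorphism and $\left\{e_{(m, n)}\right\}$ generates $F$ by linear combinations, $\left\{m \otimes n\right\}_{m \in M, n \in N}$ generates $M \otimes_S N$ by linear combinations. For the zero element, note $0_S \cdot 0_M = 0_M$ and $0_S \, e_{(0_M, 0_N)} = 0_F$, so $e_{(0_M, 0_N)} = e_{(0_S 0_M, 0_N)} \equiv 0_S \, e_{(0_M, 0_N)} = 0_F$; hence $0_M \otimes 0_N = q(0_F)$ is the zero of $M \otimes_S N$.

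For the universal property, given a cancellative $S$-semimodule $P$ and a bilinear $B \colon M \times N \to P$, I would extend $e_{(m, n)} \mapsto B(m, n)$ by linearity to an $S$-homomorphism $\overline{B} \colon F \to P$. The relation $\left\{(u, v) \relmiddle| \overline{B}(u) = \overline{B}(v)\right\}$ is an $S$-semimodule congruence containing all the generating pairs of $\equiv$ precisely because $B$ is bilinear, hence contains $\equiv$; so $\overline{B}$ factors as $\overline{B} = L \circ q$ for an $S$-homomorphism $L \colon M \otimes_S N \to P$, unique because $q$ is onto, and $L(m \otimes n) = \overline{B}(e_{(m, n)}) = B(m, n)$. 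Uniqueness of $L$ among $S$-homomorphisms with $L(m \otimes n) = B(m, n)$ follows since any $S$-homomorphism out of $M \otimes_S N$ is pinned down by its values on the generating set $\left\{m \otimes n\right\}$. I note that cancellativeness of $P$ is not used here, so $F/\equiv$ already works; to obtain the canonical cancellative representative of Takahashi and Golan one would additionally pass to the cancellative reflection of $F/\equiv$ (quotient by $x \sim y$ whenever $x + z = y + z$ for some $z$), which is cancellative, is still generated by the images of the $m \otimes n$, and still enjoys the universal property for cancellative $P$.

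The main obstacle is bookkeeping rather than depth: one works with congruences, not submodule quotients, so there is no ``kernel submodule'' to divide by; one must justify that the smallest congruence through the bilinearity pairs exists and that an $S$-homomorphism descends through the quotient exactly when its induced congruence absorbs those pairs. The remaining care lies in the ``in particular'' clauses — checking that $e_{(0_M, 0_N)} \equiv 0_F$ (so that $0_M \otimes 0_N$ itself, not merely some element, is the zero) and that generation by $\left\{m \otimes n\right\}$ genuinely survives passage to $F/\equiv$.
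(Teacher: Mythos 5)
Your construction is correct, and it is essentially the argument the paper delegates to its citations: the paper's own ``proof'' of this lemma is only the pointer to \citet[Sections 3 and 4]{takahashi1982bordism} and \citet[Chapter 16]{golan1999semirings}, and the free-semimodule-modulo-smallest-congruence construction you give (with the zero element and generation read off from the quotient map) is exactly the standard one carried out there. Your closing observation that cancellativeness of $P$ is never needed, and that one passes to the cancellative reflection only to match the representative singled out by Takahashi and Golan, correctly accounts for the paper's own footnote that its definition of the tensor product is the cancellative-target special case of the general universal property.
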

\begin{proof}
See \citet[Sections 3 and 4]{takahashi1982bordism} or \citet[Chapter16]{golan1999semirings}.
\end{proof}

\begin{MyLemma}[Basis of Tensor Product of Semimodules]
Let $S$ be a cancellative commutative semiring, and let $M$ and $N$ be $S$-semimodules. Further let $U$ and $V$ be bases of $M$ and $N$, respectively. Then $\left\{u \otimes v\right\}_{u \in U,\, v \in V}$ is a basis of the tensor product $M \otimes_{S} N$.
\label{lemma:basis_of_tensor_product_of_semimodules}
\end{MyLemma}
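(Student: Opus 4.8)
The plan is to verify the two defining properties of a basis separately: that $\left\{u \otimes v\right\}_{u \in U,\, v \in V}$ generates $M \otimes_S N$ by linear combinations, and that it is linearly independent over $S$. For generation I would invoke Lemma~\ref{lemma:existence_of_tensor_product_of_semimodules}, which already gives that $\left\{m \otimes n\right\}_{m \in M,\, n \in N}$ generates $M \otimes_S N$; it then suffices to rewrite each such generator. Writing $m = \sum_{u \in U} \sigma_u u$ and $n = \sum_{v \in V} \tau_v v$ with only finitely many nonzero coefficients (possible because $U$ and $V$ are bases) and applying the bilinearity of the canonical map $\otimes\colon M \times N \to M \otimes_S N$ twice, one obtains $m \otimes n = \sum_{u \in U,\, v \in V} \left(\sigma_u \tau_v\right) \left(u \otimes v\right)$, a finite linear combination (finitely many $\sigma_u \tau_v$ are nonzero). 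Composing, every element of $M \otimes_S N$ is a finite linear combination of the $u \otimes v$.

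For linear independence, the key device is a family of ``coordinate'' linear functionals supplied by the universal property. Fix $u_0 \in U$ and $v_0 \in V$ and define $B_{u_0, v_0}\colon M \times N \to S$ by $B_{u_0, v_0}\!\left(m, n\right) = \sigma_{m, u_0}\,\tau_{n, v_0}$, where $m = \sum_u \sigma_{m, u} u$ and $n = \sum_v \tau_{n, v} v$ are the expansions, which are unique by the linear independence of $U$ and $V$, so that $B_{u_0, v_0}$ is well defined. I would then check that $B_{u_0, v_0}$ is bilinear: additivity in each argument holds because the expansion of a sum is the sum of the expansions, and the scalar conditions $B_{u_0, v_0}\!\left(\sigma m, n\right) = B_{u_0, v_0}\!\left(m, \sigma n\right) = \sigma B_{u_0, v_0}\!\left(m, n\right)$ follow likewise, using commutativity of $S$. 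Regarding $S$ as an $S$-semimodule over itself---which is a cancellative $S$-semimodule precisely because $S$ is a cancellative semiring, and this is the sole point at which the hypothesis of the lemma enters---Definition~\ref{def:tensor_product_of_semimodules} yields a unique $S$-homomorphism $L_{u_0, v_0}\colon M \otimes_S N \to S$ with $L_{u_0, v_0}\!\left(m \otimes n\right) = B_{u_0, v_0}\!\left(m, n\right)$.

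Now suppose $\sum_{u, v} \sigma_{u, v}\left(u \otimes v\right) = \sum_{u, v} \tau_{u, v}\left(u \otimes v\right)$ with only finitely many nonzero coefficients. Since for $u \in U$ its own expansion is simply $u = 1_S u$, we get $B_{u_0, v_0}\!\left(u, v\right) = 1_S$ when $\left(u, v\right) = \left(u_0, v_0\right)$ and $0_S$ otherwise. Applying $L_{u_0, v_0}$ to both sides and using that an $S$-homomorphism preserves finite linear combinations collapses the left side to $\sigma_{u_0, v_0}$ and the right side to $\tau_{u_0, v_0}$, so $\sigma_{u_0, v_0} = \tau_{u_0, v_0}$; as $u_0, v_0$ were arbitrary, linear independence follows, completing the proof that $\left\{u \otimes v\right\}_{u \in U,\, v \in V}$ is a basis. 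The one genuinely delicate point---and the reason cancellativeness is assumed---is that the universal property in Definition~\ref{def:tensor_product_of_semimodules} only applies to cancellative target semimodules, so the coordinate functionals can be built only after checking that $S$ itself is such a semimodule; the remaining verifications (well-definedness of $B_{u_0,v_0}$, its bilinearity, and the collapse computation) are routine unwindings of the basis axioms.
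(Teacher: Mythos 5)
Your proposal is correct and follows essentially the same route as the paper's proof: generation via bilinearity of the canonical map applied to the basis expansions, and linear independence via the coordinate bilinear forms $B_{u_0,v_0}$ lifted through the universal property to $S$-homomorphisms that extract individual coefficients. Your explicit observation that cancellativeness of $S$ (as a semimodule over itself) is exactly what licenses the use of Definition \ref{def:tensor_product_of_semimodules} is a point the paper leaves implicit, but the argument is the same.
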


\begin{MyDefinition}[Semialgebra, {\normalfont\citealt{hebisch1998algebraic}}]
Let $\left(S, +, \cdot, 0_S, 1_S\right)$ be a commutative semiring. Then $\left(A, +, \cdot, 0_A\right)$ is called a \termref{semialgebra over $S$} or an \termref{$S$-semialgebra} if and only if the following conditions are satisfied:
\begin{itemize}
  \item $\left(A, +, 0_A\right)$ is an $S$-semimodule,
  \item $\cdot$ is a binary operation on $A$, called \termref{multiplication}, and $\left(A, +, \cdot, 0_A\right)$ satisfies the \termref{distributive law}, i.e., $a \cdot (b + c) = (a \cdot b) + (a \cdot c)$ and $(a + b) \cdot c = (a \cdot c) + (b \cdot c)$ for every $a, b, c \in A$, and
  \item there exists a basis $U$ of $\left(A, +, 0_A\right)$ such that $(\sigma u) \cdot (\tau v) = (\sigma \cdot \tau)(u \cdot v)$ holds for every $\sigma, \tau \in S$ and $u, v \in U$. Such a basis $U$ is called a \termref{semialgebra basis of $\left(A, +, \cdot, 0_A\right)$}.
\end{itemize}
\end{MyDefinition}

An $S$-semialgebra $\left(A, +, \cdot, 0_A\right)$ is called \termref{unital} if and only if there exists the identity element of the multiplication. It is called \termref{commutative} or \termref{associative} if and only if the multiplication is commutative or associative, respectively. Hereinafter, a unital semialgebra $(A, +, \cdot, 0_A)$ where $1_A \in A$ is the identity element of the multiplication is denoted by $(A, +, \cdot, 0_A, 1_A)$.

Note that a commutative semiring $S = \left(S, +, \cdot, 0_{S}, 1_{S}\right)$ can be always considered as a commutative unital associative $S$-semialgebra with a semialgebra basis $\left\{1_{S}\right\}$. At the same time, a commutative unital associative semialgebra itself can be always considered as a commutative semiring.

In this paper, we are only interested in semialgebras that are themselves commutative semirings, that is, commutative unital associative semialgebras.

\begin{MyDefinition}[Structure Constants, {\normalfont\citealt{hebisch1998algebraic}}]
Let $\left(A, +, \cdot, 0_A\right)$ be a semialgebra over a commutative semiring $S = \left(S, +, \cdot, 0_S, 1_S\right)$, and $U$ a semialgebra basis of $\left(A, +, \cdot, 0_A\right)$. Then, for every $u, v \in U$, we obtain the unique linear combination of elements of $U$ for the product of $u$ and $v$
\begin{equation*}
u \cdot v = \sum_{w \in U} \sigma_{u, v}^w w \; ,
\end{equation*}
where $\sigma_{u, v}^w \in S$ but only finitely many of the coefficients $\sigma_{u, v}^w$ are different from $0_S$. $\sigma_{u, v}^w$ are called the \termref{structure constants of the $S$-semialgebra $\left(A, +, \cdot, 0_A\right)$ with respect to the semialgebra basis $U$}.
\end{MyDefinition}

The structure constants of a semialgebra serve as an alternative definition of the multiplication of the semialgebra. This means that, if the definition of the multiplication is given then we easily obtain the structure constants of the semialgebra, and conversely, if the structure constants are given then we can determine the multiplication between every two elements of the semialgebra by using the linear combinations of the basis for the operands.

\begin{MyDefinition}[Tensor Product of Semialgebras]
Let $S$ be a cancellative commutative semiring, $A = (A, +, \cdot, 0_A, 1_A)$ and $A' = (A', +, \cdot, 0_{A'}, 1_{A'})$ commutative unital associative $S$-semialgebras, and $U$ and $V$ semialgebra bases of $A$ and $A'$, respectively. Further let $\sigma_{u, u'}^{u''} \left(u, u', u'' \in U\right)$ (resp. $\tau_{v, v'}^{v''} \left(v, v', v'' \in V\right)$) be the structure constants of $A$ (resp. $A'$) with respect to $U$ (resp. $V$). Let $t$ and $t'$ be two elements of $A \otimes_S A'$, and let $t = \sum_{\left(u, v\right) \in U \times V} \rho_{u, v} \left(u \otimes v\right)$, and $t' = \sum_{\left(u, v\right) \in U \times V} \rho'_{u, v} \left(u \otimes v\right)$ be their unique expressions as linear combinations of the basis $\left\{u \otimes v\right\}_{u \in U, v \in V}$. We define the binary operation $t \cdot t'$ of the two elements by
\begin{equation}
t \cdot t' = \sum_{\left(u, v\right) \in U \times V}
               \sum_{\left(u', v'\right) \in U \times V}
                 \sum_{\left(u'', v''\right) \in U \times V}
                   \left(\rho_{u, v} \cdot \rho'_{u', v'} \cdot
                         \sigma_{u, u'}^{u''} \cdot \tau_{v, v'}^{v''}\right)
                   \left(u'' \otimes v''\right) \; .
\label{eq:multiplication_of_tensor_product_of_semialgebras}
\end{equation}
The tensor product of $S$-semimodules $A \otimes_{S} A'$ equipped with this operation is called the \termref{tensor product of semialgebras $A$ and $A'$ over $S$}.
\label{def:tensor_product_of_semialgebras}
\end{MyDefinition}

\begin{MyLemma}
Let $S, A, A', U, V, \sigma_{u, u'}^{u''}, \tau_{v, v'}^{v''},$ and the operation $\cdot$ be defined as in Definition \ref{def:tensor_product_of_semialgebras}. Then, the following statements hold:
\begin{itemize}
  \item $A \otimes_{S} A' = \left(A \otimes_{S} A', +, \cdot, 0_{A} \otimes 0_{A'}, 1_{A} \otimes 1_{A'}\right)$ is a commutative unital associative $S$-semialgebra,
  \item $W = \left\{u \otimes v\right\}_{u \in U, v \in V}$ is a semialgebra basis of $A \otimes_{S} A'$,
  \item the structure constants of $A \otimes_{S} A'$ with respect to the basis $W$ are $\omega_{u \otimes v, u' \otimes v'}^{u'' \otimes v''} = \sigma_{u, u'}^{u''} \cdot \tau_{v, v'}^{v''} \; \left(u \otimes v, u' \otimes v', u'' \otimes v'' \in W\right)$, and
  \item for every $a, b \in A$ and $a', b' \in A'$,
  \begin{equation}
  \left(a \otimes a'\right) \cdot \left(b \otimes b'\right) =
    \left(a \cdot b\right) \otimes \left(a' \cdot b'\right) \; .
  \label{eq:multiplication_between_elementary_tensors}
  \end{equation}
\end{itemize}
\label{lemma:tensor_product_of_semialgebras}
\end{MyLemma}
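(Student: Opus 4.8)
The plan is to establish the statements in the order: that the operation \eqref{eq:multiplication_of_tensor_product_of_semialgebras} is well defined and bilinear and makes $A\otimes_S A'$ an $S$-semialgebra with semialgebra basis $W$; then the structure-constant formula; then the product rule \eqref{eq:multiplication_between_elementary_tensors}; and finally commutativity, unitality, and associativity. First I would invoke Lemma~\ref{lemma:basis_of_tensor_product_of_semimodules} (applicable since $S$ is cancellative) to see that $W = \{u\otimes v\}_{u\in U,\,v\in V}$ is a basis of the $S$-semimodule $A\otimes_S A'$; by Lemma~\ref{lemma:existence_of_tensor_product_of_semimodules} this semimodule has zero $0_A\otimes 0_{A'}$. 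In particular the expansions $t=\sum_{(u,v)}\rho_{u,v}(u\otimes v)$ and $t'=\sum_{(u,v)}\rho'_{u,v}(u\otimes v)$ in Definition~\ref{def:tensor_product_of_semialgebras} are unique, so \eqref{eq:multiplication_of_tensor_product_of_semialgebras} really does define a binary operation $\cdot$. Reading off the formula, $t\cdot t'$ is $S$-linear in the coefficient family $(\rho_{u,v})$ for fixed $t'$ and, after rearranging the triple sum using commutativity and associativity of the multiplication of $S$, also $S$-linear in $(\rho'_{u,v})$ for fixed $t$; since coefficient families add and scale exactly when the corresponding elements of $A\otimes_S A'$ do (because $W$ is a basis), it follows that $\cdot$ is bilinear. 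Bilinearity immediately yields the distributive law, and $(\sigma t)\cdot(\tau t')=(\sigma\cdot\tau)(t\cdot t')$ for all $t,t'$ and $\sigma,\tau\in S$; restricting the latter to $W$ exhibits $W$ as a semialgebra basis, so $A\otimes_S A'$ equipped with $\cdot$ is an $S$-semialgebra with semialgebra basis $W$.

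Next I would compute $\cdot$ on basis elements: substituting $\rho_{u,v}=1_S$ and $\rho'_{u',v'}=1_S$ (all other coefficients $0_S$) into \eqref{eq:multiplication_of_tensor_product_of_semialgebras} gives $(u\otimes v)\cdot(u'\otimes v')=\sum_{(u'',v'')}\sigma_{u,u'}^{u''}\cdot\tau_{v,v'}^{v''}\,(u''\otimes v'')$. Since $W$ is a basis, this is precisely the unique linear combination naming the structure constants, hence $\omega_{u\otimes v,\,u'\otimes v'}^{u''\otimes v''}=\sigma_{u,u'}^{u''}\cdot\tau_{v,v'}^{v''}$, which is the structure-constant statement. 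Using the bilinearity of $\otimes$ together with $u\cdot u'=\sum_{u''}\sigma_{u,u'}^{u''}u''$ and $v\cdot v'=\sum_{v''}\tau_{v,v'}^{v''}v''$, the same computation rewrites as $(u\otimes v)\cdot(u'\otimes v')=(u\cdot u')\otimes(v\cdot v')$. Now both sides of \eqref{eq:multiplication_between_elementary_tensors}, regarded as functions of $(a,a',b,b')\in A\times A'\times A\times A'$, are $4$-linear --- the left side because $\otimes$ and $\cdot$ are bilinear, the right side because the multiplications of $A$ and $A'$ distribute and $\otimes$ is bilinear --- and they agree whenever $a,b\in U$ and $a',b'\in V$ by the identity just obtained; since $U$ and $V$ are bases of $A$ and $A'$, they agree everywhere, giving \eqref{eq:multiplication_between_elementary_tensors}.

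It remains to check commutativity, unitality, and associativity, which I would verify on $W$ and extend by (multi)linearity. Commutativity: $(u\otimes v)\cdot(u'\otimes v')=(u\cdot u')\otimes(v\cdot v')=(u'\cdot u)\otimes(v'\cdot v)=(u'\otimes v')\cdot(u\otimes v)$ by commutativity of $A$ and $A'$, extended to all elements by bilinearity of $\cdot$. Unitality: \eqref{eq:multiplication_between_elementary_tensors} gives $(1_A\otimes 1_{A'})\cdot(u\otimes v)=(1_A\cdot u)\otimes(1_{A'}\cdot v)=u\otimes v$, and likewise on the right, so $1_A\otimes 1_{A'}$ is the multiplicative identity by linearity. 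Associativity: using $(u\otimes v)\cdot(u'\otimes v')=(u\cdot u')\otimes(v\cdot v')$ and then \eqref{eq:multiplication_between_elementary_tensors} with non-basis factors,
\begin{equation*}
\bigl((u_1\otimes v_1)\cdot(u_2\otimes v_2)\bigr)\cdot(u_3\otimes v_3)
=\bigl((u_1u_2)u_3\bigr)\otimes\bigl((v_1v_2)v_3\bigr)
=\bigl(u_1(u_2u_3)\bigr)\otimes\bigl(v_1(v_2v_3)\bigr)
=(u_1\otimes v_1)\cdot\bigl((u_2\otimes v_2)\cdot(u_3\otimes v_3)\bigr)
\end{equation*}
by associativity of $A$ and $A'$; both sides are $3$-linear, so this extends to all elements, completing all four statements. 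The main obstacle is this associativity step: products of basis elements of $W$ need not lie in $W$, so one cannot argue purely inside the basis and must route through the product rule \eqref{eq:multiplication_between_elementary_tensors} for general elements --- which is exactly why \eqref{eq:multiplication_between_elementary_tensors} has to be proved before associativity rather than after. Everything else is routine bookkeeping with finite sums, uniqueness of basis expansions, and the bilinearity of $\otimes$.
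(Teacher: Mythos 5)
Your proposal is correct, and it reaches all four claims, but it takes a genuinely different route for commutativity and associativity. You organize everything around the bilinearity of the operation \eqref{eq:multiplication_of_tensor_product_of_semialgebras}: from it you read off distributivity, the semialgebra-basis property of $W$, and the structure constants, and you then verify commutativity, unitality, and associativity on $W$ and extend by multilinearity. In particular, associativity is reduced to $\bigl((u_1 u_2)u_3\bigr)\otimes\bigl((v_1 v_2)v_3\bigr) = \bigl(u_1(u_2 u_3)\bigr)\otimes\bigl(v_1(v_2 v_3)\bigr)$, which forces you to prove \eqref{eq:multiplication_between_elementary_tensors} for arbitrary, non-basis factors first --- a dependency you correctly flag as the crux of the ordering. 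The paper instead checks distributivity and the scalar-compatibility identity by direct expansion of the triple sums, and then delegates both commutativity and associativity to the structure-constant criterion of Theorem V.2.4 of Hebisch and Weinert: commutativity from the symmetry of the products of basis elements, and associativity from the identity $\sum_{w'''}\omega_{w,w'}^{w'''}\cdot\omega_{w''',w''}^{w''''} = \sum_{w'''}\omega_{w',w''}^{w'''}\cdot\omega_{w,w'''}^{w''''}$, which factors into the corresponding identities for $\sigma$ and $\tau$ inherited from the associativity of $A$ and $A'$. Your route is self-contained and avoids the external citation at the cost of the explicit $3$-linear and $4$-linear extension arguments (each of which is justified, since $W$, $U$, and $V$ are bases and multilinear maps agreeing on bases agree everywhere); the paper's route stays entirely at the level of structure constants but leans on the quoted criterion. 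Both are sound.
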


The following theorem shows how to compute formula \eqref{eq:forward_variable_tensor_product}.

\begin{MyTheorem}
Let $(S, +, \cdot, 0_S, 1_S)$ be a cancellative commutative semiring, $A = (A, +, \cdot, 0_A,\allowbreak 1_A)$ and $A' = (A', +, \cdot, 0_{A'}, 1_{A'})$ commutative unital associative $S$-semialgebras, $U$ and $U'$ semialgebra bases of $A$ and $A'$, respectively, and the sextuples $(G, \op, (M, \cdot, 1_M), \phi, A, f)$ and $(G, \op, (N, \cdot, 1_N), \psi, A', g)$ specify the $f$- and $g$-parametrized computation graphs $(G, \op, A,\allowbreak f \circ \phi)$ and $(G, \op, A', g \circ \psi)$, respectively. Note that $M \times N = (M \times N, \cdot, (1_M, 1_N))$ is a commutative monoid equipped with component-wise multiplication $(m, n) \cdot (m', n') = (m \cdot m', n \cdot n')$ for every $m, m' \in M$ and $n, n' \in N$. Further let $\phi \times \psi \colon \src(G) \to M \times N$ be a function that maps each $v \in \src(G)$ to $(\phi(v), \psi(v))$, $f \otimes g \colon M \times N \to A \otimes_S A'$ a function that maps each $(m, n) \in M \times N$ to $f(m) \otimes g(n)$, and $\src(G) = \{s_1, \dots, s_n\}$. Now further assume that one defines $\chi\colon \src(G) \to \bbNzero[x_1, \dots, x_n]$ mapping $s_i$ to $x_i$ for every $s_i \in \src(G)$, and obtains the following form of the free forward variable of $(G, \op)$ w.r.t. $\chi$
\begin{equation*}
\alpha_{(G, \op, \chi)}(t) =
\sum_{\veci \in \bbNzero^n} c_{t, \veci}{x_1}^{i_1} \cdots {x_n}^{i_n} 
\end{equation*}
for every node and arc $t \in V \cup E$. Then,
\begin{equation*}
\alpha_{\calG}(t) = \sum_{\veci \in \bbNzero^n} c_{t, \veci}
\!\left(
  \left(
    f\!\left(\left(\phi\!\left(s_1\right)\right)^{i_1} \cdots
             \left(\phi\!\left(s_n\right)\right)^{i_n}\right)
  \right) \otimes
  \left(
    g\!\left(\left(\psi\!\left(s_1\right)\right)^{i_1} \cdots
             \left(\psi\!\left(s_n\right)\right)^{i_n}\right)
  \right)
\right) \; ,
\end{equation*}
where $\calG = (G, \op, A \otimes_S A', (f \otimes g) \circ (\phi \times \psi))$ is the computation graph specified by the sextuple $(G, \op, M \times N, \phi \times \psi, A \otimes_S A', f \otimes g)$. Moreover, for any bilinear mapping $B$ from $A \times A'$ to a cancellative $S$-semimodule $P$, we can construct an $S$-homomorphism $L \colon A \otimes_S A' \to P$ such that
\begin{equation*}
L\!\left(\alpha_{\mathcal{G}}\!\left(t\right)\right) =
\sum_{\veci \in \bbNzero^n} c_{t, \veci}
B\!\left(f\!\left(\left(\phi\!\left(s_1\right)\right)^{i_1} \cdots
\left(\phi\!\left(s_n\right)\right)^{i_n}\right),
g\!\left(\left(\psi\!\left(s_1\right)\right)^{i_1} \cdots
\left(\psi\!\left(s_n\right)\right)^{i_n}\right)\right) \; .
\end{equation*}
In fact, the extension by linearity of the mapping $L'(u \otimes u') = B(u, u')$ for every $u \in U$ and $u' \in U'$ has this very effect.
\label{thm:alpha_for_tensor_product_of_semialgebras}
\end{MyTheorem}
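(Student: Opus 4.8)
The plan is to reduce the whole statement to Theorem~\ref{thm:parametrized_alpha}, to Lemma~\ref{lemma:tensor_product_of_semialgebras}, and to the notion of extension by linearity. First I would check that the sextuple $(G, \op, M \times N, \phi \times \psi, A \otimes_S A', f \otimes g)$ really does specify a parametrized computation graph, the only non-obvious requirement being that $f \otimes g$ is a monoid homomorphism from the commutative monoid $M \times N$ to the multiplicative monoid of $A \otimes_S A'$ (granting that $A \otimes_S A'$ is a commutative semiring by Lemma~\ref{lemma:tensor_product_of_semialgebras}). On identities, $(f \otimes g)(1_M, 1_N) = f(1_M) \otimes g(1_N) = 1_A \otimes 1_{A'}$, which is the multiplicative identity of $A \otimes_S A'$ again by Lemma~\ref{lemma:tensor_product_of_semialgebras}. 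On products, for $(m, n), (m', n') \in M \times N$,
\begin{align*}
(f \otimes g)\bigl((m, n) \cdot (m', n')\bigr)
  &= f(m \cdot m') \otimes g(n \cdot n')
   = \bigl(f(m) \cdot f(m')\bigr) \otimes \bigl(g(n) \cdot g(n')\bigr) \\
  &= \bigl(f(m) \otimes g(n)\bigr) \cdot \bigl(f(m') \otimes g(n')\bigr),
\end{align*}
using that $f$ and $g$ are homomorphisms and then \eqref{eq:multiplication_between_elementary_tensors}. Hence $\calG$ as defined in the statement is precisely the parametrized computation graph specified by this sextuple.

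Next I would apply Theorem~\ref{thm:parametrized_alpha} to this sextuple, obtaining
\[
\alpha_{\calG}(t) = \sum_{\veci \in \bbNzero^n} c_{t, \veci}\,
  (f \otimes g)\bigl((\phi \times \psi)(s_1)^{i_1} \cdots (\phi \times \psi)(s_n)^{i_n}\bigr).
\]
Because multiplication in $M \times N$ is component-wise, the monoid element $(\phi \times \psi)(s_1)^{i_1} \cdots (\phi \times \psi)(s_n)^{i_n}$ is the pair whose first component is $(\phi(s_1))^{i_1} \cdots (\phi(s_n))^{i_n}$ and whose second component is $(\psi(s_1))^{i_1} \cdots (\psi(s_n))^{i_n}$; applying $f \otimes g$ to it therefore yields $f\bigl((\phi(s_1))^{i_1} \cdots (\phi(s_n))^{i_n}\bigr) \otimes g\bigl((\psi(s_1))^{i_1} \cdots (\psi(s_n))^{i_n}\bigr)$. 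Substituting this back into the sum gives the first displayed identity of the theorem.

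For the bilinear part, let $B \colon A \times A' \to P$ be bilinear into a cancellative $S$-semimodule $P$. Since $S$ is cancellative, Lemma~\ref{lemma:basis_of_tensor_product_of_semimodules} tells us $W = \{u \otimes u'\}_{u \in U,\, u' \in U'}$ is a basis of $A \otimes_S A'$, so I may define $L'(u \otimes u') = B(u, u')$ on $W$ and let $L$ be its extension by linearity, which is automatically an $S$-homomorphism $L \colon A \otimes_S A' \to P$. The key sub-step is to verify $L(a \otimes a') = B(a, a')$ for \emph{all} $a \in A$ and $a' \in A'$, not merely for basis elements: writing $a = \sum_{u \in U} \sigma_u u$ and $a' = \sum_{u' \in U'} \tau_{u'} u'$ for the unique finitely supported linear combinations, bilinearity of $\otimes$ gives $a \otimes a' = \sum_{u \in U} \sum_{u' \in U'} (\sigma_u \cdot \tau_{u'})(u \otimes u')$, which is the unique expansion of $a \otimes a'$ over $W$; hence $L(a \otimes a') = \sum_{u}\sum_{u'} (\sigma_u \cdot \tau_{u'})\, B(u, u')$, and this equals $B(a, a')$ by bilinearity of $B$. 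Applying the $S$-linear map $L$ termwise to the formula for $\alpha_{\calG}(t)$ established above then gives $L(\alpha_{\calG}(t)) = \sum_{\veci \in \bbNzero^n} c_{t, \veci}\, B\bigl(f(\cdots), g(\cdots)\bigr)$, matching the right-hand side of the second displayed identity; the closing sentence of the theorem is exactly the recipe just used.

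The one place I expect to need care is this last verification --- that the extension by linearity of $L'$ reproduces $B$ on every elementary tensor $a \otimes a'$ rather than only on the basis tensors $u \otimes u'$ --- since everything else is a direct appeal to Theorem~\ref{thm:parametrized_alpha} together with the structural facts about $A \otimes_S A'$ packaged in Lemmas~\ref{lemma:tensor_product_of_semialgebras} and~\ref{lemma:basis_of_tensor_product_of_semimodules}.
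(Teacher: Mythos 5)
Your proposal is correct and follows essentially the same route as the paper: verify that $f \otimes g$ is a monoid homomorphism into the multiplicative monoid of $A \otimes_S A'$ (via the homomorphism of $f$ and $g$ and Eq.~\eqref{eq:multiplication_between_elementary_tensors}), apply Theorem~\ref{thm:parametrized_alpha}, unwind the component-wise powers, and define $L$ by extension by linearity over the basis $\{u \otimes u'\}_{u \in U,\, u' \in U'}$. The only cosmetic difference is that the paper obtains the key identity $L(a \otimes a') = B(a,a')$ by appealing to the universal property in Definition~\ref{def:tensor_product_of_semimodules}, whereas you verify it directly from the basis expansions and bilinearity --- a worthwhile extra check, and you also supply the identity-element condition for the monoid homomorphism that the paper leaves implicit.
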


Let $A, A', A''$ be $S$-semialgebras, and $U, U', U''$ semialgebra bases of $A, A', A''$, respectively. Then, there exists the trivial isomorphism between $(A \otimes_{S} A') \otimes_{S} A''$ and $A \otimes_{S} \left(A' \otimes_{S} A''\right)$, that is, the mapping $u \otimes \left(u' \otimes u''\right) \mapsto \left(u \otimes u'\right) \otimes u'' \; \left(u \in U, u' \in U', u'' \in U''\right)$ is extended by linearity to the isomorphism. Therefore, tensor product is essentially associative, and we can omit parentheses indicating the association order of multiple tensor products.

By noting this fact, and using Theorem \ref{thm:alpha_for_tensor_product_of_semialgebras} and Lemmas \ref{lemma:basis_of_tensor_product_of_semimodules} and \ref{lemma:tensor_product_of_semialgebras}, we obtain the following statement how to compute formula \eqref{eq:forward_variable_multilinear}.

\begin{MyCorollary}
Let $S$ be a cancellative commutative semiring, $n$ a non-negative integer, $A_i = (A_{i}, +, \cdot, 0_{A_i}, 1_{A_i})$ $(i = 1, \dots, n)$ commutative unital associative $S$-semialgebras, $U_i \; (i = 1, \dots, n)$ a semialgebra basis of $A_i$, the sextuples $(G, \op, M_i, \phi_i, A_i, f_i) \;$$(i = 1, \dots, n)$ specify the $f_i$-parametrized computation graph $(G, \op, A_i, f_i \circ \phi_i)$, respectively, $\phi_1 \times \cdots \times \phi_n\colon \src(G) \to M_1 \times \cdots \times M_n$ a function that maps each $v \in \src(G)$ to $(\phi_i(v))_{i = 1, \dots, n}$, and $f_1 \otimes \cdots \otimes f_n$ a function that maps each $(m_i)_{i = 1, \dots, n} \in M_1 \times \cdots \times M_n$ to $(f_1(m_1)) \otimes \cdots \otimes (f_n(m_n))$. Then the sextuple $(G, \op, M_1 \times \cdots \times M_n, \phi_1 \times \cdots \times \phi_n, A_1 \otimes_S \cdots \otimes_S A_n, f_1 \otimes \cdots \otimes f_n)$ specifies the parametrized computation graph $\calG = (G, \op, A_1 \otimes_S \cdots \otimes_S A_n, (f_1 \otimes \cdots \otimes f_n) \circ (\phi_1 \times \cdots \times \phi_n))$. Let $\src(G) = \{s_1, \dots, s_\ell\}$. Now further assume that one defines $\chi\colon \src(G) \to \bbNzero[x_1, \dots, x_\ell]$ mapping $s_i$ to $x_i$ for every $s_i \in \src(G)$, and obtains the following form of the free forward variable of $(G, \op)$ w.r.t. $\chi$
\begin{equation*}
\alpha_{\left(G, \op, \chi\right)}\!\left(t\right) =
\sum_{\veci \in \bbNzero^\ell} c_{t, \veci} {x_{1}}^{i_{1}} \cdots {x_\ell}^{i_\ell}
\end{equation*}
for every node and arc $t \in V \cup E$. Then,
\begin{equation*}
\begin{split}
&\alpha_{\mathcal{G}}\!\left(t\right) = \\
&  \sum_{\veci \in \bbNzero^\ell} c_{t, \veci}\!\left(
\left(f_{1}\!\left(\left(\phi_{1}\!\left(s_{1}\right)\right)^{i_{1}} \cdots
\left(\phi_{1}\!\left(s_\ell\right)\right)^{i_\ell}\right)\right) \otimes
\cdots \otimes
\left(f_{n}\!\left(\left(\phi_{n}\!\left(s_{1}\right)\right)^{i_{1}} \cdots
\left(\phi_{n}\!\left(s_\ell\right)\right)^{i_\ell}\right)\right)
\right) \; .
\end{split}
\end{equation*}
Moreover, for any $n$-linear mapping $\mathcal{M}$ from $M_1 \times \cdots \times M_n$ to a cancellative $S$-semimodule $P$ (i.e., a mapping that is linear if all but one of its arguments are fixed, that is, $\mathcal{M}(\dots, m_i + {m_i}', \dots) = \mathcal{M}(\dots, m_i, \dots) + \mathcal{M}\!\left(\dots, {m_i}', \dots\right)$ and $\mathcal{M}(\dots, \sigma m_i, \dots) = \sigma\mathcal{M}(\dots, m_i, \dots)$ hold for every $m_i, {m_i}' \in M_{i}$ and $\sigma \in S$), we can construct an $S$-homomorphism $L\colon A_1 \otimes_S \cdots \otimes_S A_n \to P$ such that
\begin{equation*}
\begin{split}
&L\!\left(\alpha_{\mathcal{G}}\!\left(t\right)\right) = \\
&  \sum_{\veci \in \bbNzero^\ell} c_{t, \veci} \mathcal{M}\!\left(
\left(f_{1}\!\left(\left(\phi_{1}\!\left(s_{1}\right)\right)^{i_{1}} \cdots
\left(\phi_{1}\!\left(s_\ell\right)\right)^{i_\ell}\right)\right), \cdots,
\left(f_{n}\!\left(\left(\phi_{n}\!\left(s_{1}\right)\right)^{i_{1}} \cdots
\left(\phi_{n}\!\left(s_\ell\right)\right)^{i_\ell}\right)\right)
\right) \; .
\end{split}
\end{equation*}
In fact, the extension by linearity of the mapping $L'\!\left(u_1 \otimes \cdots \otimes u_n\right) = \mathcal{M}\!\left(u_1, \dots, u_n\right)$ for every $u_i \in U_i$ has this very effect.
\label{corollary:alpha_for_multiple_tensor_product_of_semialgebras}
\end{MyCorollary}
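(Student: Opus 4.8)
The plan is to argue by induction on $n$, reducing each step to the binary case, Theorem~\ref{thm:alpha_for_tensor_product_of_semialgebras}, and using the essential associativity of the tensor product established just before the statement to handle the bookkeeping. The base case $n = 2$ is precisely Theorem~\ref{thm:alpha_for_tensor_product_of_semialgebras} (and $n = 1$ is trivial: a $1$-linear mapping is an $S$-homomorphism, so one takes $L = \mathcal{M}$, while $\alpha_{\mathcal{G}}$ is given by Theorem~\ref{thm:parametrized_alpha}). Before the induction I would record two facts, both obtained by iterating Lemma~\ref{lemma:tensor_product_of_semialgebras}: first, $W = \left\{u_1 \otimes \cdots \otimes u_n\right\}_{u_i \in U_i}$ is a semialgebra basis of $A_1 \otimes_S \cdots \otimes_S A_n$ (iterate the basis clauses of Lemmas~\ref{lemma:basis_of_tensor_product_of_semimodules} and~\ref{lemma:tensor_product_of_semialgebras} through the associativity isomorphism); second, $\left(a_1 \otimes \cdots \otimes a_n\right) \cdot \left(b_1 \otimes \cdots \otimes b_n\right) = \left(a_1 \cdot b_1\right) \otimes \cdots \otimes \left(a_n \cdot b_n\right)$ with multiplicative identity $1_{A_1} \otimes \cdots \otimes 1_{A_n}$ (iterate \eqref{eq:multiplication_between_elementary_tensors}). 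The second fact, combined with each $f_i$ being a monoid homomorphism and with the component-wise multiplication on $M_1 \times \cdots \times M_n$, shows $f_1 \otimes \cdots \otimes f_n$ is a monoid homomorphism into the multiplicative monoid of $A_1 \otimes_S \cdots \otimes_S A_n$, so the asserted sextuple really does specify a parametrized computation graph.

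For the formula for $\alpha_{\mathcal{G}}(t)$, I would group the first $n - 1$ factors: set $A = A_1 \otimes_S \cdots \otimes_S A_{n-1}$, $M = M_1 \times \cdots \times M_{n-1}$, $\phi = \phi_1 \times \cdots \times \phi_{n-1}$, $f = f_1 \otimes \cdots \otimes f_{n-1}$, with semialgebra basis $U_1 \otimes \cdots \otimes U_{n-1}$ of $A$. By the inductive hypothesis, $(G, \op, M, \phi, A, f)$ specifies $(G, \op, A, f \circ \phi)$ and $\alpha_{(G, \op, A, f \circ \phi)}(t)$ equals $\sum_{\veci} c_{t, \veci}$ times the $(n-1)$-fold elementary tensor $f_1(\cdots) \otimes \cdots \otimes f_{n-1}(\cdots)$ with the evident arguments. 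Applying Theorem~\ref{thm:alpha_for_tensor_product_of_semialgebras} to the pair $A$, $A_n$ yields $\alpha$ on $A \otimes_S A_n$ as $\sum_{\veci} c_{t, \veci}$ times $\left(f_1(\cdots) \otimes \cdots \otimes f_{n-1}(\cdots)\right) \otimes f_n(\cdots)$, and the associativity isomorphism $(A_1 \otimes_S \cdots \otimes_S A_{n-1}) \otimes_S A_n \cong A_1 \otimes_S \cdots \otimes_S A_n$, which sends $(a_1 \otimes \cdots \otimes a_{n-1}) \otimes a_n$ to $a_1 \otimes \cdots \otimes a_n$, rewrites this into the claimed $n$-fold form. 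One should also observe that the sextuple Theorem~\ref{thm:alpha_for_tensor_product_of_semialgebras} produces, $(G, \op, M \times M_n, \phi \times \phi_n, A \otimes_S A_n, f \otimes f_n)$, coincides with $(G, \op, M_1 \times \cdots \times M_n, \phi_1 \times \cdots \times \phi_n, A_1 \otimes_S \cdots \otimes_S A_n, f_1 \otimes \cdots \otimes f_n)$ under the obvious identifications; this is immediate from the associativity and commutativity of Cartesian products.

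For the $S$-homomorphism $L$ linearizing an $n$-linear mapping $\mathcal{M}\colon A_1 \times \cdots \times A_n \to P$ into a cancellative $S$-semimodule $P$, I would again induct, currying off the last argument. For fixed $a_n$, the map $(a_1, \dots, a_{n-1}) \mapsto \mathcal{M}(a_1, \dots, a_{n-1}, a_n)$ is $(n-1)$-linear, so the inductive hypothesis gives a unique $S$-homomorphism $L_{a_n}\colon A \to P$ with $L_{a_n}(a_1 \otimes \cdots \otimes a_{n-1}) = \mathcal{M}(a_1, \dots, a_{n-1}, a_n)$ for all $a_i$. Using the uniqueness clause of the inductive hypothesis together with the linearity of $\mathcal{M}$ in its last argument, one checks $a_n \mapsto L_{a_n}$ is linear, so $B(t, a_n) = L_{a_n}(t)$ is a bilinear mapping $A \times A_n \to P$; since $P$ is cancellative, Theorem~\ref{thm:alpha_for_tensor_product_of_semialgebras} (equivalently the universal property of Definition~\ref{def:tensor_product_of_semimodules}) supplies a unique $S$-homomorphism $L$ on $A \otimes_S A_n \cong A_1 \otimes_S \cdots \otimes_S A_n$ with $L(a_1 \otimes \cdots \otimes a_n) = \mathcal{M}(a_1, \dots, a_n)$ for all $a_i$. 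As both $L$ and the extension by linearity of $L'(u_1 \otimes \cdots \otimes u_n) = \mathcal{M}(u_1, \dots, u_n)$ are $S$-homomorphisms agreeing on the semialgebra basis $W$, they coincide, establishing the final assertion. Applying $L$ to the formula for $\alpha_{\mathcal{G}}(t)$ from the previous paragraph and using that an $S$-homomorphism commutes with finite sums and with $\bbNzero$-fold repetitions (the $c_{t, \veci}$ lie in $\bbNzero$) then yields the claimed identity for $L(\alpha_{\mathcal{G}}(t))$.

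I expect the main obstacle to be the inductive step of the $L$-construction, specifically verifying that $a_n \mapsto L_{a_n}$ is linear: this is where the uniqueness part of the tensor-product universal property must be invoked with care (comparing two $S$-homomorphisms out of $A$ that agree on the basis $U_1 \otimes \cdots \otimes U_{n-1}$), and where cancellativeness of $P$ is indispensable, as is the hypothesis that $S$ is cancellative (which guarantees the relevant tensor products of semimodules carry bases and form semialgebras as in Definition~\ref{def:tensor_product_of_semialgebras}). Everything else — threading the sextuple data through the associativity isomorphism and the Cartesian-product identifications, and the final linearity manipulation — is routine.
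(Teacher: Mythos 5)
Your proposal is correct and follows essentially the same route as the paper, whose proof is simply a one-line appeal to induction using Theorem \ref{thm:alpha_for_tensor_product_of_semialgebras} together with Lemmas \ref{lemma:basis_of_tensor_product_of_semimodules} and \ref{lemma:tensor_product_of_semialgebras}. You have merely filled in the details (the associativity bookkeeping and the currying construction of $L$) that the paper leaves implicit.
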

\begin{proof}
By induction with Theorem \ref{thm:alpha_for_tensor_product_of_semialgebras}, and Lemmas \ref{lemma:basis_of_tensor_product_of_semimodules} and \ref{lemma:tensor_product_of_semialgebras}, we can easily prove the statement.
\end{proof}

Here, we present a systematic framework to design complicated forward algorithms.
\begin{MyFramework}[Framework to Design Complicated Forward Algorithms]
\ 
\begin{enumerate}
\item Identify the underlying abstract computation structure $\left(G, \op\right)$.
\item If computation problem on $\left(G, \op\right)$ at hand is of the form
\begin{equation}
\begin{aligned}
&\sum_{\veci \in \bbNzero^n} c_{t, \veci}
   L_{1}\!\left(
     f_{1}\!\left(
       \left(\phi_{1}\!\left(s_{1}\right)\right)^{i_1} \cdots
       \left(\phi_{1}\!\left(s_{n}\right)\right)^{i_n}
     \right)
   \right) \cdots \\
&\phantom{\sum_{\veci \in \bbNzero^n} c_{t, \veci} \quad}
   L_{m}\!\left(
     f_{m}\!\left(
       \left(\phi_{m}\!\left(s_{1}\right)\right)^{i_1} \cdots
       \left(\phi_{m}\!\left(s_{n}\right)\right)^{i_n}
     \right)
   \right) \; ,
\end{aligned}
\label{eq:complicated_forward_algorithms}
\end{equation}
where $t \in V \cup E$, $\veci = \left(i_1, \dots, i_n\right)$, $S$ is a cancellative commutative semiring, $P$ is a unital commutative associative $S$-semialgebra, and, for every $j \in \left\{1, \dots, m\right\}$, $A_j$ is a unital commutative associative $S$-semialgebra, $\left(G, \op, M_j, \phi_j, A_j, f_j\right)$ specifies the $f_j$-parametrized computation graph $\left(G, \op, A_j, f_j \circ \phi_j\right)$, and $L_j\colon A_j \to P$ is an $S$-homomorphism,
\item then define the $m$-linear mapping $\mathcal{M}\!\left(a_1, \dots, a_m\right) = L_1\!\left(a_1\right) \cdots L_m\!\left(a_m\right)$ for every $a_j \in A_j$, and construct the extension by linearity ${L'}^*\colon A_1 \otimes_S \cdots \otimes_S A_m \to P$ of the mapping $L'\!\left(u_1 \otimes \cdots \otimes u_m\right) = \mathcal{M}\!\left(u_1, \dots, u_m\right)$ for every $u_j \in U_j$, where $U_j$ is a semialgebra basis of $A_j$.
\item Thus, ${L'}^*\!\left(\alpha_{\mathcal{G}}\!\left(t\right)\right)$ is equal to \eqref{eq:complicated_forward_algorithms} by Corollary \ref{corollary:alpha_for_multiple_tensor_product_of_semialgebras}, where $\mathcal{G}$ is the parametrized computation graph $\mathcal{G} = \left(G, \op, A_{1} \times \cdots \times A_{m}, \left(f_1 \otimes \cdots \otimes f_m\right) \circ \left(\phi_1 \times \cdots \phi_m\right)\right)$ specified by the sextuple $\left(G, \op, M_1 \times \cdots \times M_m, \phi_1 \times \cdots \times \phi_m, A_1 \otimes_S \cdots \otimes_S A_m, f_1 \otimes \cdots \otimes f_m\right)$.
\end{enumerate}
\label{framework}
\end{MyFramework}

In this framework, it is important to have a large ``portfolio'' of parametrized computation graphs, including Examples \ref{ex:id_pcscg} through \ref{ex:n_differentiation_pcscg}, because it is a key to identify a computation problem at hand as one that takes on the form \eqref{eq:complicated_forward_algorithms}.

The rest of this subsection demonstrates how to apply this framework to concrete contexts.

\begin{MyExample}[Marginalization and Expectation Problems]
\label{ex:marginalization_and_expectation_problems}
Consider an abstract computation structure $\left(G, \op\right)$, and the following computation problem
\begin{equation}
\sum_{\veci \in \bbNzero^n} c_{t, \veci}
  \left(
    \left(\phi\!\left(s_1\right)\right)^{i_1}
      \cdots \left(\phi\!\left(s_n\right)\right)^{i_n}
  \right)
  \left(
    i_1 \psi\!\left(s_1\right) + \cdots + i_n \psi\!\left(s_n\right)
  \right) \; ,
\label{eq:unnormalized_first_order_moment}
\end{equation}
where $\src\!\left(G\right) = \left\{s_1, \dots, s_n\right\}$, $\veci = \left(i_1, \dots, i_n\right)$, $t \in V \cup E$, $c_{t, \veci} \in \bbNzero$ for every $t$ and $\veci$, $\phi$ and $\psi$ are a function that maps each element of $\src\!\left(G\right)$ to a real number. Both marginalization and expectation problems take on the form \eqref{eq:unnormalized_first_order_moment}. They are important computation problems in many inference procedures.

For example, see Example \ref{ex:cg_for_sequence_labeling}. Let $\phi$ in \eqref{eq:unnormalized_first_order_moment} be equal to $\xi$ defined in Example \ref{ex:cg_for_sequence_labeling}. Then, as explained in Example \ref{ex:cg_for_sequence_labeling}, the factor involving $\phi$ in the summand in \eqref{eq:unnormalized_first_order_moment} is equal to the joint probability of a sequence in HMMs. Therefore, for example, if $\psi$ is defined as the indicator function that returns $1$ on $s_0$ of the computation graph in Fig. \ref{fig:trellis_and_computation_graph} and $0$ on all the other source nodes, then \eqref{eq:unnormalized_first_order_moment} is equal to the marginal probability with respect to the residence in the state $0$ at time $t = 0$. Likewise, marginalization with respect to any state or transition takes on the form \eqref{eq:unnormalized_first_order_moment}.

Expectation problem also takes on the form \eqref{eq:unnormalized_first_order_moment}. The factor involving $\psi$ in the summand in \eqref{eq:unnormalized_first_order_moment} is equal to the ``count'' of a feature defined by $\psi$ in a sequence. Therefore, \eqref{eq:unnormalized_first_order_moment} can be interpreted as the feature expectation (first-order moment) defined by a feature $\psi$ with respect to the probability weight defined by the factor involving $\phi$.

Here, note that formula \eqref{eq:unnormalized_first_order_moment} is of a form that Framework \ref{framework} applies to. For \eqref{eq:complicated_forward_algorithms}, by setting $m = 2$, $f_1 = \id_\bbR$, $L_1 = \id_\bbR$, $f_2 = \mathcal{P}_\bbR^1$, and $L_2\colon \bbR^2 \to \bbR$, where $\mathcal{P}_\bbR^1$ is that defined in Example \ref{ex:monomial_series_pcscg}, and $L_2$ is a function that maps each $(a_0, a_1)$ to $a_1$, \eqref{eq:complicated_forward_algorithms} becomes equal to \eqref{eq:unnormalized_first_order_moment}.

Thus, we systematically construct the instance of forward algorithms that is used to compute \eqref{eq:unnormalized_first_order_moment}. From Framework \ref{framework}, ${L'}^*\!\left(\alpha_{\mathcal{G}}\!\left(t\right)\right)$ is equal to formula \eqref{eq:unnormalized_first_order_moment}, where $\mathcal{G} = \left(G, \op, \bbR \otimes_{\bbR} \BC_\bbR^1, \left(\id_\bbR \otimes \mathcal{P}_\bbR^1\right) \circ \left(\phi \times \psi\right) \right)$ is the computation graph specified by the sextuple $\left(G, \op, (\bbR, \cdot, 1) \times (\bbR, +, 0), \phi \times \psi, \bbR \otimes_{\bbR} \BC_\bbR^1, \id_\bbR \otimes \mathcal{P}_\bbR^1\right)$, $\hat{e}_0 = \left(1, 0\right) \in \bbR^2$, $\hat{e}_1 = \left(0, 1\right) \in \bbR^2$, and ${L'}^*\colon \bbR \otimes_\bbR \BC_\bbR^1 \to \bbR$ is the $\bbR$-homomorphism that is obtained by extension by linearity of the mapping $L'\colon \left\{1 \otimes \hat{e}_i\right\}_{i = 0, 1} \to \bbR$ such that $L'\!\left(1 \otimes \hat{e}_0\right) = 0$ and $L'\!\left(1 \otimes \hat{e}_1\right) = 1$. Note that the resulting algebraic structure is isomorphic to the (first-order) expectation semiring~\citep{eisner2001expectation,li2009first}.

The above-mentioned arguments can be applied to data structures other than trellises. The same applies to acyclic factor graphs (e.g., by repeating the above arguments for Example \ref{ex:cg_for_acyclic_factor_graph}), hypergraphs, a variety of decision diagrams (e.g., by repeating the above arguments for Example \ref{ex:cg_for_zdd}), and so on.
\end{MyExample}

Example \ref{ex:marginalization_and_expectation_problems} may be a little surprising. It says that both marginalization and expectation problems can be worked out only by forward passes on computation graphs. It does not appear to agree with the fact that these computation problems are usually solved by the combination of forward and backward passes (e.g., the ordinary forward-backward algorithm on trellises, the inside-outside algorithm on CYK derivations, the sum-product algorithm on acyclic factor graphs, the EM algorithm on decision diagrams~\citep{ishihata2008propositionalizing}, and so on). However, ``forward-only'' algorithms for these computations has been already proposed for certain kinds of data structures, e.g., the forward-only algorithm for HMMs~\citep{tan1993adaptive,sivaprakasam1995forward,turin1998unidirectional,miklos2005linear,churbanov2008implementing} and the forward algorithm with the (first-order) expectation semiring on hypergraphs~\citep{li2009first}. Example \ref{ex:marginalization_and_expectation_problems} generalizes these algorithms to any computation consisting of a finite number of applications of additions and/or multiplications on various kinds of data structures.

With a slight modification to Example \ref{ex:marginalization_and_expectation_problems}, we also obtain the forward-only algorithm to compute a feature expectation for CRFs on trellises and factor graphs and the log-linear model on various data structures. That is, it is obtained by replacing the factor involving $\phi$ in \eqref{eq:unnormalized_first_order_moment} with $\exp(i_1 \phi(s_1) + \cdots + i_n \phi(s_n))$ and considering the parametrized computation graph specified by the sextuple $(G, \op, (\bbR, +, 0) \times (\bbR, +, 0), \phi \times \psi, \bbR \otimes_\bbR \BC_\bbR^1, \exp \otimes \mathcal{P}_\bbR^1)$.

Now, let us return to the example of the second-order expectation semiring introduced in the beginning of this subsection. We replicate the algebraic structure and computation result of the second-order semiring, but simply as an instance of Framework \ref{framework}.

\begin{MyExample}[Second-order Expectation Semiring]
First observe that formula \eqref{eq:form_by_second_order_expectation_semiring} is an instance of formula \eqref{eq:complicated_forward_algorithms}. In fact, for \eqref{eq:complicated_forward_algorithms}, by setting $m = 3$, $\phi_1 = \mu$, $f_1 = \id_\bbR$, $L_1 = \id_\bbR$, $\phi_2 = \phi$, $f_2 = \mathcal{P}_\bbR^1$, $L_2\!\left(r_0, r_1\right) = r_1$ for every $r_0, r_1 \in \bbR$, $\phi_3 = \psi$, $f_3 = \mathcal{P}_\bbR^1$, $L_3 = L_2$, \eqref{eq:complicated_forward_algorithms} becomes equal to \eqref{eq:form_by_second_order_expectation_semiring}.

Thus, we can systematically construct the instance of forward algorithms that computes \eqref{eq:form_by_second_order_expectation_semiring}. From Framework \ref{framework}, ${L'}^*(\alpha_\calG(t))$ is equal to \eqref{eq:form_by_second_order_expectation_semiring}, where $\calG = (G, \op, \bbR \otimes_\bbR \BC_\bbR^1 \otimes_\bbR \BC_\bbR^1, (\id_\bbR \otimes \mathcal{P}_\bbR^1 \otimes \mathcal{P}_\bbR^1) \circ (\mu \times \phi \times \psi))$ is the parametrized computation graph specified by the sextuple $\bigl(G, \op, (\bbR, \cdot, 1) \times (\bbR, +, 0) \times (\bbR, +, 0), \mu \times \phi \times \psi, \bbR \otimes_\bbR \BC_\bbR^1 \otimes_\bbR \BC_\bbR^1, \id_\bbR \otimes \mathcal{P}_\bbR^1 \otimes \mathcal{P}_\bbR^1\bigr)$, $\hat{e}_0 = \left(1, 0\right) \in \bbR^2$, $\hat{e}_1 = \left(0, 1\right) \in \bbR^2$, and ${L'}^*\colon \bbR \otimes_\bbR \BC_\bbR^1 \otimes_\bbR \BC_\bbR^1 \to \bbR$ is the $\bbR$-homomorphism that is obtained by extension by linearity of the mapping $L'\colon \left\{1 \otimes \hat{e}_i \otimes \hat{e}_j\right\}_{i, j = 0, 1} \to \bbR$ such that $L'\!\left(1 \otimes \hat{e}_i \otimes \hat{e}_j\right) = 1$ if $i = j = 1$ and $L'\!\left(1 \otimes \hat{e}_i \otimes \hat{e}_j\right) = 0$ otherwise.

Next, we replicate the derivation of the algebraic structure of the second-order semiring, that is, Eqs. \eqref{eq:second_order_semiring_adhoc}. The ordinary semiring of $\mathbb{R}$ can be viewed as a commutative unital associative $\mathbb{R}$-semialgebra with a semialgebra basis $\left\{1\right\}$. The structure constants of this semialgebra w.r.t. this basis is $\rho_{1, 1}^{1} = 1$. $\BC_{\mathbb{R}}^{1}$ can be viewed as a commutative unital associative $\mathbb{R}$-semialgebra with a semialgebra basis $\left\{\hat{e}_{0}, \hat{e}_{1}\right\}$. The structure constants of this semialgebra w.r.t. this basis is $\sigma_{\hat{e}_{0}, \hat{e}_{0}}^{\hat{e}_{0}} = 1$, $\sigma_{\hat{e}_{0}, \hat{e}_{0}}^{\hat{e}_{1}} = 0$, $\sigma_{\hat{e}_{0}, \hat{e}_{1}}^{\hat{e}_{0}} = 0$, $\sigma_{\hat{e}_{0}, \hat{e}_{1}}^{\hat{e}_{1}} = 1$, $\sigma_{\hat{e}_{1}, \hat{e}_{0}}^{\hat{e}_{0}} = 0$, $\sigma_{\hat{e}_{1}, \hat{e}_{0}}^{\hat{e}_{1}} = 1$, $\sigma_{\hat{e}_{1}, \hat{e}_{1}}^{\hat{e}_{0}} = 0$, and $\sigma_{\hat{e}_{1}, \hat{e}_{1}}^{\hat{e}_{1}} = 0$. Consider the tensor product of semialgebras $\mathbb{R} \otimes_{\mathbb{R}} \BC_{\mathbb{R}}^{1} \otimes_{\mathbb{R}} \BC_{\mathbb{R}}^{1}$. By noting that $\mathbb{R}$ is a cancellative commutative semiring, $W = \left\{1 \otimes \hat{e}_{i} \otimes \hat{e}_{j}\right\}_{i \in \left\{0, 1\right\},\, j \in \left\{0, 1\right\}}$ is a semialgebra basis of $\mathbb{R} \otimes_{\mathbb{R}} \BC_{\mathbb{R}}^{1} \otimes_{\mathbb{R}} \BC_{\mathbb{R}}^{1}$, so every element of $\mathbb{R} \otimes_{\mathbb{R}} \BC_{\mathbb{R}}^{1} \otimes_{\mathbb{R}} \BC_{\mathbb{R}}^{1}$ can be written in the form $\sum_{i \in \left\{0, 1\right\},\, j \in \left\{0, 1\right\}} \eta_{i, j}\!\left(1 \otimes \hat{e}_{i} \otimes \hat{e}_{j}\right)$ for some $\eta_{i, j} \in \mathbb{R}$. Therefore, the addition of $\mathbb{R} \otimes_{\mathbb{R}} \BC_{\mathbb{R}}^{1} \otimes_{\mathbb{R}} \BC_{\mathbb{R}}^{1}$ is defined by
\begin{equation*}
\begin{aligned}
&\phantom{=}\left(\eta_{0, 0}\!\left(1 \otimes \hat{e}_{0} \otimes \hat{e}_{0}\right) +
                  \eta_{0, 1}\!\left(1 \otimes \hat{e}_{0} \otimes \hat{e}_{1}\right) +
                  \eta_{1, 0}\!\left(1 \otimes \hat{e}_{1} \otimes \hat{e}_{0}\right) +
                  \eta_{1, 1}\!\left(1 \otimes \hat{e}_{1} \otimes \hat{e}_{1}\right)\right) \\
&\phantom{=}\qquad+
            \left(\eta_{0, 0}'\!\left(1 \otimes \hat{e}_{0} \otimes \hat{e}_{0}\right) +
                  \eta_{0, 1}'\!\left(1 \otimes \hat{e}_{0} \otimes \hat{e}_{1}\right) +
                  \eta_{1, 0}'\!\left(1 \otimes \hat{e}_{1} \otimes \hat{e}_{0}\right) +
                  \eta_{1, 1}'\!\left(1 \otimes \hat{e}_{1} \otimes \hat{e}_{1}\right)\right) \\
&=\left(\eta_{0, 0} + \eta_{0, 0}'\right)\!\left(1 \otimes \hat{e}_{0} \otimes \hat{e}_{0}\right) +
  \left(\eta_{0, 1} + \eta_{0, 1}'\right)\!\left(1 \otimes \hat{e}_{0} \otimes \hat{e}_{1}\right) \\
&\phantom{=}\qquad+
  \left(\eta_{1, 0} + \eta_{1, 0}'\right)\!\left(1 \otimes \hat{e}_{1} \otimes \hat{e}_{0}\right) +
  \left(\eta_{1, 1} + \eta_{1, 1}'\right)\!\left(1 \otimes \hat{e}_{1} \otimes \hat{e}_{1}\right) \; .
\end{aligned}
\end{equation*}
The structure constants of $\mathbb{R} \otimes_{\mathbb{R}} \BC_{\mathbb{R}}^{1} \otimes_{\mathbb{R}} \BC_{\mathbb{R}}^{1}$ w.r.t. $W$ are $\tau_{1 \otimes \hat{e}_{i_{1}} \otimes \hat{e}_{j_{1}}, 1 \otimes \hat{e}_{i_{2}} \otimes \hat{e}_{j_{2}}}^{1 \otimes \hat{e}_{i_{3}} \otimes \hat{e}_{j_{3}}} = \rho_{1, 1}^{1} \cdot \sigma_{\hat{e}_{i_{1}}, \hat{e}_{i_{2}}}^{\hat{e}_{i_{3}}} \cdot \sigma_{\hat{e}_{j_{1}}, \hat{e}_{j_{2}}}^{\hat{e}_{j_{3}}} \; \left(i_{1}, i_{2}, i_{3}, j_{1}, j_{2}, j_{3} \in \left\{0, 1\right\}\right)$. Therefore, the multiplication of $\mathbb{R} \otimes_{\mathbb{R}} \BC_{\mathbb{R}}^{1} \otimes_{\mathbb{R}} \BC_{\mathbb{R}}^{1}$ is defined by
\begin{equation*}
\begin{aligned}
&\phantom{=}
 \left(\eta_{0, 0}\!\left(1 \otimes \hat{e}_{0} \otimes \hat{e}_{0}\right) +
       \eta_{0, 1}\!\left(1 \otimes \hat{e}_{0} \otimes \hat{e}_{1}\right) +
       \eta_{1, 0}\!\left(1 \otimes \hat{e}_{1} \otimes \hat{e}_{0}\right) +
       \eta_{1, 1}\!\left(1 \otimes \hat{e}_{1} \otimes \hat{e}_{1}\right)\right) \\
&\phantom{=}\qquad\cdot
 \left(\eta_{0, 0}'\!\left(1 \otimes \hat{e}_{0} \otimes \hat{e}_{0}\right) +
       \eta_{0, 1}'\!\left(1 \otimes \hat{e}_{0} \otimes \hat{e}_{1}\right) +
       \eta_{1, 0}'\!\left(1 \otimes \hat{e}_{1} \otimes \hat{e}_{0}\right) +
       \eta_{1, 1}'\!\left(1 \otimes \hat{e}_{1} \otimes \hat{e}_{1}\right)\right) \\
&=
 \left(\eta_{0, 0} \eta_{0, 0}'\right)\!\left(1 \otimes \hat{e}_{0} \otimes \hat{e}_{0}\right) \\
&\phantom{=}\qquad+
 \left(\eta_{0, 0} \eta_{0, 1}' + \eta_{0, 1} \eta_{0, 0}'\right)\!\left(
   1 \otimes \hat{e}_{0} \otimes \hat{e}_{1}\right) +
 \left(\eta_{0, 0} \eta_{1, 0}' + \eta_{1, 0} \eta_{0, 0}'\right)\!\left(
   1 \otimes \hat{e}_{1} \otimes \hat{e}_{0}\right) \\
&\phantom{=}\qquad+
 \left(\eta_{0, 0} \eta_{1, 1}' + \eta_{1, 1} \eta_{0, 0}' +
       \eta_{0, 1} \eta_{1, 0}' + \eta_{1, 0} \eta_{0, 1}'\right)\!\left(
   1 \otimes \hat{e}_{1} \otimes \hat{e}_{1}
 \right) \; .
\end{aligned}
\end{equation*}
It is obvious that the algebraic structure with the above addition and multiplication is isomorphic to the one with \eqref{eq:second_order_semiring_adhoc}.
\end{MyExample}

\section{Forward-backward Algorithms}
\label{sec:forward_backward_algorithms}

\subsection{Backward Invariants and Forward-backward Algorithms}
\label{subsec:backward_invariants_and_forward_backward_algorithms}

The goal of this section is to answer the question ``what can be calculated by forward-backward algorithms.'' More precisely, computation by forward-backward algorithms is characterized in an algebraic way.

First of all, we articulate the answer to the above question proposed in this paper. Forward-backward algorithms compute $\sum_{v \in \snk(G)} \alpha_\calG(v)$ of the parametrized computation graph $\calG = (G, \op, A \otimes_S \BC_S^1, (f \otimes g) \circ (\phi \times \psi))$ that is specified by the sextuple $(G, \op, M \times N, \phi \times \psi, A \otimes_S \BC_S^1, f \otimes g)$, where $S$ is a cancellative commutative semiring, $A$ is a commutative unital associative $S$-semialgebra, and $\left(G, \op, M, \phi, A, f\right)$ and $\left(G, \op, N, \psi, \BC_S^1, g\right)$ specify the $f$- and $g$-parametrized computation graphs $\left(G, \op, A, f \circ \phi\right)$ and $\left(G, \op, \BC_S^1, g \circ \psi\right)$, respectively. Note that $\BC_{S}^{1}$ is the first-order binomial convolution semiring over $S$ (see Definition \ref{def:binomial_convolution_semiring}). In other words, computation by some instances of forward algorithms can be replaced by that of forward-backward algorithms.

Despite its name ``forward-backward algorithms,'' the proposed characterization subsumes a quite wide range of existing algorithms, including the ordinary forward-backward algorithm on trellises for sequence labeling, the inside-outside algorithm on derivation forests for CYK parsing, the sum-product algorithm on acyclic factor graphs, the EM algorithm on a variety of decision diagrams, the reverse mode of AD, and so on. In addition, not only the standard version of these algorithms but also their variants are formalized in a unified way.

Forward-backward algorithms formalized in this section are characterized by a combination of forward and backward passes on the computation graph specified by the sextuple $\left(G, \op, M \times N, \phi \times \psi, A \otimes_{S} \BC_{S}^{1}, f \otimes g\right)$ instead of a forward-only pass on the computation graph. Since the ``dimension'' of $\BC_{S}^{1}$ is equal to 2, every element of $A \otimes_{S} \BC_{S}^{1}$ can be considered as a 2-dimensional vector-like object by ignoring the $S$-semialgebra structure of $A$. Hereinafter, each of the components is called the \termref{zeroth component} and \termref{first component}, respectively. Roughly speaking, the first step of forward-backward algorithms consists of a forward pass on the computation graph to compute the zeroth component, and the second one a backward pass to compute the first component.

The characterization of forward-backward algorithms described above leads to an immediate but very important consequence. That is, computation by forward-backward algorithms can be also done by forward algorithms. This consequence has been partly referred to in Example \ref{ex:marginalization_and_expectation_problems} and the following paragraph already.

Moreover, we can turn things around. Computation of some instances of forward algorithms can be also done by forward-backward algorithms. One of the most important implications of this fact is the derivation of the reverse mode of AD (a.k.a. back propagation) from the forward mode. Because the forward mode of AD is an instance of forward algorithms (cf. Example \ref{ex:n_differentiation_pcscg}) and can be replaced with the corresponding forward-backward algorithm, the reverse mode of AD can be derived by the formalization in this subsection.

Note that $\BC_S^1$ is a central player in the subsequent part. It cannot be overemphasized that this algebra is important in the formalization of forward-backward algorithms. For example, the following characteristics of forward-backward algorithms are entirely ascribable to the multiplication structures of $\BC_S^1$: forward-backward algorithms can be split into two stages of a forward pass and backward one, and a characteristic ``sum-product'' computation pattern appears in a backward pass.

To formalize forward-backward algorithms, additional definitions, lemmas, and a theorem are introduced below. Before going into the details of the formalization, we give a rough sketch of the flow of the discussion below. First, we introduce two mappings named the \termref{zeroth projection} $\zerothproj$ and \termref{first projection} $\firstproj$ (Definition \ref{def:zeroth_and_first_projections}) to pick up each (the zeroth or first) component of elements of $A \otimes_S \BC_S^1$, and prove some properties of the projections (Lemmas \ref{lemma:zeroth_and_first_projection} and \ref{lemma:linearity_of_projection}) used in the subsequent part. Then, we show that the zeroth component of values of the forward variable of computation graphs can be computed independently from their first component (Lemma \ref{lemma:zeroth_projection_of_alpha}). This computation of the zeroth component constitutes the ``forward part'' of forward-backward algorithms. After that, we introduce definitions and statements to ``reverse'' computation of the first component of values of the forward variable. First, we transform calculations of the first component on multiplication nodes into ``backwardable'' ones (Lemma \ref{lemma:first_projection_of_prod_alpha}). Next, we introduce the backward variable $\beta_{\mathcal{G}}$ (Definition \ref{def:beta}), which is defined by backward recursion and designed such that the sum of the values of a function of $\beta_{\mathcal{G}}$ over $\src\!\left(G\right)$ is equal to the first component of the sum of the values of $\alpha_{\mathcal{G}}$ over $\snk\!\left(G\right)$. The remaining part is devoted to prove the equation $\firstproj\!\left(\sum_{v \in \snk\!\left(G\right)} \alpha_{\mathcal{G}}\!\left(v\right)\right) = \sum_{v \in \src\!\left(G\right)} \text{(a function of $\beta_{\mathcal{G}}\!\left(v\right)$)}$. We introduce terms and concepts of order theory, and consider computation graphs as partially ordered sets by using Definition \ref{def:poset_induced_by_dag}. \termref{Antichain cutsets}, which are defined in Definition \ref{def:antichain_cutset}, play a key role in the proof of interest. Antichain cutsets are ``cut sets that are crossed against the direction of a computation graph.'' We can construct a series of antichain cutsets along the direction of a computation graph. We prove the equation of interest by induction on the series of antichain cutsets. Lemma \ref{lemma:covering_antichain_cutset} constitutes induction steps of the proof, and Theorem \ref{thm:backward_invariants} is the final result. Thereupon, we show that the first component of $\sum_{v \in \snk\!\left(G\right)} \alpha_{\mathcal{G}}\!\left(v\right)$ can be computed by $\beta_{\mathcal{G}}$ in place of $\alpha_{\mathcal{G}}$. Computation of values of $\beta_{\mathcal{G}}$ by backward recursion constitutes the ``backward part'' of forward-backward algorithms.

\begin{MyDefinition}[Zeroth and First Projections of $A \otimes_S \BC_S^1$]
Let $S = (S, +, \cdot, 0_S, 1_S)$ be a cancellative commutative semiring, $A = (A, +, \cdot, 0_A, 1_A)$ a commutative unital associative $S$-semialgebra, $U$ a semialgebra basis of $A$, and $\hat{e}_0 = (1_S, 0_S), \hat{e}_1 = (0_S, 1_S)$. Then $\{\hat{e}_0, \hat{e}_1\}$ is clearly a semialgebra basis of $\BC_S^1$. If every element of $A \otimes_S \BC_S^1$ is written as a linear combination of elements of the basis $\{u \otimes \hat{e}_i\}_{u \in U,\, i \in \{0, 1\}}$, say, $\sum_{u \in U} \sum_{i \in \{0, 1\}} \sigma_{u, i}(u \otimes \hat{e}_i) = \left(\sum_{u \in U} \sigma_{u, 0} u\right) \otimes \hat{e}_0 + \left(\sum_{u \in U} \sigma_{u, 1} u\right) \otimes \hat{e}_1$, $\sigma_{u, i} \in S$ are uniquely determined, and thus so are $\sum_{u \in U} \sigma_{u, 0} u$ and $\sum_{u \in U} \sigma_{u, 1} u$. Therefore, for every element $\sum_{u \in U} \sum_{i \in \{0, 1\}} \sigma_{u, i}(u \otimes \hat{e}_i)$ of $A \otimes_S \BC_S^1$, the two mappings from $A \otimes_S \BC_S^1$ to $A$, the \termref{zeroth projection} $\zerothproj$
\begin{equation*}
\zerothproj\!\left(\sum_{u \in U} \sum_{i \in \{0, 1\}} \sigma_{u, i}(u \otimes \hat{e}_i)\right) = \sum_{u \in U} \sigma_{u, 0} u
\end{equation*}
and the \termref{first projection} $\firstproj$
\begin{equation*}
\firstproj\!\left(\sum_{u \in U} \sum_{i \in \{0, 1\}} \sigma_{u, i}(u \otimes \hat{e}_i)\right) = \sum_{u \in U} \sigma_{u, 1} u
\end{equation*}
can be defined.
\label{def:zeroth_and_first_projections}
\end{MyDefinition}

\begin{MyLemma}
Let $S$, $A$, $U$, $\hat{e}_{0}$, and $\hat{e}_{1}$ be defined as in Definition \ref{def:zeroth_and_first_projections}. Then $x = \mathsf{P}_{0}\!\left(x\right) \otimes \hat{e}_{0} + \mathsf{P}_{1}\!\left(x\right) \otimes \hat{e}_{1}$ holds for every $x \in A \otimes_{S} \BC_{S}^{1}$.
\label{lemma:zeroth_and_first_projection}
\end{MyLemma}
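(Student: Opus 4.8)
The plan is to unwind the definitions: the claimed identity is essentially a restatement of the fact that $\{u \otimes \hat{e}_i\}_{u \in U,\, i \in \{0, 1\}}$ is a basis of $A \otimes_S \BC_S^1$, combined with the bilinearity of the canonical map $\otimes$.

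First I would invoke Lemma \ref{lemma:basis_of_tensor_product_of_semimodules}. Since $S$ is a cancellative commutative semiring, $U$ is a basis of the $S$-semimodule $A$, and $\{\hat{e}_0, \hat{e}_1\}$ is a basis of $\BC_S^1$ (as already noted in Definition \ref{def:zeroth_and_first_projections}), that lemma yields that $\{u \otimes \hat{e}_i\}_{u \in U,\, i \in \{0, 1\}}$ is a basis of $A \otimes_S \BC_S^1$. Hence, for a given $x \in A \otimes_S \BC_S^1$, there are uniquely determined coefficients $\sigma_{u, i} \in S$, of which only finitely many are nonzero, such that $x = \sum_{u \in U} \sum_{i \in \{0, 1\}} \sigma_{u, i}(u \otimes \hat{e}_i)$; and by Definition \ref{def:zeroth_and_first_projections} one has $\zerothproj(x) = \sum_{u \in U} \sigma_{u, 0} u$ and $\firstproj(x) = \sum_{u \in U} \sigma_{u, 1} u$.

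Next I would regroup the double sum by the index $i \in \{0, 1\}$ and push each of the two resulting finite linear combinations through the bilinear map $\otimes$. Because $\otimes$ is additive in its first argument and compatible with scalar multiplication, for each fixed $i$ we get $\sum_{u \in U} \sigma_{u, i}(u \otimes \hat{e}_i) = \bigl(\sum_{u \in U} \sigma_{u, i} u\bigr) \otimes \hat{e}_i$ (the sum being effectively finite, so this is a legitimate application of bilinearity). Summing over $i$ then gives $x = \bigl(\sum_{u \in U} \sigma_{u, 0} u\bigr) \otimes \hat{e}_0 + \bigl(\sum_{u \in U} \sigma_{u, 1} u\bigr) \otimes \hat{e}_1 = \zerothproj(x) \otimes \hat{e}_0 + \firstproj(x) \otimes \hat{e}_1$, which is the claim.

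There is no genuine obstacle here; the only point deserving a moment's care is that all the regrouping and all the uses of bilinearity take place over finite index sets (guaranteed by the ``almost all zero'' clause in the definition of linear combination), so the manipulations remain valid in a semimodule, where subtraction is not available. Everything else is a direct transcription of Definition \ref{def:zeroth_and_first_projections}.
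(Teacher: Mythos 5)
Your proposal is correct and follows essentially the same route as the paper's proof: write $x$ in its unique linear combination over the basis $\left\{u \otimes \hat{e}_i\right\}_{u \in U,\, i \in \{0, 1\}}$, split the sum by $i$, pull the coefficients and the sum over $U$ into the first tensor factor by bilinearity, and recognize the two resulting elements as $\zerothproj\!\left(x\right)$ and $\firstproj\!\left(x\right)$. Your explicit remarks on finiteness of the index sets and on avoiding subtraction are sensible but add nothing beyond what the paper's one-line computation already implicitly uses.
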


\begin{MyLemma}
Let $S$, $A$, and $U$ be defined as in Definition \ref{def:zeroth_and_first_projections}. Then $\mathsf{P}_{0}\!\left(x + y\right) = \mathsf{P}_{0}\!\left(x\right) + \mathsf{P}_{0}\!\left(y\right)$ and $\mathsf{P}_{1}\!\left(x + y\right) = \mathsf{P}_{1}\!\left(x\right) + \mathsf{P}_{1}\!\left(y\right)$ hold for every two elements $x, y$ of $A \otimes_{S} \BC_{S}^{1}$.
\label{lemma:linearity_of_projection}
\end{MyLemma}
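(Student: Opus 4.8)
The plan is to reduce the claim to the single observation that coordinates with respect to a basis add coordinate-wise, combined with the $S$-semimodule axiom $(\sigma+\tau)a=\sigma a+\tau a$. First I would fix $x,y\in A\otimes_{S}\BC_{S}^{1}$. Since $S$ is cancellative, $U$ is a basis of $A$, and $\{\hat{e}_{0},\hat{e}_{1}\}$ is a basis of $\BC_{S}^{1}$, Lemma~\ref{lemma:basis_of_tensor_product_of_semimodules} guarantees that $W=\{u\otimes\hat{e}_{i}\}_{u\in U,\ i\in\{0,1\}}$ is a basis of $A\otimes_{S}\BC_{S}^{1}$. Hence $x$ and $y$ have \emph{unique} expansions $x=\sum_{u\in U}\sum_{i\in\{0,1\}}\sigma_{u,i}(u\otimes\hat{e}_{i})$ and $y=\sum_{u\in U}\sum_{i\in\{0,1\}}\tau_{u,i}(u\otimes\hat{e}_{i})$ with $\sigma_{u,i},\tau_{u,i}\in S$, only finitely many of which are nonzero.

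Next I would add these two expansions in the commutative monoid underlying $A\otimes_{S}\BC_{S}^{1}$. Using commutativity and associativity of $+$ to pair up the $(u\otimes\hat{e}_{i})$-terms, and then the axiom $(\sigma+\tau)a=\sigma a+\tau a$, one gets $x+y=\sum_{u\in U}\sum_{i\in\{0,1\}}(\sigma_{u,i}+\tau_{u,i})(u\otimes\hat{e}_{i})$, a sum whose support is contained in the union of the (finite) supports of $x$ and $y$, hence finite. By uniqueness of the expansion in the basis $W$, the coordinates of $x+y$ with respect to $W$ are exactly $\sigma_{u,i}+\tau_{u,i}$.

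Finally I would apply Definition~\ref{def:zeroth_and_first_projections}: for $j\in\{0,1\}$ this gives $\mathsf{P}_{j}(x+y)=\sum_{u\in U}(\sigma_{u,j}+\tau_{u,j})\,u$. Each sum appearing here ranges effectively over the finite set of $u$ with $\sigma_{u,j}\neq 0_{S}$ or $\tau_{u,j}\neq 0_{S}$, so it is a genuinely finite sum in $A$; expanding term by term via $(\sigma+\tau)u=\sigma u+\tau u$ and regrouping with commutativity and associativity of $+$ in $A$ yields $\mathsf{P}_{j}(x+y)=\sum_{u\in U}\sigma_{u,j}u+\sum_{u\in U}\tau_{u,j}u=\mathsf{P}_{j}(x)+\mathsf{P}_{j}(y)$. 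Specializing to $j=0$ and $j=1$ gives the two asserted identities.

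I do not expect a genuine obstacle; the only points needing a little care are (i) the silent use of cancellativeness of $S$ when invoking Lemma~\ref{lemma:basis_of_tensor_product_of_semimodules}, which is what makes the coordinates (and hence $\mathsf{P}_{0},\mathsf{P}_{1}$) well defined and unique, and (ii) keeping the formal sums honest, i.e.\ checking at each step that the sums manipulated have finite support so that the $\sum$-notation of the excerpt applies. Alternatively, one can bypass even these bookkeeping concerns by observing that $\mathsf{P}_{j}$ is exactly the extension by linearity of the basis map $W\to A$ sending $u\otimes\hat{e}_{j}\mapsto u$ and $u\otimes\hat{e}_{1-j}\mapsto 0_{A}$; since every extension by linearity is an $S$-homomorphism, it is in particular additive, which is precisely what the lemma claims.
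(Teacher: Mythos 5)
Your proof is correct and follows essentially the same route as the paper's: expand $x$ and $y$ in the basis $\left\{u \otimes \hat{e}_i\right\}_{u \in U,\, i \in \{0,1\}}$, add coordinate-wise using $(\sigma + \tau)a = \sigma a + \tau a$, and read off the projections. The extra remarks on finiteness of supports and the alternative view of $\mathsf{P}_j$ as an extension by linearity are sound but not needed beyond what the paper already does.
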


\begin{MyLemma}
Let $S = (S, +, \cdot, 0_S, 1_S)$ be a cancellative commutative semiring, $A = (A, +, \cdot, 0_A,\allowbreak 1_A)$ a commutative unital associative $S$-semialgebra, and $\calG = (G, \op, A \otimes_S \BC_S^1, \xi)$ a computation graph. Further let $\calG'$ be the computation graph $\calG' = (G, \op, A, \zerothproj \circ \xi)$. Then, for every node and arc $t \in V \cup E$ of $G$, we have
\begin{equation}
\zerothproj(\alpha_{\calG}(t)) = \alpha_{\calG'}(t) \; .
\label{eq:zeroth_projection_of_alpha}
\end{equation}
\label{lemma:zeroth_projection_of_alpha}
\end{MyLemma}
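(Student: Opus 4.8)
The plan is to show that the zeroth projection $\zerothproj\colon A \otimes_S \BC_S^1 \to A$ intertwines the addition and multiplication operations out of which forward variables are built, and then to conclude \eqref{eq:zeroth_projection_of_alpha} by a straightforward induction on the finite dag $G$ following the well-founded recursion \eqref{eq:alphadef}. The additive side is already covered by Lemma~\ref{lemma:linearity_of_projection}, so the crux is the multiplicative side.

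First I would prove that $\zerothproj$ is multiplicative, i.e.\ $\zerothproj(x \cdot y) = \zerothproj(x) \cdot \zerothproj(y)$ for all $x, y \in A \otimes_S \BC_S^1$. By Lemma~\ref{lemma:zeroth_and_first_projection} we write $x = \zerothproj(x) \otimes \hat e_0 + \firstproj(x) \otimes \hat e_1$ and similarly for $y$. Expanding $x \cdot y$ with the distributive law of the $S$-semialgebra $A \otimes_S \BC_S^1$ (Lemma~\ref{lemma:tensor_product_of_semialgebras}) and \eqref{eq:multiplication_between_elementary_tensors}, one obtains a combination of the four elementary tensors $(\zerothproj(x)\zerothproj(y)) \otimes (\hat e_0 \diamond \hat e_0)$, $(\zerothproj(x)\firstproj(y)) \otimes (\hat e_0 \diamond \hat e_1)$, $(\firstproj(x)\zerothproj(y)) \otimes (\hat e_1 \diamond \hat e_0)$, and $(\firstproj(x)\firstproj(y)) \otimes (\hat e_1 \diamond \hat e_1)$. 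From Definition~\ref{def:binomial_convolution_semiring} one reads off the multiplication table of $\BC_S^1$: since $\hat e_0 = 1_{\BC_S^1}$ we have $\hat e_0 \diamond \hat e_0 = \hat e_0$ and $\hat e_0 \diamond \hat e_1 = \hat e_1 \diamond \hat e_0 = \hat e_1$, while \eqref{eq:bc_mult} gives $\hat e_1 \diamond \hat e_1 = 0_{\BC_S^1}$. Because $a \otimes 0_{\BC_S^1} = 0$ in the tensor product (bilinearity of $\otimes$ applied to $0_S\,\hat e_0 = 0_{\BC_S^1}$), the last term vanishes, and collecting the two middle terms by bilinearity yields
\[
x \cdot y = \bigl(\zerothproj(x)\zerothproj(y)\bigr) \otimes \hat e_0 + \bigl(\zerothproj(x)\firstproj(y) + \firstproj(x)\zerothproj(y)\bigr) \otimes \hat e_1 .
\]
Reading off the zeroth component (again via Lemma~\ref{lemma:zeroth_and_first_projection}) gives $\zerothproj(x \cdot y) = \zerothproj(x) \cdot \zerothproj(y)$; iterating, $\zerothproj$ also commutes with every finite nonempty product. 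Together with Lemma~\ref{lemma:linearity_of_projection} (which extends to finite sums by induction), $\zerothproj$ therefore satisfies $\zerothproj\bigl(\sum_{e} \alpha_\calG(e)\bigr) = \sum_e \zerothproj(\alpha_\calG(e))$ and $\zerothproj\bigl(\prod_{e} \alpha_\calG(e)\bigr) = \prod_e \zerothproj(\alpha_\calG(e))$.

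With this in hand I would prove \eqref{eq:zeroth_projection_of_alpha} by induction on $G$ along \eqref{eq:alphadef}. For a source node $v$, $\zerothproj(\alpha_\calG(v)) = \zerothproj(\xi(v)) = (\zerothproj \circ \xi)(v) = \alpha_{\calG'}(v)$ by the definition of $\calG'$. For an arc $e$, $\alpha_\calG(e) = \alpha_\calG(\tail(e))$ and $\alpha_{\calG'}(e) = \alpha_{\calG'}(\tail(e))$, so the claim for $e$ follows from that for $\tail(e)$. For an internal node $v$, which necessarily has $E^{-}_{G}(v) \neq \emptyset$, the inductive hypothesis gives $\zerothproj(\alpha_\calG(e)) = \alpha_{\calG'}(e)$ for every $e \in E^{-}_{G}(v)$, and applying the additive intertwining identity when $\op(v) = \text{``$+$''}$ and the multiplicative one when $\op(v) = \text{``$\cdot$''}$ gives $\zerothproj(\alpha_\calG(v)) = \alpha_{\calG'}(v)$.

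The main obstacle is the multiplicativity step: it hinges on correctly computing the structure constants of $\BC_S^1$ — in particular the identity $\hat e_1 \diamond \hat e_1 = 0_{\BC_S^1}$ — and on the vanishing $a \otimes 0_{\BC_S^1} = 0$ in the tensor product; once these are in place, the remaining induction is routine.
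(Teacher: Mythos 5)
Your proposal is correct and follows essentially the same route as the paper: an induction on the finite dag along \eqref{eq:alphadef}, where the source, arc, and addition cases are immediate (the last via Lemma \ref{lemma:linearity_of_projection}) and the only real work is showing that the zeroth component of a product depends only on the zeroth components of the factors, which both you and the paper derive from the facts that $\hat e_0$ is the identity of $\BC_S^1$, $\hat e_1 \diamond \hat e_1 = 0_{\BC_S^1}$, and $a \otimes 0_{\BC_S^1} = 0$. The only difference is organizational — you isolate binary multiplicativity of $\zerothproj$ and iterate, whereas the paper expands the $n$-ary product $\prod_{e \in E_G^-(t)} \alpha_{\calG}(e)$ directly — and your observation that internal nodes always have $E_G^-(v) \neq \emptyset$ correctly disposes of the empty-product worry.
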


\begin{MyLemma}
Let $S$, $A$, and $\mathcal{G} = \left(G, \op, A \otimes_S \BC_S^1, \xi\right)$ be defined as in Lemma \ref{lemma:zeroth_projection_of_alpha}. Then, for every node $v \in V$, we have
\begin{equation*}
\mathsf{P}_{1}\!\left(
  \prod_{e \in E_{G}^{-}\left(v\right)} \alpha_{\mathcal{G}}\!\left(e\right)
\right) =
\sum_{e \in E_{G}^{-}\left(v\right)}
  \mathsf{P}_{1}\!\left(\alpha_{\mathcal{G}}\!\left(e\right)\right) \cdot
  \left(
    \prod_{e' \in E_{G}^{-}\left(v\right) \setminus \left\{e\right\}}
      \mathsf{P}_{0}\!\left(\alpha_{\mathcal{G}}\!\left(e'\right)\right)
  \right) \; .
\end{equation*}
\label{lemma:first_projection_of_prod_alpha}
\end{MyLemma}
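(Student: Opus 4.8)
The plan is to prove this in two steps: first establish, for \emph{arbitrary} $x, y \in A \otimes_S \BC_S^1$, the two ``local'' identities
\begin{equation*}
\zerothproj(x \cdot y) = \zerothproj(x) \cdot \zerothproj(y)
\quad\text{and}\quad
\firstproj(x \cdot y) = \zerothproj(x) \cdot \firstproj(y) + \firstproj(x) \cdot \zerothproj(y) \; ,
\end{equation*}
and then bootstrap from the two-factor case to the product over $E_G^-(v)$ by induction on the number of in-arcs of $v$.

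For the local identities I would use Lemma~\ref{lemma:zeroth_and_first_projection} to write $x = \zerothproj(x) \otimes \hat{e}_0 + \firstproj(x) \otimes \hat{e}_1$ and likewise for $y$, then expand $x \cdot y$ by the distributive law of the semialgebra $A \otimes_S \BC_S^1$ (Lemma~\ref{lemma:tensor_product_of_semialgebras}) together with the elementary-tensor product rule~\eqref{eq:multiplication_between_elementary_tensors}. This produces four terms of the form $(\mathsf{P}_i(x) \cdot \mathsf{P}_j(y)) \otimes (\hat{e}_i \diamond \hat{e}_j)$. Evaluating $\diamond$ directly from Definition~\ref{def:binomial_convolution_semiring} with $n = 1$ gives $\hat{e}_0 \diamond \hat{e}_0 = \hat{e}_0$, $\hat{e}_0 \diamond \hat{e}_1 = \hat{e}_1 \diamond \hat{e}_0 = \hat{e}_1$, and $\hat{e}_1 \diamond \hat{e}_1 = 0_{\BC_S^1}$; bilinearity of $\otimes$ annihilates the last term (since $a \otimes 0_{\BC_S^1} = 0$) and merges the two $\otimes\hat{e}_1$ terms, leaving
\begin{equation*}
x \cdot y = \bigl(\zerothproj(x)\cdot\zerothproj(y)\bigr) \otimes \hat{e}_0 + \bigl(\zerothproj(x)\cdot\firstproj(y) + \firstproj(x)\cdot\zerothproj(y)\bigr) \otimes \hat{e}_1 \; .
\end{equation*}
Because $S$ is cancellative, $\{u \otimes \hat{e}_i\}_{u \in U,\, i \in \{0,1\}}$ is a basis of $A \otimes_S \BC_S^1$ (Lemma~\ref{lemma:basis_of_tensor_product_of_semimodules}), so expressing each coefficient above as an $S$-linear combination over a basis $U$ of $A$ shows the $\otimes\hat{e}_0$- and $\otimes\hat{e}_1$-coefficients are exactly $\zerothproj(x\cdot y)$ and $\firstproj(x\cdot y)$ in the sense of Definition~\ref{def:zeroth_and_first_projections}; this yields both identities.

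For the induction, fix $v$, enumerate $E_G^-(v) = \{e_1, \dots, e_k\}$, and abbreviate $x^{(j)} = \alpha_{\mathcal{G}}(e_j)$; commutativity and associativity of $A \otimes_S \BC_S^1$ (Lemma~\ref{lemma:tensor_product_of_semialgebras}) make $\prod_{j=1}^{k} x^{(j)}$ independent of the enumeration. I would prove, simultaneously and by induction on $k$, that
\begin{equation*}
\zerothproj\Bigl(\textstyle\prod_{j=1}^{k} x^{(j)}\Bigr) = \prod_{j=1}^{k} \zerothproj(x^{(j)})
\quad\text{and}\quad
\firstproj\Bigl(\textstyle\prod_{j=1}^{k} x^{(j)}\Bigr) = \sum_{j=1}^{k} \firstproj(x^{(j)}) \cdot \!\!\!\prod_{l \in \{1,\dots,k\},\, l \neq j}\!\!\! \zerothproj(x^{(l)}) \; .
\end{equation*}
The base case $k = 0$ holds because the empty product is $1_A \otimes \hat{e}_0$, whose projections are $1_A$ and $0_A$, matching the empty product and empty sum on the right; the step $k \to k+1$ splits off the last factor, applies the two local identities with $x = \prod_{j=1}^k x^{(j)}$ and $y = x^{(k+1)}$, substitutes the two inductive hypotheses, and reassembles the resulting Leibniz term into the claimed sum over $j \in \{1,\dots,k+1\}$. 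Taking $k = |E_G^-(v)|$ gives the statement of the lemma.

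I do not anticipate a real obstacle: the whole content is the $2\times 2$ multiplication table of $\BC_S^1$ combined with the distributive and bilinearity laws already in hand. The only point that demands care is notational rather than mathematical --- to legitimately read off the coefficients of $\otimes\hat{e}_0$ and $\otimes\hat{e}_1$ as values of $\zerothproj$ and $\firstproj$ one must route through an explicit basis $U$ of $A$ and invoke the basis property of $\{u \otimes \hat{e}_i\}$ as in Definition~\ref{def:zeroth_and_first_projections}, and one must remember that well-definedness of the product indexed by the \emph{set} $E_G^-(v)$ relies on commutativity and associativity of $A \otimes_S \BC_S^1$.
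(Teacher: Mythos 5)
Your proof is correct and rests on the same core facts as the paper's own argument: the decomposition $x = \zerothproj(x) \otimes \hat{e}_0 + \firstproj(x) \otimes \hat{e}_1$, the elementary-tensor product rule \eqref{eq:multiplication_between_elementary_tensors}, and the multiplication table $\hat{e}_0 \diamond \hat{e}_0 = \hat{e}_0$, $\hat{e}_0 \diamond \hat{e}_1 = \hat{e}_1$, $\hat{e}_1 \diamond \hat{e}_1 = 0_{\BC_S^1}$. The only difference is organizational --- the paper expands the $n$-fold product into $2^n$ elementary-tensor terms in a single step and collapses them, whereas you prove the two-factor Leibniz identity first and then induct on the number of in-arcs --- which if anything handles the multi-factor case more carefully than the paper's ellipsis-laden direct expansion.
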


\begin{MyDefinition}[Backward Variable]
Let $S$, $A$, and $\mathcal{G} = \left(G, \op, A \otimes_{S} \BC_{S}^{1}, \xi\right)$ be defined as in Lemma \ref{lemma:zeroth_projection_of_alpha}. Then the \termref{backward variable} of $\mathcal{G}$, denoted by $\beta_{\mathcal{G}}$, is a mapping from $V \cup E$ to $A$ that is defined by, for every node $v \in V$ and arc $e \in E$,
\begin{equation}
\begin{aligned}
\beta_{\mathcal{G}}\!\left(v\right) &= \begin{cases}
1_{A} & \text{if $v \in \snk\!\left(G\right)$,} \\
\sum_{e' \in E_{G}^{+}\!\left(v\right)} \beta_{\mathcal{G}}\!\left(e'\right) & \text{otherwise,}
\end{cases} \\
\beta_{\mathcal{G}}\!\left(e\right) &= \begin{cases}
\beta_{\mathcal{G}}\!\left(\head\!\left(e\right)\right)
\hspace{90pt}\text{if $\op\!\left(\head\!\left(e\right)\right) = \text{``$+$''}$,} \\
\begin{aligned}
&\beta_{\mathcal{G}}\!\left(\head\!\left(e\right)\right) \cdot
   \left(\prod_{e' \in E_{G}^{-}\!\left(\head\left(e\right)\right) \setminus \left\{e\right\}}
           \mathsf{P}_{0}\!\left(\alpha_{\mathcal{G}}\!\left(e'\right)\right)\right) \\
&\hspace{142pt}\text{otherwise (i.e., $\op\!\left(\head\!\left(e\right)\right) =\> $``$\cdot$'').}
\end{aligned}
\end{cases}
\end{aligned}
\label{eq:betadef}
\end{equation}
\label{def:beta}
\end{MyDefinition}

Note that the mutually recursive definition of $\beta_{\mathcal{G}}$ on $V$ and $E$ in Definition \ref{def:beta} is well-defined since $G$ is a finite dag.

Here, let us note that a characteristic ``sum-product'' computation pattern appears in the computation of values of a backward variable $\beta_{\mathcal{G}}$. By expanding the recursive definition of $\beta_{\mathcal{G}}\!\left(v\right)$ in Eqs. \eqref{eq:betadef} in just one step, we obtain the following form
\begin{equation*}
\beta_{\mathcal{G}}\!\left(v\right) =
\sum_{e \in E_G^+\!\left(v\right)}
\beta_{\mathcal{G}}\!\left(e\right) \cdot \left(
  \prod_{e' \in E_G^-\!\left(\head\!\left(e\right)\right) \setminus \left\{e\right\}}
  \zerothproj\!\left(\alpha_{\mathcal{G}}\!\left(e'\right)\right)
\right) \; .
\end{equation*}
This computation pattern can be commonly found in the ordinary forward-backward algorithm on trellises, the inside-outside algorithm on derivations by CYK parsing, the sum-product algorithm on acyclic factor graphs (as the name suggests), the EM algorithm on a variety of decision diagrams, and even the reverse mode of AD. On trellises for sequence labeling, this sum-product computation pattern degenerates into a ``forward-backward'' computation pattern since, in the computation graph corresponding to a trellis (e.g., Fig. \ref{fig:trellis_and_computation_graph}), multiplication nodes always have exactly two parent nodes and one of the parent nodes is always a source node.

Hereinafter, formalization is given in terms of order theory rather than graph theory, as the former provides more useful terms and concepts for our purpose.

In a partially ordered set (poset hereinafter) $\mathcal{O} = \left(O, \leq\right)$, $x \in O$ is said to be \termref{covered by} $y \in O$ (or $y$ \termref{covers} $x$) if and only if $x < y$ and there does not exist any $z \in O$ such that $x < z$ and $z < y$. Let $x \prec y$ denote that $x$ is covered by $y$. For $x \in O$, the \termref{covering set} of $x$ is defined by $\left\{y \in O \relmiddle| x \prec y\right\}$. Likewise, the \termref{covered set} of $x$ is defined by $\left\{y \in O \relmiddle| y \prec x\right\}$. For every two elements $x, y \in O$, $x$ and $y$ are called \termref{comparable} if and only if either $x \leq y$ or $y \leq x$ holds, and \termref{incomparable} if and only if they are not comparable, i.e., neither $x \leq y$ nor $y \leq x$ holds. Let $X$ be a subset of $O$. $X$ is called a \termref{chain} if and only if every two elements of $X$ are comparable (in other words, $X$ is a totally ordered subset of $O$). $X$ is called an \termref{antichain} if and only if every two distinct elements of $X$ are incomparable. $\upperset X$ is defined by $\upperset X = \left\{y \in O \relmiddle| \exists x \in X \;\; \text{s.t.} \;\; x \leq y \right\}$.\footnote{A subset $U \subseteq O$ of a poset $\left(O, \leq\right)$ is called an \termref{upper set} if and only if $u \leq x$ implies $x \in U$ for every $u \in U$ and $x \in O$, and the notation $\upperset X$ where $X \subseteq O$ usually denotes the \termref{smallest upper set of $O$ containing $X$} in order theory. However, this definition is equivalent to our one (the proof is omitted). We use the latter as it is easier to understand. The same applies to $\lowerset X$ by replacing upper set with \termref{lower set}.} $\lowerset X$ is defined likewise, i.e., $\lowerset X = \left\{y \in O \relmiddle| \exists x \in X \;\; \text{s.t.} \;\; y \leq x\right\}$.

\begin{MyDefinition}[Poset Induced by DAG]
Let $G = \left(V, E\right)$ be a dag. Then a poset $\mathcal{O} = \left(V \cup E, \leq\right)$ is called the \termref{poset induced by $G$} if and only if, $x \leq y$ holds for every $x, y \in V \cup E$ if and only if there exists a directed path $\left(v_0, e_1, v_1, \dots, e_l, v_l\right)$ in $G$ and either of the following conditions holds:
\begin{itemize}
  \item there exist $i, j \in \mathbb{N}_0$ such that $i \leq j$, $x = v_i$, and $y = v_j$,
  \item there exist $i, j \in \mathbb{N}$ such that $i \leq j$, $x = e_i$, and $y = e_j$,
  \item there exist $i \in \mathbb{N}_0$ and $j \in \mathbb{N}$ such that $i < j$, $x = v_i$, and $y = e_j$, or
  \item there exist $i \in \mathbb{N}$ and $j \in \mathbb{N}_0$ such that $i \leq j$, $x = e_i$, and $y = v_j$.
\end{itemize}
In other words, $x \leq y$ holds if and only if $y$ is ``reachable'' from $x$ but ``reachability'' includes one not only among nodes but also among nodes and arcs.
\label{def:poset_induced_by_dag}
\end{MyDefinition}

\begin{MyDefinition}[Antichain Cutset, {\normalfont\citealt{rival1985antichain}}]
Let $\mathcal{O} = \left(O, \leq\right)$ be a poset, and $C$ a subset of $O$. Then $C$ is called an \termref{antichain cutset} of $\mathcal{O}$ if and only if $C$ is an antichain of $\mathcal{O}$ and $C$ intersects every maximal chain of $\mathcal{O}$.
\label{def:antichain_cutset}
\end{MyDefinition}

Let $\mathcal{O}$ be a poset and $\AC\!\left(\mathcal{O}\right)$ the set of all antichain cutsets of $\mathcal{O}$. Then $\left(\AC\!\left(\mathcal{O}\right), \leq\right)$ constitutes a poset~\citep[actually the \termref{lattice of antichain cutsets},][]{higgs1986lattices} if we set
\begin{equation*}
C \leq C' \text{ if and only if, for every } x \in C \text{, there exists } x' \in C' \text{ such that } x \leq x'
\end{equation*}
for every two antichain cutsets $C, C' \in \AC\!\left(\mathcal{O}\right)$.

As the following lemma shows, if a poset is induced by a dag, one can obtain an antichain cutset ``adjacent'' to a given antichain cutset.\footnote{%
  We can construct a poset such that there is an arbitrarily complicated substructure but no antichain cutset between two antichain cutsets. See Fig. 4 (and Fig. 5) in \citet{rival1985antichain} for example. However, in the poset induced by a dag, we can find another antichain cutset right close to every given antichain cutset. This is because the poset induced by a dag falls into a tractable subset of posets, as pointed out in the footnote \ref{footnote:tractable_posets}.}
Moreover, an invariant on forward and backward variables holds for both the adjacent antichain cutsets.

\begin{MyLemma}
Let $S$, $A$, and $\mathcal{G} = \left(G, \op, A \otimes_S \BC_S^1, \xi\right)$ be defined as in Lemma \ref{lemma:zeroth_projection_of_alpha}. Further let $\mathcal{O} = \left(V \cup E, \leq\right)$ be the poset induced by $G$, $\AC\!\left(\mathcal{O}\right)$ the set of all antichain cutsets of $\mathcal{O}$, $\left(\AC\!\left(\mathcal{O}\right), \leq\right)$ the lattice of antichain cutsets of $\mathcal{O}$, and $C \subseteq V \cup E$ an antichain cutset of $\mathcal{O}$ satisfying $C \neq \snk\!\left(G\right)$. Then one can obtain $x \in C$ such that there exists $y \in V \cup E$ satisfying $x \prec y$ and $y$ is a minimal element of $\uparrow\!\!C \setminus C$. Further let $D'$ be the covering set of $x$. Then, for every $d' \in D'$, the covered set of $d'$ is a subset of $C$. Further let $D = \left\{z \in V \cup E \relmiddle| \exists w \in D' \;\; \text{s.t.} \;\; z \prec w\right\}$. Then $C' = \left(C \cup D'\right) \setminus D$ is an antichain cutset of $\mathcal{O}$ satisfying $C < C'$. $C'$ is called the \termref{covering antichain cutset of $C$ with respect to $x$}. Moreover, we have
\begin{equation*}
\sum_{c \in C} \mathsf{P}_{1}\!\left(
                 \alpha_{\mathcal{G}}\!\left(c\right)\right)
                   \cdot \beta_{\mathcal{G}}\!\left(c\right) =
\sum_{c \in C'} \mathsf{P}_{1}\!\left(
                  \alpha_{\mathcal{G}}\!\left(c\right)\right)
                    \cdot \beta_{\mathcal{G}}\!\left(c\right) \; .
\end{equation*}
\label{lemma:covering_antichain_cutset}
\end{MyLemma}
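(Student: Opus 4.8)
The plan is to verify the four assertions of the lemma in turn: the existence of the covering pair $x \prec y$; the claim that the covered sets used in the construction lie in $C$ (so that $C'$ is well defined); the fact that $C'$ is an antichain cutset with $C < C'$; and finally the displayed identity. The structural feature I would lean on throughout is that the poset $\mathcal{O}$ induced by a dag (Definition \ref{def:poset_induced_by_dag}) has a very transparent covering relation: the covering pairs are exactly $v \prec e$ with $\tail(e) = v$, and $e \prec w$ with $\head(e) = w$. Consequently a node is maximal in $\mathcal{O}$ iff it is a sink, an arc is never maximal, each arc has a unique cover (its head), and every maximal chain of $\mathcal{O}$ is an alternating source-to-sink sequence $v_0, e_1, v_1, \dots$ in which the element following a non-sink node is one of its out-arcs and the element following an arc $e$ is $\head(e)$.

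First, for the existence of $x \prec y$: since $V \cup E$ is finite it suffices to show $\upperset C \setminus C \neq \emptyset$, after which a minimal element $y$ of this set exists. If $\upperset C \setminus C$ were empty, $C$ would be an upper set; since $C$ meets every maximal chain and every maximal chain terminates at a sink, every sink would lie in $\upperset C = C$, and as $C$ is an antichain this forces $C = \snk(G)$, contrary to hypothesis. Now fix $y$ minimal in $\upperset C \setminus C$ and choose $x \in C$ with $x \le y$ (possible as $y \in \upperset C$); since $y \notin C$ we get $x \ne y$, hence $x < y$. If some $z$ satisfied $x < z < y$, then $z \in \upperset C$ (as $x \in C$, $x \le z$); $z \in C$ would contradict that $C$ is an antichain, while $z \notin C$ would contradict minimality of $y$. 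Hence $x \prec y$.

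Next I would split on whether $x$ is a node or an arc and, in the same stroke, check the covered-set claim and that $C'$ is an antichain cutset with $C < C'$. If $x$ is a node $v$, then $D' = E^{+}_{G}(v)$, the covered set of each $e' \in D'$ is the singleton $\{v\} \subseteq C$, so $D = \{v\}$ and $C' = (C \setminus \{v\}) \cup E^{+}_{G}(v)$; note no out-arc of $v$ lies in $C$, since it would be comparable to $v \in C$. If $x$ is an arc $e$, then $y = \head(e)$, $D' = \{y\}$, and the covered set of $y$ is $E^{-}_{G}(y)$; here the technical heart of the lemma is to show $E^{-}_{G}(y) \subseteq C$. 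I would argue: let $z$ be any in-arc of $y$, so $z \prec y$; if $z \notin C$, minimality of $y$ gives $z \notin \upperset C$; take a maximal chain through the pair $z \prec y$ and let $c$ be a point where it meets $C$; comparability on the chain together with $z \notin \upperset C$ forces $c > z$, and since the only cover of the arc $z$ is $y$ this gives $c \ge y$; $c = y$ is impossible ($y \notin C$), so $c > y > x$ with $c, x \in C$, contradicting the antichain property; hence $z \in C$, and so $D = E^{-}_{G}(y) \subseteq C$ and $C' = (C \setminus E^{-}_{G}(y)) \cup \{y\}$. In both cases, $C'$ is an antichain by a handful of comparability checks: the newly added elements are pairwise incomparable (parallel out-arcs of $v$, or the singleton $\{y\}$), and a retained element comparable to a newly added one would, via the covering structure and, in the arc case, the inclusion $E^{-}_{G}(y) \subseteq C$, produce a comparable pair inside $C$. $C'$ is a cutset: any maximal chain meets $C$ at some $c$; if $c \notin D$ then $c \in C'$, and if $c \in D$ then the successor of $c$ along the chain lies in $D' \subseteq C'$ (an out-arc of $v$, or $\head(c) = y$). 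Finally $C \le C'$ since every $c \in C$ is retained or covered by an element of $D'$, and $C \ne C'$ since $x$ (resp.\ $v$) is removed and not re-added; thus $C < C'$.

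It remains to prove the invariant. Because $D \subseteq C$ and $D' \cap C = \emptyset$, we have disjoint decompositions $C = (C \setminus D) \sqcup D$ and $C' = (C \setminus D) \sqcup D'$, so it suffices to show $\sum_{c \in D} \mathsf{P}_{1}(\alpha_{\mathcal{G}}(c)) \cdot \beta_{\mathcal{G}}(c) = \sum_{c \in D'} \mathsf{P}_{1}(\alpha_{\mathcal{G}}(c)) \cdot \beta_{\mathcal{G}}(c)$. If $x$ is a node $v$, this reads $\mathsf{P}_{1}(\alpha_{\mathcal{G}}(v)) \cdot \beta_{\mathcal{G}}(v) = \sum_{e' \in E^{+}_{G}(v)} \mathsf{P}_{1}(\alpha_{\mathcal{G}}(e')) \cdot \beta_{\mathcal{G}}(e')$, which follows from $\alpha_{\mathcal{G}}(e') = \alpha_{\mathcal{G}}(v)$ (Definition \ref{def:alpha}), $\beta_{\mathcal{G}}(v) = \sum_{e' \in E^{+}_{G}(v)} \beta_{\mathcal{G}}(e')$ (Definition \ref{def:beta}, since $v \notin \snk(G)$), and distributivity in $A$. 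If $x$ is an arc $e$, it reads $\sum_{e'' \in E^{-}_{G}(y)} \mathsf{P}_{1}(\alpha_{\mathcal{G}}(e'')) \cdot \beta_{\mathcal{G}}(e'') = \mathsf{P}_{1}(\alpha_{\mathcal{G}}(y)) \cdot \beta_{\mathcal{G}}(y)$, with $\op(y)$ defined because $y$ has an in-arc. When $\op(y) = \text{``$+$''}$ I would use linearity of $\mathsf{P}_{1}$ (Lemma \ref{lemma:linearity_of_projection}), $\alpha_{\mathcal{G}}(y) = \sum_{e''} \alpha_{\mathcal{G}}(e'')$, and $\beta_{\mathcal{G}}(e'') = \beta_{\mathcal{G}}(y)$; when $\op(y) = \text{``$\cdot$''}$ I would use Lemma \ref{lemma:first_projection_of_prod_alpha} to expand $\mathsf{P}_{1}(\alpha_{\mathcal{G}}(y))$, the arc clause of Definition \ref{def:beta} giving $\beta_{\mathcal{G}}(e'') = \beta_{\mathcal{G}}(y) \cdot \prod_{e''' \in E^{-}_{G}(y) \setminus \{e''\}} \mathsf{P}_{0}(\alpha_{\mathcal{G}}(e'''))$, and commutativity and distributivity in $A$ to match the two sides. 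The main obstacle is the covered-set claim $E^{-}_{G}(y) \subseteq C$ in the third paragraph, which is exactly where the minimality of $y$ and the ``edge-induced'' shape of $\mathcal{O}$ are genuinely used; everything else is routine comparability bookkeeping in a well-behaved poset together with distributive-law manipulations resting on the earlier lemmas.
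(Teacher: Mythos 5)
Your proof is correct and follows essentially the same route as the paper's: a minimal element $y$ of $\upperset C \setminus C$, the node/arc case split with the key inclusion $E^{-}_{G}(\head(x)) \subseteq C$ established by the same antichain/minimality contradictions, and the same local $\alpha$--$\beta$ recursion identities (linearity of $\firstproj$ for ``$+$'' nodes, Lemma \ref{lemma:first_projection_of_prod_alpha} for ``$\cdot$'' nodes) for the displayed invariant. Your one departure is presentational but welcome: decomposing $C = (C\setminus D)\sqcup D$ and $C' = (C\setminus D)\sqcup D'$ and equating the sums over $D$ and $D'$ avoids the formal subtraction of semimodule elements used in the paper's final step, which is cleaner in a setting without additive inverses.
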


Here, we present the main theorem of this subsection. Intuitively, the theorem says that computation of the first component of $\sum_{v \in \snk\!\left(G\right)} \alpha_{\mathcal{G}}\!\left(v\right)$ of $\mathcal{G} = \left(G, \op, A \otimes_S \BC_S^1, \xi\right)$ can be made in terms of the backward variable. This result guarantees the correctness of computation of the first component of the values of the forward variable by a backward pass in forward-backward algorithms.

\begin{MyTheorem}[Backward Invariants]
Let $S$, $A$, $\mathcal{G} = \left(G, \op, A \otimes_S BC_S^1, \xi\right)$, and $\mathcal{O}$ be defined as in Lemma \ref{lemma:covering_antichain_cutset}. Then, for every two antichain cutsets $C, C' \in \AC\!\left(\mathcal{O}\right)$,
\begin{equation*}
\sum_{c \in C} \mathsf{P}_{1}\!\left(\alpha_{\mathcal{G}}\!\left(c\right)\right)
                 \cdot \beta_{\mathcal{G}}\!\left(c\right) =
\sum_{c \in C'} \mathsf{P}_{1}\!\left(\alpha_{\mathcal{G}}\!\left(c\right)\right)
                  \cdot \beta_{\mathcal{G}}\!\left(c\right) \; .
\end{equation*}
Especially,
\begin{equation*}
\mathsf{P}_{1}\!\left(\sum_{v \in \snk\!\left(G\right)} \alpha_{\mathcal{G}}\!\left(v\right)\right) =
\sum_{v \in \src\!\left(G\right)} \mathsf{P}_{1}\!\left(\xi\!\left(v\right)\right)
                                    \cdot \beta_{\mathcal{G}}\!\left(v\right) \; .
\end{equation*}
\label{thm:backward_invariants}
\end{MyTheorem}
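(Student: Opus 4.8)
The plan is to prove the general statement first, by showing that the map
\[
\Phi\colon \AC(\mathcal{O}) \to A, \qquad \Phi(C) = \sum_{c \in C} \mathsf{P}_{1}\!\left(\alpha_{\mathcal{G}}(c)\right) \cdot \beta_{\mathcal{G}}(c),
\]
is constant on $\AC(\mathcal{O})$, and then to obtain the ``especially'' statement by evaluating $\Phi$ at the two extreme antichain cutsets $\src(G)$ and $\snk(G)$. As preliminary observations about the poset $\mathcal{O} = (V \cup E, \leq)$ induced by $G$: since every arc $e$ satisfies $\tail(e) < e < \head(e)$, the minimal elements of $\mathcal{O}$ are precisely the source nodes of $G$ and the maximal elements are precisely the sink nodes; hence $\src(G)$ and $\snk(G)$ are antichain cutsets, and in the lattice $(\AC(\mathcal{O}), \leq)$ they are respectively the least and the greatest element.

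The constancy of $\Phi$ is then an induction fed by Lemma \ref{lemma:covering_antichain_cutset}. For any $C \in \AC(\mathcal{O})$ with $C \neq \snk(G)$, that lemma hands us a covering antichain cutset $C'$ with $C < C'$ and $\Phi(C) = \Phi(C')$. Since $V \cup E$ is finite, $\AC(\mathcal{O})$ is finite and $<$ is a strict partial order on it, so repeatedly passing from $C$ to such a $C'$ produces a finite strictly increasing chain $C = C_0 < C_1 < \dots < C_k$ that admits no further extension; by Lemma \ref{lemma:covering_antichain_cutset} such a chain can fail to extend only when its top is $\snk(G)$, so $C_k = \snk(G)$. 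Carrying the equality $\Phi(C_j) = \Phi(C_{j+1})$ along the chain gives $\Phi(C) = \Phi(\snk(G))$, and since this holds for every $C$ we conclude $\Phi(C) = \Phi(\snk(G)) = \Phi(C')$ for all $C, C' \in \AC(\mathcal{O})$. One could equivalently frame this as an induction on $\left|(V \cup E) \setminus \lowerset C\right|$, a quantity that strictly decreases when $C$ is replaced by a covering antichain cutset and vanishes only at $C = \snk(G)$.

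For the ``especially'' part I would specialize the general statement to $C = \src(G)$ and $C' = \snk(G)$. On the $\snk(G)$ side, $\beta_{\mathcal{G}}(v) = 1_{A}$ for every sink node $v$ by Definition \ref{def:beta}, so $\Phi(\snk(G)) = \sum_{v \in \snk(G)} \mathsf{P}_{1}\!\left(\alpha_{\mathcal{G}}(v)\right)$, which equals $\mathsf{P}_{1}\!\left(\sum_{v \in \snk(G)} \alpha_{\mathcal{G}}(v)\right)$ by the additivity of $\mathsf{P}_{1}$ (Lemma \ref{lemma:linearity_of_projection}), extended to the finite sum over $\snk(G)$. On the $\src(G)$ side, $\alpha_{\mathcal{G}}(v) = \xi(v)$ for every source node $v$ by Definition \ref{def:alpha}, so $\Phi(\src(G)) = \sum_{v \in \src(G)} \mathsf{P}_{1}\!\left(\xi(v)\right) \cdot \beta_{\mathcal{G}}(v)$. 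Equating the two evaluations of $\Phi$ yields the displayed identity.

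I expect essentially all of the real work to live in Lemma \ref{lemma:covering_antichain_cutset} rather than in this theorem. The delicate points that remain here are only (i) making the induction airtight — in particular, that iterating the ``covering antichain cutset'' construction terminates, and terminates exactly at $\snk(G)$, the termination resting on finiteness of $\AC(\mathcal{O})$ and the existence of a next step at each non-top stage being exactly what Lemma \ref{lemma:covering_antichain_cutset} provides for the tractable posets of footnote \ref{footnote:tractable_posets}; and (ii) the elementary verification that $\src(G)$ and $\snk(G)$ are antichain cutsets sitting at the bottom and top of the lattice. I do not anticipate any further obstacle.
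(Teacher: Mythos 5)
Your proposal is correct and follows essentially the same route as the paper's own proof: iterate the covering-antichain-cutset construction of Lemma \ref{lemma:covering_antichain_cutset} to build a strictly ascending chain from an arbitrary $C$ up to $\snk\!\left(G\right)$, carry the invariant along the chain, and then specialize to $C = \src\!\left(G\right)$, $C' = \snk\!\left(G\right)$ using $\alpha_{\mathcal{G}} = \xi$ on sources, $\beta_{\mathcal{G}} = 1_{A}$ on sinks, and the additivity of $\mathsf{P}_{1}$ from Lemma \ref{lemma:linearity_of_projection}. Your added remarks on termination (finiteness of $\AC\!\left(\mathcal{O}\right)$, the decreasing quantity $\left|\left(V \cup E\right) \setminus \lowerset C\right|$) only make explicit what the paper leaves to the reader.
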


Now, the pseudo-codes of forward and forward-backward algorithms are shown in Algorithms \ref{alg:forward} and \ref{alg:forward_backward}, respectively. Forward algorithms just compute values of the forward variable of a computation graph. Although this variable has been constructively defined in Definition \ref{def:alpha} already, the pseudo-code of forward algorithms is also presented here in order to discuss two important problems, ``reduction of space complexity'' and ``scheduling'' in forward algorithms.

Antichain cutsets of the poset induced by a computation graph are a key to reduce space complexity in forward and forward-backward algorithms because an antichain cutset is a minimal subset of $V \cup E$ where values of the forward (resp. backward) variable should be stored at each point in time during execution of a forward (resp. backward) pass. The reason is explained as follows. If values of the forward (resp. backward) variable are stored only in a subset of $V \cup E$ that some maximal chain of the poset induced by $G$ does not intersect, values of the forward (resp. backward) variable cannot be computed on some nodes or arcs. If values of the forward (resp. backward) variable are stored in a subset of $V \cup E$ that is not an antichain of the poset induced by $G$, some of them are not necessary or can be obtained later during further computation.

\begin{algorithm}[t]
\caption{Forward Algorithm}
\begin{algorithmic}[1]
\Require $G = \left(V, E\right)$ is a finite dag.
         $\op$ is a mapping from $\src\!\left(G\right)$
         to $\left\{\text{``$+$''}, \text{``$\cdot$''}\right\}$.
         $S$ is a commutative semiring.
         $\xi$ is a mapping from $\src\!\left(G\right)$ to $S$.
         $\left(V \cup E, <\right)$ is the poset induced by $G$.
\Ensure $\mathcal{G} = \left(G, \op, S, \xi\right)$ then
        $\alpha\!\left(t\right) = \alpha_{\mathcal{G}}\!\left(t\right)$.
\Procedure{Forward}{$G$, $\op$, $S$, $\xi$}
\ForAll{$v \in \src\!\left(G\right)$}
  \State $\alpha\!\left(v\right) \gets \xi\!\left(v\right)$
\EndFor
\State $C \gets \src\!\left(G\right)$
\While{$C \neq \snk\!\left(G\right)$}
  \State Find $x \in C$ s.t. $\exists y \in\; \uparrow\!\!C \setminus C.\, x \prec y$
         and $y$ is a minimal element of $\uparrow\!\!C \setminus C$.
  \State $D' \gets \left\{d' \in V \cup E \relmiddle| x \prec d'\right\}$
  \ForAll{$t \in D'$}
    \If{$t \in V$}
      \If{$\op\!\left(t\right) = \text{``$+$''}$}
        \State $\alpha\!\left(t\right) \gets
                \sum_{e \in E_{G}^{-}\!\left(t\right)} \alpha\!\left(e\right)$
      \Else
        \State $\alpha\!\left(t\right) \gets
                \prod_{e \in E_{G}^{-}\!\left(t\right)} \alpha\!\left(e\right)$
      \EndIf
    \Else
      \State $\alpha\!\left(t\right) \gets \alpha\!\left(\tail\!\left(t\right)\right)$
    \EndIf
  \EndFor
  \State $D \gets \left\{d \in V \cup E \relmiddle| \exists d' \in D'.\, d \prec d'\right\}$
  \State $C \gets \left(C \cup D'\right) \setminus D$
\EndWhile
\EndProcedure
\end{algorithmic}
\label{alg:forward}
\end{algorithm}
Therefore, Algorithm \ref{alg:forward} is described on an antichain cutset to antichain cutset basis. In the {\normalfont\textbf{while}} loop from Line 6 through Line 22, values of the forward variable are computed with respect to each antichain cutset. Lemma \ref{lemma:covering_antichain_cutset} guarantees that each update of $C$ in Line 21 preserves the loop invariant that $C$ is an antichain cutset. If one is interested in the values of the forward variable only on $\snk\!\left(G\right)$ and does not execute forward-backward algorithms that require intermediate values of the forward variable, the values of the forward variable on the subset $D$ of the old antichain cutset in Line 20 are no longer necessary and can be discarded. By doing so, one can minimalize space complexity of forward algorithms for such a case.%
\footnote{
  ``Minimalization'' of space complexity described here should be distinguished from the problem of ``minimization'' of space complexity. The problem of ``minimization'' of space complexity of forward algorithms for such a case described here could be formalized as the minimization problem of the max number of values of the forward variable that are simultaneously stored. In short, this problem is formalized as a \termref{(one-shot) pebble game}. It is easy to show that this minimization problem is NP-hard for the class of arbitrary computation graphs~\citep[cf.][]{wu2014inapproximability} but the details are omitted because it is out of the scope of this paper.}

\begin{algorithm}[t]
\caption{Forward-Backward Algorithm}
\begin{algorithmic}[1]
\Require $G = \left(V, E\right)$ is a finite dag.
         $\op$ is a mapping from $\src\!\left(G\right)$
         to $\left\{\text{``$+$''}, \text{``$\cdot$''}\right\}$.
         $S$ is a cancellative commutative semiring.
         $A = \left(A, +, \cdot, 0_{A}, 1_{A}\right)$ is
         a commutative unital associative $S$-semialgebra.
         $\xi$ is a mapping from $\src\!\left(G\right)$ to $A \otimes_{S} \BC_{S}^{1}$.
         $\left(V \cup E, <\right)$ is the poset induced by $G$.
\Ensure $\sum_{v \in \snk\!\left(G\right)} \alpha_{\mathcal{G}}\!\left(v\right) =
           \sum_{v \in \snk\!\left(G\right)} \alpha\!\left(v\right) \otimes \hat{e}_{0} +
            \sum_{v \in \src\!\left(G\right)} \left(
              \mathsf{P}_{1}\!\left(\xi\!\left(v\right)\right) \cdot \beta\!\left(v\right)
            \right) \otimes \hat{e}_{1}$
        where $\mathcal{G} = \left(G, \op, A \otimes_{S} \BC_{S}^{1}, \xi\right)$.
\Procedure{Forward-Backward}{$G$, $\op$, $A$, $\xi$}
\State \Call{Forward}{$G, \op, A, \mathsf{P}_{0} \circ \xi$}
       to compute $\alpha\!\left(t\right)$ for every $t \in V \cup E$.
\ForAll{$t \in \snk\!\left(G\right)$}
  \State $\beta\!\left(v\right) \gets 1_{A}$
\EndFor
\State $C \gets \snk\!\left(G\right)$
\While{$C \neq \src\!\left(G\right)$}
  \State Find $x \in C$ s.t. $\exists y \in\; \downarrow\!\!C \setminus C.\, y \prec x$
         and $y$ is a maximal element of $\downarrow\!\!C \setminus C$.
  \State $D' \gets \left\{d' \in V \cup E \relmiddle| d' \prec x \right\}$
  \ForAll{$t \in D'$}
    \If{$t \in V$}
      \State $\beta\!\left(t\right) \gets \sum_{e \in E_{G}^{+}\left(v\right)} \beta\!\left(e\right)$
    \Else
      \If{$\op\!\left(\head\!\left(t\right)\right) = \text{``$+$''}$}
        \State $\beta\!\left(t\right) \gets \beta\!\left(\head\!\left(t\right)\right)$
      \Else
        \State
          $\beta\!\left(t\right) \gets
             \beta\!\left(\head\!\left(t\right)\right) \cdot \left(
               \prod_{e' \in E_{G}^{-}\!\left(\head\!\left(t\right)\right) \setminus \left\{t\right\}}
                 \alpha\!\left(e'\right)
             \right)$
      \EndIf
    \EndIf
  \EndFor
  \State $D \gets \left\{d \in V \cup E \relmiddle| \exists d' \in D.\, d' \prec d \right\}$
  \State $C \gets \left(C \cup D'\right) \setminus D$
\EndWhile
\EndProcedure
\end{algorithmic}
\label{alg:forward_backward}
\end{algorithm}

Moreover, Algorithm \ref{alg:forward} is ``properly scheduled.'' That is, for every $d' \in D'$ where $D'$ is constructed in Line 8, the value of the forward variable on every element covered by $d'$ is guaranteed to be computed before the value of the forward variable on $d'$ is computed. This fact is an immediate consequence of Lemma \ref{lemma:covering_antichain_cutset}.

One may imagine computation of values of the forward variable that is not antichain-cutset-to-antichain-cutset-basis. That is, one could always skip storing the values of the forward variable on every arc and store them as the possibly partial sum or product in every internal node. However, such procedure can be also modeled on an antichain cutset to antichain cutset basis by imposing the restriction that each internal node of computation graphs is binary.

Algorithm \ref{alg:forward} assumes the existence of an oracle in Line 7 in order for the algorithms to be antichain-cutset-to-antichain-cutset-basis. The oracle is required to return, for a given antichain cutset $C$, $x \in C$ such that there exists $y \in\; \uparrow\!\!C \setminus C$ and $x \prec y$. However, this assumption is not strong because, if an order among nodes and/or arcs required for computation of values of the forward variable to be properly scheduled is already known (e.g., a topological order among nodes), then it can be also used to construct the oracle without any remarkably additional computation cost.

Algorithm \ref{alg:forward_backward} is the pseudo-code of forward-backward algorithms. Note that the forward variable $\alpha\!\left(t\right)$ in Algorithm \ref{alg:forward_backward} is that of the computation graph $\left(G, \op, A, \mathsf{P}_{0} \circ \xi\right)$, not $\left(G, \op, A \otimes_{S} \BC_{S}^{1}, \xi\right)$. Computation of the backward variable from Line 11 through 19 is a straightforward implementation of Definition \ref{def:beta} except that computation in Line 17 uses Lemma \ref{lemma:zeroth_projection_of_alpha}. Algorithm \ref{alg:forward_backward} is also described on an antichain cutset to antichain cutset basis. Therefore the discussion about the problems of space complexity and scheduling for Algorithm \ref{alg:forward} also applies to Algorithm \ref{alg:forward_backward} in a straightforward way. Although we omit the proof that the update of antichain cutsets in Line 22 is correct, it can be trivially shown by the order dual of Lemma \ref{lemma:covering_antichain_cutset}. Lemma \ref{lemma:zeroth_and_first_projection} and Theorem \ref{thm:backward_invariants} guarantee that the postcondition holds.

It is obvious that we can compute $\sum_{v \in \snk(G)} \alpha_\calG(v)$ where $\calG = (G, \op, A \otimes_S \BC_S^1, \xi)$ by calling either $\mathsc{Forward}(G, \op, A \otimes_S \BC_S^1, \xi)$ or $\mathsc{Forward-Backward}(G, \op, A, \xi)$. In other words, we can compute $\sum_{v \in \snk(G)} \alpha_\calG(v)$ by $\mathsc{Forward}$ in place of $\mathsc{Forward-Backward}$. Therefore, when we already know details of an instance of forward-backward algorithms to compute a formula at hand, by identifying corresponding $G$, $\op$, $A$, and $\xi$, we can systematically transform the instance of forward-backward algorithms to the corresponding instance of forward-only algorithms. For example, by setting $G$, $\op$, $A$, and $\xi$ to be equal to the computation structure underlying the Baum-Welch algorithm for HMMs (cf. Examples \ref{ex:cg_for_sequence_labeling} and \ref{ex:marginalization_and_expectation_problems}), we can immediately derive the forward-only algorithm for the Baum-Welch algorithm on HMMs.

\subsection{Conditions Favoring Forward-backward Algorithms}
\label{subsec:conditions_favoring_forward_backward_algorithms}

In the previous subsection, it was concluded that what is computed by forward-backward algorithms can be also always computed by forward algorithms. This conclusion naturally raises the following question: "Why or when forward-backward algorithms are necessary?" This question is answered in this subsection.

Actually, there is a set of conditions to make forward-backward algorithms much more favorable than forward algorithms. In addition, the conditions are met in many usual machine learning tasks. We present them below.

\begin{MyCondition}[Conditions Favoring Forward-backward Algorithms]
Let $n$ be a non-negative integer, $S$ a cancellative semiring, $\left(G, \op, M, \phi, A, f\right)$ specify the $f$-parametrized computation graph, and $\left(G, \op, N_i, \psi_i, \BC_S^1, g_i\right) \; \left(i = 1, \dots, n\right)$ specify the $g_i$-parametrized computation graph, respectively. Then the set of the following conditions is called the \termref{conditions favoring forward-backward algorithms}:
\begin{itemize}
  \item $\sum_{v \in \snk(G)} \alpha_{\calG_i}(v)$ is required to be computed for $i = 1, \dots, n$, where $\calG_i$ is the $(f \otimes g_i)$-parametrized computation graph specified by the sextuple $(G, \op, M \times N_i, \phi \times \psi_i, A \otimes_S \BC_S^1, f \otimes g_i)$, and
  \item the values of $\mathsf{P}_{0}\!\left(\left(\left(f \otimes g_{i}\right) \circ \left(\phi \times \psi_{i}\right)\right)\!\left(v\right)\right)$ are independent of $i$ for every $v \in \src\!\left(G\right)$.
\end{itemize}
\label{cond:conditions_favoring_forward_backward_algorithms}
\end{MyCondition}

The values of $\alpha$ computed in Algorithms \ref{alg:forward} depend only on $G$, $\op$, $S$, and $\mathsf{P}_{0}\!\left(\xi\!\left(v\right)\right)$ where $v \in \src\!\left(G\right)$. Therefore, when Condition \ref{cond:conditions_favoring_forward_backward_algorithms} is met, we need to execute Algorithm \ref{alg:forward} only once no matter how many computation graphs are required to be computed. Likewise, because the values of $\beta$ computed in Algorithm \ref{alg:forward_backward} depend only on $G$, $\op$, $A$, and $\alpha$ computed by Algorithm \ref{alg:forward}, when Condition \ref{cond:conditions_favoring_forward_backward_algorithms} is met, we need to compute $\beta$ in Algorithm \ref{alg:forward_backward} only once no matter how many computation graphs are required to be computed. Moreover, the values of $\sum_{v \in \snk\!\left(G\right)} \alpha_{\mathcal{G}}\!\left(v\right)$ depend only on the values of $\alpha\!\left(v\right)$ on every $v \in \snk\!\left(G\right)$, the values of $\beta\!\left(v\right)$ on every $v \in \src\!\left(G\right)$, and the values of $\mathsf{P}_{1}\!\left(\xi\!\left(v\right)\right)$ on every $v \in \src\!\left(G\right)$. Therefore, in order to compute the values of $\sum_{v \in \snk\!\left(G\right)} \alpha_{\mathcal{G}_{i}}\!\left(v\right)$ of $n$ computation graphs $\mathcal{G}_{i}$ ($i = 1, \dots, n$) satisfying Condition \ref{cond:conditions_favoring_forward_backward_algorithms}, we need exactly one forward pass and exactly one backward pass on the computation graph, where computation cost is irrelevant of $n$, and evaluation of $\mathsf{P}_{1}\!\left(\left(\left(f \otimes g_{i}\right) \circ \left(\phi \times \psi_{i}\right)\right)\!\left(v\right)\right)$ for $i = 1, \dots, n$ and only every source node $v \in \src\!\left(G\right)$.

As a result, in the case where $n$ computation graphs ($n > 1$) satisfying Condition \ref{cond:conditions_favoring_forward_backward_algorithms} are required to be computed, forward-backward algorithms are much more efficient than forward algorithms. When such computation is done only by forward algorithms, we need to execute a forward pass for each computation graph and totally $n$ forward passes, so computation cost is proportional to $n \!\left(\left|\src\!\left(G\right)\right| + \left|E\right|\right)$. On the other hand, computation cost by forward-backward algorithms is proportional to $2\!\left(\left|\src\!\left(G\right)\right| + \left|E\right|\right) + n\!\left|\src\!\left(G\right)\right|$. Because $\left|\src\!\left(G\right)\right|$ is usually much smaller than $\left|E\right|$, we can conclude that forward-backward algorithms are much more favorable than forward algorithms in such case.

Next, we show that Condition \ref{cond:conditions_favoring_forward_backward_algorithms} is met in many usual machine learning tasks by some examples.

\begin{MyExample}[Baum-Welch Algorithm]
Recall Example \ref{ex:marginalization_and_expectation_problems}. In order to perform the training of the parameters of HMMs by the Baum-Welch algorithm, we need to calculate \eqref{eq:unnormalized_first_order_moment} multiple times with different definitions of $\psi$. They take on the following form
\begin{equation}
\sum_{\veci \in \bbNzero^n} c_{t, \veci}\!\left((\phi(s_1))^{i_1} \cdots (\phi(s_n))^{i_n}\right)(i_1 \psi_j(s_1) + \cdots + i_n \psi_j(s_n)) \; ,
\label{eq:multiple_first_order_moments}
\end{equation}
where $j \in \left\{1, \dots, m\right\}$. $\psi_j$ is defined by either of following: an indicator function that is equal to $1$ if and only if the argument source node corresponds to the residence at each state (in this case, \eqref{eq:multiple_first_order_moments} becomes equal to the expectation of the residence of the state), an indicator function that is equal to $1$ if and only if the argument source node corresponds to each kind of transitions (in this case, \eqref{eq:multiple_first_order_moments} becomes equal to the expectation of the kind of the transitions), or an indicator function that is equal to $1$ if and only if the argument source node corresponds to each kind of observation emissions from a state to an observation (in this case, \eqref{eq:multiple_first_order_moments} becomes equal to the expectation of the kind of the observation emissions). As explained in Example \ref{ex:marginalization_and_expectation_problems}, the values of the form \eqref{eq:multiple_first_order_moments} is obtained by the parametrized computation graph $\calG_j = (G, \op, \bbR \otimes_\bbR \BC_\bbR^1, (\id_\bbR \otimes \mathcal{P}_\bbR^1) \circ (\phi \times \psi_j))$ specified by the sextuple $(G, \op, (\bbR, \cdot, 1) \times (\bbR, +, 0), \phi \times \psi_j, \bbR \otimes_{\bbR} \BC_{\bbR}^{1}, \id_{\bbR} \otimes \mathcal{P}_{\bbR}^{1})$ and the application of the extension by linearity of the mapping $L \colon \bbR \otimes_\bbR \bbR^{2} \to \bbR$ satisfying $L(1 \otimes \hat{e}_0) = 0$ and $L(1 \otimes \hat{e}_1) = 1$. Because $\zerothproj(((\id_\bbR \otimes \mathcal{P}_\bbR^1) \circ (\phi \times \psi_j))\!\left(v\right)) = \phi(v)$ for every $j \in \left\{1, \dots, m\right\}$ and $v \in \src(G)$, the set of $\calG_j$ satisfies Condition \ref{cond:conditions_favoring_forward_backward_algorithms}. Therefore, forward-backward algorithms are much more favorable than forward algorithms in this case.
\label{ex:conditions_in_baum_welch}
\end{MyExample}

With a slight modification to Example \ref{ex:conditions_in_baum_welch}, we also obtain the instances of forward-backward algorithms to compute feature expectations for CRFs on trellises and factor graphs and the log-linear model on various data structures. See the paragraph following Example \ref{ex:marginalization_and_expectation_problems}.

Note that forward-backward algorithms are favored for the cases cited in Example \ref{ex:conditions_in_baum_welch} only from the point of view of time complexity. If space rather than time complexity is the main constraint, forward-only algorithms are acceptable even when Condition \ref{cond:conditions_favoring_forward_backward_algorithms} is met because space complexity of forward-backward algorithms is proportional to the size of $G$ while that of forward-only algorithms is proportional to the maximum size of antichain cutsets during execution of $\mathsc{Forward}$ (see the discussion of space complexity of Algorithm \ref{alg:forward} in the previous subsection). In the context of the forward-only computation of the Baum-Welch algorithm, the analysis described above is completely consistent with the investigation in \citet{khreich2010memory}.

\begin{MyExample}[Reverse Mode of AD (Back Propagation)]
Let $m$ be a positive integer, $\mathcal{F}_m$ the set of all differentiable functions having $m$ independent variables from an open subset of $D$ of $\bbR^{m}$ to $\bbR$, $1_{\mathcal{F}_m} \in \mathcal{F}_m$ the constant function whose value is always $1$, $\mathcal{F}_m = \left(\mathcal{F}_m, \cdot, 1_{\mathcal{F}_m}\right)$ the commutative monoid equipped with the pointwise multiplication $\cdot$ on $\mathcal{F}_m$, $\psi$ a mapping from $\src\!\left(G\right)$ to $\mathcal{F}_m$, $\mathcal{G} = \left(G, \op, \mathcal{F}_m, \psi\right)$ a computation graph, and $\src\!\left(G\right) = \left\{s_{1}, \dots, s_{n}\right\}$. Let us define $\chi \colon \src\!\left(G\right) \to \bbNzero\left[x_{0}, \dots, x_{n}\right]$ mapping $s_{i}$ to $x_{i}$ for every $s_{i} \in \src\!\left(G\right)$, and obtain the following form of the free forward variable of $\left(G, \op\right)$ w.r.t. $\chi$
\begin{equation*}
\alpha_{\left(G, \op, \chi\right)}\!\left(t\right)
  = \sum_{\veci \in \bbNzero^{n}} c_{t, \veci} {x_{1}}^{i_{1}} \cdots {x_{n}}^{i_{n}}
\end{equation*}
for every node and arc $t \in V \cup E$. Then
\begin{equation*}
\alpha_{\mathcal{G}}\!\left(t\right)
  = \sum_{\veci \in \bbNzero^n}
      c_{t, \veci} \left(\psi\!\left(\vecx; s_1\right)\right)^{i_1} \cdots
                   \left(\psi\!\left(\vecx; s_n\right)\right)^{i_n} \; .
\end{equation*}
Further let us calculate the value of the gradient of $\alpha_{\mathcal{G}}$ evaluated at a point $\vecx = \vecx_0$, which is of the following form
\begin{equation}
\begin{aligned}
&\left.\frac{\partial}{\partial \vecx} \left(
         \sum_{\veci \in \bbNzero^{n}} c_{t, \veci}
           \left(\psi\!\left(\vecx; s_{1}\right)\right)^{i_{1}} \cdots
           \left(\psi\!\left(\vecx; s_{n}\right)\right)^{i_{n}}
 \right)\right|_{\vecx = \vecx_{0}} \\
&\qquad=
 \left(\left.\frac{\partial}{\partial x_{k}} \left(
               \sum_{\veci \in \bbNzero^{n}} c_{t, \veci}
                 \left(\psi\!\left(\vecx; s_{1}\right)\right)^{i_{1}} \cdots
                 \left(\psi\!\left(\vecx; s_{n}\right)\right)^{i_{n}}
 \right)\right|_{\vecx = \vecx_{0}}\right)_{k = 1, \dots, m} \; ,
\end{aligned}
\label{eq:differentiation_of_multivariate_function}
\end{equation}
where $\vecx = \left(x_1, \dots, x_m\right)$. The value of the form \eqref{eq:differentiation_of_multivariate_function} is obtained by the computation graphs $\mathcal{G}'_k = \left(G, \op, \bbR \otimes_\bbR \BC_\bbR^1, \left(\id_\bbR \otimes \Delta_{k, \vecx_0}^1\right) \circ \left(\phi \times \psi\right)\right) \; \left(k = 1, \dots, m\right)$ that are specified by the sextuple $\left(G, \op, (\bbR, \cdot, 1) \times \mathcal{F}_m, \phi \times \psi, \bbR \otimes_\bbR \BC_\bbR^1, \id_\bbR \otimes \Delta_{k, \vecx_0}^1\right)$, respectively, followed by the $\bbR$-homomorphism $L \colon \bbR \otimes_\bbR \BC_\bbR^1 \to \bbR$ satisfying $L\!\left(p \otimes \left(q, r\right)\right) = pr$, where $\phi \colon \src\!\left(G\right) \to \bbR$ is defined by $\phi\!\left(v\right) = 1$ for every $v \in \src\!\left(G\right)$. Because $\mathsf{P}_0\!\left(\left(\left(\id_\bbR \otimes \Delta_{k, \vecx_0}^1\right) \circ \left(\phi \times \psi\right)\right)\!\left(v\right))\right) = \psi\!\left(\vecx_0; v\right)$ is independent of $k$, the set of the computation graphs $\left\{\mathcal{G}_1, \dots, \mathcal{G}_m\right\}$ meets Condition \ref{cond:conditions_favoring_forward_backward_algorithms}. Therefore, forward-backward algorithms are much more favorable than forward algorithms in this case.
\label{ex:reverse_mode_of_automatic_differentiation}
\end{MyExample}

Note that the forward algorithm for the computation graphs $\mathcal{G}'_k \; \left(k = 1, \dots, m\right)$ in Example \ref{ex:reverse_mode_of_automatic_differentiation} is equivalent to the forward mode of AD for the computation graph $\mathcal{G}$, and the corresponding forward-backward algorithm is equivalent to the reverse mode of AD. The analysis of trade-off between time and space complexity about forward and forward-backward algorithms described in this paper is completely consistent with the one between the forward and reverse modes of AD for the case of Example \ref{ex:reverse_mode_of_automatic_differentiation} \citep[cf.][Sections I.3 and I.4]{griewank2008evaluating}.

\subsection{Checkpoints for Forward-backward Algorithms}
\label{subsec:checkpoints_for_forward_backward_algorithms}

This subsection provides a brief note on technique to adjust trade-off between time and space complexity of forward-backward algorithms. The technique is based on so-called \termref{checkpoints}. In executing an instance of forward-backward algorithms, values of the forward variable of a computation graph are not necessarily stored in every element of $V \cup E$ during the forward pass. Instead, they are stored in the elements of some subsets of $V \cup E$, which are called checkpoints, and then necessary values of the forward variable in the backward pass are recomputed from stored values close to them.

One of obviously good candidates for checkpoints is $V$. Forward variables are defined on $V \cup E$ for technical reasons, but the values of a forward variable on $E$ are not necessarily stored in the forward pass of forward-backward algorithms. Instead, the value of a forward variable on every arc can be recomputed from the one stored in the tail of the arc. By doing so, space complexity required for forward-backward algorithms can be made proportional to $\left|V\right|$ while time complexity is kept proportional to $\left|V\right| + \left|E\right|$.

Another good candidate for checkpoints is antichain cutsets of the poset induced by a computation graph because if values of the forward variable of the computation graph are stored in every element of an antichain cutset $C$ then the values of the forward variable on every element of $\uparrow\!\!C \setminus C$ can be recomputed from the values stored in $C$. By adjusting granularity of antichain cutsets where values of the forward variable are stored, we can adjust trade-off between time and space complexity in forward-backward algorithms to a certain degree. In particular, by repeating the construction of a covering antichain cutset described in Lemma \ref{lemma:covering_antichain_cutset} from $\src\!\left(G\right)$, we obtain a maximal chain of the lattice of antichain cutsets from $\src\!\left(G\right)$ through $\snk\!\left(G\right)$. By considering the succession of these antichain cutsets as an analogue of the sequence of the time slices in trellises for sequence labeling, discussion on variants of technique based on checkpoints on sequence labeling can be also applied to any computation graph.

\section{Conclusion}
\label{sec:conclusion}

In this paper, we propose an algebraic formalization of forward and forward-backward algorithms. The formalization consists of (1) a unified abstraction of any computation consisting of a finite number of additions and/or multiplications, which is enough to support the development of the unified formalization of the algorithms, (2) the elucidation of algebraic structures underlying complicated forward algorithms, (3) a systematic framework to construct complicated and difficult-to-design forward algorithms from simple and easy-to-design forward algorithms, and (4) an algebraic characterization of forward-backward algorithms.

Although we present only two pseudo-codes (i.e., Algorithms \ref{alg:forward} and \ref{alg:forward_backward}) in this paper, they subsume a wide range of existing algorithms due to their versatility. To our knowledge, the formalization described in this paper subsumes (a part of) the following papers: \citet[Section III]{forney1973viterbi}, \citet[Sections III.A and III.B]{rabiner1989tutorial}, \citet[Section 2]{lari1990estimation}, \citet[Chapter 3]{tan1993adaptive}, \citet[Section IV.B]{turin1998unidirectional}, \citet[Sections 2 through 4]{goodman1999semiring} (provided that all derivations are acyclic, and the derivation forest is finite), \citet[Section III]{aji2000generalized}, \citet[Section 4]{kschischang2001factor}, \citet[Section 4]{klein2004parsing}, \citet[Section titled ``Methods and results'']{miklos2005linear}, \citet{mann2007efficient}, \citet[Subsection titled ``Linear memory Baum-Welch using a backward sweep with scaling'']{churbanov2008implementing}, \citet{ishihata2008propositionalizing}, \citet[Section I.3]{griewank2008evaluating} (provided that involved operations are limited to the addition or multiplication of real numbers), \citet[Section 5.1]{huang2008advanced}, \citet[Sections 3 through 5]{li2009first}, \citet[Section 2]{azuma2009generalization}, \citet[Section 4]{kimmig2011algebraic}, \citet[Section titled ``Hessian-vector Products of CRFs'']{tsuboi2011fast}, \citet{ilic2012computation}.

The formalization presented in this paper not only subsumes a wide range of existing algorithms but also extends them to an infinite number of their variants by using Framework \ref{framework} described in Section \ref{subsec:tensor_product_of_semialgebras_for_forward_algorithm}. The extension can be done in a systematic way. One only needs to identify the underlying abstract computation structure $\left(G, \op\right)$, and the computation problem at hand as an instance of formula \eqref{eq:complicated_forward_algorithms}.

The formalization presented in this paper also allows another direction of extension. Some of the algorithms proposed in the above-mentioned papers are of the forward-backward type. However, we build a linking bridge between forward and forward-backward algorithms. Consequently, we also obtain the forward-only version of many existing algorithms of the forward-backward type. This increases choices of algorithms according to trade-off between time and space requirements.

In addition, the unified formalization accelerates the ``synergy'' among discussions and techniques in a variety of existing algorithms. Based on the unified formalization, it is easy to transfer a discussion or technique in the context of a specific algorithm to the context of other algorithms.

For example, consider the discussion in \citet{eisner2016inside}. He has pointed out that the inside-outside algorithm can be derived by back propagation. The derivation is automatic as the reverse mode of AD can be automatically derived from computation on the underlying computation graph. However, his discussion is limited to the context of parsing. Moreover, the probability distribution is assumed to be log-linear because his discussion is based on the fact that the partition function of a log-linear distribution is also its moment generating function.

In contrast, our formalization appears to show the possibility that the discussion in \citet{eisner2016inside} can be generalized to much wider contexts. That is, Algorithm \ref{alg:forward_backward} can be automatically derived from Algorithm \ref{alg:forward}. This idea naturally arises from the fact that the relationship between the forward and reverse modes of AD is an instance of the relationship between Algorithms \ref{alg:forward} and \ref{alg:forward_backward}. For such transformation, we can use a long history and accumulation of discussions and experiences in the context of AD~\citep[e.g., see][Chapter 6]{griewank2008evaluating}.

% Acknowledgements should go at the end, before appendices and references

%\acks{We would like to acknowledge support for this project
%from the National Science Foundation (NSF grant IIS-9988642)
%and the Multidisciplinary Research Program of the Department
%of Defense (MURI N00014-00-1-0637). }

% Manual newpage inserted to improve layout of sample file - not
% needed in general before appendices/bibliography.

\appendix
\section{Proof of Lemma \ref{lemma:bcs}}

(This proof uses some notions defined after Lemma \ref{lemma:bcs} because using these notions significantly simplifies the proof. Of course, these forward references never lead to circular reasoning.)
\begin{proof}
Let $S = \left(S, +, \cdot, 0_S, 1_S\right)$. It is obvious that $\left(S^{n + 1}, +, 0_{\BC_S^n}\right)$ is a commutative monoid from the definition of the addition \eqref{eq:bc_add}, and is also an $S$-semimodule by being equipped with scalar multiplication $\sigma\!\left(\left(s_i\right)_{i = 0, \dots, n}\right) = \left(\sigma \cdot s_i\right)_{i = 0, \dots, n}$ for every $\sigma \in S$ and $(s_i)_{i = 0, \dots, n} \in S^{n + 1}$. Let $\hat{e}_i \in S^{n + 1} \; (i = 0, \dots, n)$ be defined by $\hat{e}_0 = \left(1_S, 0_S, 0_S, \dots\right), \hat{e}_1 = \left(0_S, 1_S, 0_S, \dots\right)$, and so on. Clearly, $U = \left\{\hat{e}_0, \dots, \hat{e}_n\right\}$ is a basis of the $S$-semimodule $\left(S^{n + 1}, +, 0_{\BC_S^n}\right)$. For every $a = \left(a_i\right)_{i = 0, \dots, n}, b = \left(b_i\right)_{i = 0, \dots, n}, c = \left(c_i\right)_{i = 0, \dots, n} \in S^{n + 1}$, one obtains
\begin{align*}
&a \diamond \left(b + c\right) = \left(\left(a_i\right)_{i = 0, \dots, n}\right) \diamond
  \left(\left(b_i\right)_{i = 0, \dots, n} + \left(c_i\right)_{i = 0, \dots, n}\right) \\
&\qquad = \left(\left(a_i\right)_{i = 0, \dots, n}\right) \diamond
          \left(\left(b_i + c_i\right)_{i = 0, \dots, n}\right) \\
&\qquad = \left(
            \sum_{j \in \left\{k \in \bbNzero \relmiddle| k \leq i\right\}}
              \binom{i}{j} \left(a_{j} \cdot \left(b_{i - j} + c_{i - j}\right)\right)
          \right)_{\!\!\!\!i = 0, \dots, n} \\
&\qquad = \left(
            \left(
              \sum_{j \in \left\{k \in \bbNzero \relmiddle| k \leq i\right\}}
                \binom{i}{j} \left(a_{j} \cdot b_{i - j}\right)
            \right) +
            \left(
              \sum_{j \in \left\{k \in \bbNzero \relmiddle| k \leq i\right\}}
              \binom{i}{j} \left(a_{j} \cdot c_{i - j}\right)
            \right)
          \right)_{\!\!\!\!i = 0 \dots, n} \\
&\qquad = \left(
            \sum_{j \in \left\{k \in \bbNzero \relmiddle| k \leq i\right\}}
              \binom{i}{j} \left(a_{j} \cdot b_{i - j}\right)
          \right)_{\!\!\!\!i = 0, \dots, n} +
          \left(
            \sum_{j \in \left\{k \in \bbNzero \relmiddle| k \leq i\right\}}
              \binom{i}{j} \left(a_{j} \cdot c_{i - j}\right)
          \right)_{\!\!\!\!i = 0, \dots, n} \\
&\qquad = \left(\left(a_i\right)_{i = 0, \dots, n}\right) \diamond
          \left(\left(b_i\right)_{i = 0, \dots, n}\right) +
          \left(\left(a_i\right)_{i = 0, \dots, n}\right) \diamond
          \left(\left(c_i\right)_{i = 0, \dots, n}\right) \\
&\qquad = \left(a \diamond b\right) + \left(a \diamond c\right) \; .
\end{align*}
Likewise, one also easily obtains $\left(a + b\right) \diamond c = \left(a \diamond c\right) + \left(b \diamond c\right)$, so $\diamond$ distributes over $+$. Moreover, from the definition of the multiplication \eqref{eq:bc_mult},
\begin{equation*}
(\sigma \hat{e}_i) \diamond (\tau \hat{e}_j) = (\sigma \cdot \tau) (\hat{e}_i \diamond \hat{e}_j) =
\begin{cases}
  \binom{i + j}{i} \!\left(\left(\sigma \cdot \tau\right) \!\hat{e}_{i + j}\right)
                & \text{if $i + j \leq n$,} \\
  0_{\BC_{S}^n} & \text{otherwise,}
\end{cases}
\end{equation*}
for every $\sigma, \tau \in S$ and $\hat{e}_{i}, \hat{e}_{j} \in U$. Therefore $\left(S^{n + 1}, +, \diamond, 0_{\BC_S^n}\right)$ is an $S$-semialgebra with a semialgebra basis $U$. Because
\begin{equation*}
\hat{e}_{i} \diamond \hat{e}_{j} = \hat{e}_{j} \diamond \hat{e}_{i} = \begin{cases}
\binom{i + j}{i} \hat{e}_{i + j} = \binom{i + j}{j} \hat{e}_{i + j} & \text{if $i + j \leq n$,} \\
0_{\BC_{S}^{n}}                                                     & \text{otherwise,}
\end{cases}
\end{equation*}
for every $\hat{e}_i, \hat{e}_j \in U$, the $S$-semialgebra $(S^{n + 1}, +, \diamond, 0_{\BC_S^n})$ is commutative by \citet[Theorem V.2.4]{hebisch1998algebraic}. The structure constants of the $S$-semialgebra $\bigl(S^{n + 1}, +, \diamond,\allowbreak 0_{\BC_S^n}\bigr)$ w.r.t. $U$ is
\begin{equation*}
\sigma_{\hat{e}_{i}, \hat{e}_{j}}^{\hat{e}_{k}} = \begin{cases}
\binom{i + j}{i} 1_S & \text{if $i + j = k$,} \\
0_S                  & \text{otherwise,}
\end{cases}
\end{equation*}
for every $\hat{e}_{i}, \hat{e}_{j}, \hat{e}_{k} \in U$. Now one obtains
\begin{equation*}
\sum_{\hat{e}_{l} \in U}
  \sigma_{\hat{e}_{i}, \hat{e}_{j}}^{\hat{e}_{l}}
    \cdot \sigma_{\hat{e}_{l}, \hat{e}_{k}}^{\hat{e}_{m}} =
\begin{cases}
  \sigma_{\hat{e}_{i}, \hat{e}_{j}}^{\hat{e}_{i + j}}
    \cdot \sigma_{\hat{e}_{m - k}, \hat{e}_{k}}^{\hat{e}_{m}} =
  \left(\binom{i + j}{i} 1_S\right) \cdot
    \left(\binom{m}{m - k} 1_S\right) & \text{if $i + j + k = m$,} \\
  0_S                                   & \text{otherwise,}
\end{cases}
\end{equation*}
\begin{equation*}
\sum_{\hat{e}_{l} \in U}
  \sigma_{\hat{e}_{j}, \hat{e}_{k}}^{\hat{e}_{l}}
    \cdot \sigma_{\hat{e}_{i}, \hat{e}_{l}}^{\hat{e}_{m}} =
\begin{cases}
  \sigma_{\hat{e}_{j}, \hat{e}_{k}}^{\hat{e}_{j + k}}
    \cdot \sigma_{\hat{e}_{i}, \hat{e}_{m - i}}^{\hat{e}_{m}} =
  \left(\binom{j + k}{j} 1_S\right) \cdot
    \left(\binom{m}{m - i} 1_S\right) & \text{if $i + j + k = m$,} \\
  0_S                                   & \text{otherwise,}
\end{cases}
\end{equation*}
and if $i + j + k = m$, using the identity of binomial coefficients $\binom{n}{h}\binom{n - h}{k} = \binom{n}{k}\binom{n - k}{h}$,
\begin{equation*}
\begin{aligned}
&\left(\binom{i + j}{i} 1_S\right) \cdot \left(\binom{m}{m - k} 1_S\right)
  = \left(\binom{i + j}{i} \binom{m}{m - k}\right) 1_S \\
& \qquad = \left(\binom{i + j}{i} \binom{i + j + k}{i + j}\right) 1_S
  = \left(\binom{i + j + k - k}{i} \binom{i + j + k}{k}\right) 1_S \\
& \qquad = \left(\binom{i + j + k - i}{k} \binom{i + j + k}{i}\right) 1_S
  = \left(\binom{j + k}{k} \binom{i + j + k}{j + k}\right) 1_S \\
& \qquad = \left(\binom{j + k}{j} \binom{m}{m - i}\right) 1_S
  = \left(\binom{j + k}{j} 1_S\right) \cdot \left(\binom{m}{m - i} 1_S\right) \; .
\end{aligned}
\end{equation*}
Therefore, $\sum_{\hat{e}_l \in U} \sigma_{\hat{e}_i, \hat{e}_j}^{\hat{e}_l} \cdot \sigma_{\hat{e}_l, \hat{e}_k}^{\hat{e}_m} = \sum_{\hat{e}_l \in U} \sigma_{\hat{e}_j, \hat{e}_k}^{\hat{e}_l} \cdot \sigma_{\hat{e}_i, \hat{e}_l}^{\hat{e}_m}$ for every $\hat{e}_i, \hat{e}_j, \hat{e}_k, \hat{e}_m \in U$, and thus the $S$-semialgebra $\left(S^{n + 1}, +, \diamond, 0_{\BC_S^n}\right)$ is associative by \citet[Theorem V.2.4]{hebisch1998algebraic}. Finally, $1_{\BC_S^n} \diamond \hat{e}_i = \hat{e}_i \diamond 1_{\BC_S^n} = \hat{e}_i$ for every $\hat{e}_i \in U$, and the $S$-semialgebra $\left(S^{n + 1}, +, \diamond, 0_{\BC_S^n}\right)$ is unital by \citet[Exercise V.2.2]{hebisch1998algebraic}. Thus $\bigl(S^{n + 1}, +,\allowbreak \diamond, 0_{\BC_S^n}, 1_{\BC_S^n}\bigr)$ is a commutative unital associative $S$-semialgebra, and $\BC_S^n = \bigl(S^{n + 1}, +, \diamond,\allowbreak 0_{\BC_S^n}, 1_{\BC_S^n}\bigr)$ is itself a commutative semiring.
\end{proof}

\section{Proof of Lemma \ref{lemma:basis_of_tensor_product_of_semimodules}}

\begin{proof}
Let $S = \left(S, +, \cdot, 0_S, 1_S\right)$. From Lemma \ref{lemma:existence_of_tensor_product_of_semimodules}, every element $t \in M \otimes_{S} N$ can be written as a finite sum $t = \sum_{i} \rho_{i} \!\left(m_{i} \otimes n_{i}\right)$ for some $m_{i} \in M$, $n_{i} \in N$, and $\rho_{i} \in S$. Moreover, $m_{i}$ and $n_{i}$ can be written as unique linear combinations of the bases $U$ and $V$, respectively, so we have $m_{i} = \sum_{u \in U} \sigma_{i, u}u$ and $n_{i} = \sum_{v \in V} \tau_{i, v}v$ for some $\sigma_{i, u}, \tau_{i, v} \in S$. Thus, by using bilinearity of the tensor product, we obtain $t = \sum_{i} \rho_{i} \!\left(\left(\sum_{u \in U} \sigma_{i, u}u\right) \otimes \left(\sum_{v \in V} \tau_{i, v}v\right)\right) = \sum_{\left(u, v\right) \in U \times V} \left(\sum_{i}\left(\rho_{i} \cdot \sigma_{i, u} \cdot \tau_{i, v}\right)\right) \left(u \otimes v\right)$. Therefore, $\left\{u \otimes v\right\}_{u \in U,\, v \in V}$ generates $M \otimes_{S} N$ by linear combinations.

Next, every element $m \in M$ and $n \in N$ can be written in a unique manner as the linear combination of the elements of the bases $U$ and $V$, respectively, so we have $m = \sum_{u \in U} \sigma_{u} u$ and $n = \sum_{v \in V} \tau_{v} v$ for some $\sigma_{u}, \tau_{v} \in S$. Let $u'$ and $v'$ be elements of $U$ and $V$, respectively. Now consider the function $B_{u', v'} \colon M \times N \to S$ defined by setting $B_{u', v'}\left(m, n\right) = B_{u', v'}\left(\sum_{u \in U} \sigma_{u}u, \sum_{v \in V} \tau_{v}v\right) = \sigma_{u'} \cdot \tau_{v'}$. Clearly $B_{u', v'}$ is bilinear. Thus, by Definition \ref{def:tensor_product_of_semimodules} and Lemma \ref{lemma:existence_of_tensor_product_of_semimodules}, there exists an $S$-homomorphism $L_{u', v'} \colon M \otimes_{S} N \to S$ satisfying $L_{u', v'}\left(m \otimes n\right) = B_{u', v'}\!\left(m, n\right) = \left(\sigma_{u'} \cdot \tau_{v'}\right)$. In particular, for every $u \in U$ and $v \in V$,
\begin{equation*}
L_{u', v'}\left(u \otimes v\right) = \begin{cases}
1_S & \text{$u = u'$ and $v = v'$,} \\
0_S & \text{otherwise.}
\end{cases}
\end{equation*}
Assume that
\begin{equation}
\sum_{\left(u, v\right) \in U \times V} \sigma_{u, v}\left(u \otimes v\right) =
\sum_{\left(u, v\right) \in U \times V} \tau_{u, v}\left(u \otimes v\right)
\label{eq:linear_independence_of_basis_of_tensor_product}
\end{equation}
for $\sigma_{u, v}, \tau_{u, v} \in S$ but only finitely many of the coefficients $\sigma_{u, v}$ and $\tau_{u, v}$ are different from $0_S$. Applying $L_{u', v'}$ to both sides of Eq. \eqref{eq:linear_independence_of_basis_of_tensor_product} tells us $\sigma_{u', v'} = \tau_{u', v'}$. Since $u'$ and $v'$ are arbitrarily chosen from $U$ and $V$, respectively, we obtain $\sigma_{u, v} = \tau_{u, v}$ for every $u \in U$ and $v \in V$. It follows that $\left\{u \otimes v\right\}_{u \in U,\, v \in V}$ is linearly independent. Therefore $\left\{u \otimes v\right\}_{u \in U,\, v \in V}$ is a basis of $M \otimes_{S} N$.
\end{proof}

\section{Proof of Lemma \ref{lemma:tensor_product_of_semialgebras}}

\begin{proof}
From Definition \ref{def:tensor_product_of_semimodules} and Lemma \ref{lemma:existence_of_tensor_product_of_semimodules}, $A \otimes_{S} A' = \left(A \otimes_{S} A', +, 0_{A} \otimes 0_{A'}\right)$ is an $S$-semimodule. Moreover, by Lemma \ref{lemma:basis_of_tensor_product_of_semimodules}, $W = \left\{u \otimes v\right\}_{u \in U,\, v \in V}$ is a basis of $A \otimes_{S} A'$.

For every three elements of the tensor product of $S$-semimodules $A \otimes_{S} A'$ written as linear combinations of the basis $W$, say, $t = \sum_{\left(u, v\right) \in U \times V} \rho_{u, v} \left(u \otimes v\right),$ $t' = \sum_{\left(u, v\right) \in U \times V} \rho'_{u, v} \left(u \otimes v\right),$ and $t'' = \sum_{\left(u, v\right) \in U \times V} \rho''_{u, v} \left(u \otimes v\right),$ where $\rho_{u, v}, \rho'_{u, v}, \rho''_{u, v} \in S$, using the definition of the operation \eqref{eq:multiplication_of_tensor_product_of_semialgebras}, we have
\begin{align*}
&t \cdot \left(t' + t''\right) \\
&\qquad= \left(\sum_{\left(u, v\right) \in U \times V} \rho_{u, v} \left(u \otimes v\right)\right) \cdot
         \left(\sum_{\left(u, v\right) \in U \times V}
                 \left(\rho'_{u, v} + \rho''_{u, v}\right) \left(u \otimes v\right)\right) \\
&\qquad=\sum_{\left(u, v\right) \in U \times V}
          \sum_{\left(u', v'\right) \in U \times V}
            \sum_{\left(u'', v''\right) \in U \times V}
              \left(
                \rho_{u, v} \cdot \left(\rho'_{u', v'} + \rho''_{u', v'}\right) \cdot
                \sigma_{u, u'}^{u''} \cdot \tau_{v, v'}^{v''}
              \right) \left(u'' \otimes v''\right) \\
&\qquad=\sum_{\left(u, v\right) \in U \times V}
          \sum_{\left(u', v'\right) \in U \times V}
            \sum_{\left(u'', v''\right) \in U \times V}
              \left(\rho_{u, v} \cdot \rho'_{u', v'} \cdot
                    \sigma_{u, u'}^{u''} \cdot \tau_{v, v'}^{v''}\right) \left(u'' \otimes v''\right) \\
&\qquad\phantom{=}\qquad+
        \sum_{\left(u, v\right) \in U \times V}
          \sum_{\left(u', v'\right) \in U \times V}
          \sum_{\left(u'', v''\right) \in U \times V}
            \left(\rho_{u, v} \cdot \rho''_{u', v'} \cdot
                  \sigma_{u, u'}^{u''} \cdot \tau_{v, v'}^{v''}\right) \left(u'' \otimes v''\right) \\
&\qquad=\left(\sum_{\left(u, v\right) \in U \times V} \rho_{u, v} \left(u \otimes v\right)\right) \cdot
        \left(\sum_{\left(u, v\right) \in U \times V} \rho'_{u, v} \left(u \otimes v\right)\right) \\
&\qquad\phantom{=}\qquad+
        \left(\sum_{\left(u, v\right) \in U \times V} \rho_{u, v} \left(u \otimes v\right)\right) \cdot
        \left(\sum_{\left(u, v\right) \in U \times V} \rho''_{u, v} \left(u \otimes v\right)\right) \\
&\qquad=t \cdot t' + t \cdot t'' \; .
\end{align*}
Likewise, one easily obtains $\left(t + t'\right) \cdot t'' = t \cdot t'' + t' \cdot t''$, so the operation is distributive.

For every $\rho, \rho' \in S$ and every two elements of the basis $u \otimes v, u' \otimes v' \in W$, we have, using \eqref{eq:multiplication_of_tensor_product_of_semialgebras},
\begin{equation}
\begin{aligned}
\left(\rho\!\left(u \otimes v\right)\right) \cdot \left(\rho' \!\left(u' \otimes v'\right)\right) &=
\sum_{\left(u'', v''\right) \in U \times V}
  \left(\rho \cdot \rho' \cdot \sigma_{u, u'}^{u''} \cdot \tau_{v, v'}^{v''}\right)
  \left(u'' \otimes v''\right) \\
&= \sum_{\left(u'', v''\right) \in U \times V}
     \left(\rho \cdot \rho'\right)
     \left(\left(\sigma_{u, u'}^{u''} \cdot \tau_{v, v'}^{v''}\right)
           \left(u'' \otimes v''\right)\right) \\
&= \left(\rho \cdot \rho'\right)
   \sum_{\left(u'', v''\right) \in U \times V}
     \left(\left(\sigma_{u, u'}^{u''} \cdot \tau_{v, v'}^{v''}\right)
           \left(u'' \otimes v''\right)\right) \\
&= \left(\rho \cdot \rho'\right)
   \left(\left(u \otimes v\right) \cdot \left(u' \otimes v'\right)\right) \; .
\end{aligned}
\label{eq:proof_of_semialgebra_basis_of_tensor_product}
\end{equation}
Therefore $W$ is a semialgebra basis of $A \otimes_{S} A'$, and $A \otimes_{S} A' = \left(A \otimes_{S} A', +, \cdot, 0_{A} \otimes 0_{A'}\right)$ is an $S$-semialgebra.

For every $a, b \in A$ and $a', b' \in A'$, we have
\begin{equation}
\left(a \otimes a'\right) \cdot \left(b \otimes b'\right) =
\left(a \cdot a'\right) \otimes \left(b \cdot b'\right) \; ,
\label{eq:multiplication_between_elementary_tensors_in_proof}
\end{equation}
because, by using unique linear combinations of the basis $W$, $a = \sum_{u \in U} \alpha_{u} u$, $b = \sum_{u \in U} \beta_{u} u$, $a' = \sum_{v \in V} \alpha'_{v} v$, and $b' = \sum_{v \in V} \beta'_{v} v$, bilinearity of tensor product, \eqref{eq:multiplication_of_tensor_product_of_semialgebras}, \eqref{eq:proof_of_semialgebra_basis_of_tensor_product}, the definition of structure constants, and the fact that $U$ and $V$ are semialgebra bases of $A$ and $A'$, respectively, we obtain
\begin{align*}
&\left(a \otimes a'\right) \cdot \left(b \otimes b'\right) \\
&\qquad=\left(\left(\sum_{u \in U} \alpha_{u} u\right) \otimes
\left(\sum_{v \in V} \alpha'_{v} v\right)\right) \cdot
\left(\left(\sum_{u \in U} \beta_{u} u\right) \otimes
\left(\sum_{v \in V} \beta'_{v} v\right)\right) \\
&\qquad=\left(\sum_{\left(u, v\right) \in U \times V} \left(\alpha_{u} \cdot \alpha'_{v}\right)
\left(u \otimes v\right)\right) \cdot
\left(\sum_{\left(u, v\right) \in U \times V} \left(\beta_{u} \cdot \beta'_{v}\right)
\left(u \otimes v\right)\right) \\
&\qquad=\sum_{\left(u, v\right) \in U \times V}
\sum_{\left(u', v'\right) \in U \times V}
\sum_{\left(u'', v''\right) \in U \times V}
\left(\alpha_{u} \cdot \alpha'_{v} \cdot \beta_{u'} \cdot \beta'_{v'} \cdot
\sigma_{u, u'}^{u''} \cdot \tau_{v, v'}^{v''}\right)
\left(u'' \otimes v''\right) \\
&\qquad=\sum_{\left(u, v\right) \in U \times V} \sum_{\left(u', v'\right) \in U \times V}
\left(\sum_{u'' \in U} \left(\alpha_{u} \cdot \beta_{u'} \cdot
\sigma_{u, u'}^{u''}\right) u''\right) \otimes
\left(\sum_{v'' \in V} \left(\alpha'_{v} \cdot \beta'_{v'} \cdot
\tau_{v, v'}^{v''}\right) v''\right) \\
&\qquad=\sum_{\left(u, v\right) \in U \times V} \sum_{\left(u', v'\right) \in U \times V}
\left(\left(\alpha_{u} \cdot \beta_{u'}\right)\left(u \cdot u'\right)\right) \otimes
\left(\left(\alpha'_{v} \cdot \beta'_{v'}\right)\left(v \cdot v'\right)\right) \\
&\qquad=\sum_{\left(u, v\right) \in U \times V} \sum_{\left(u', v'\right) \in U \times V}
\left(\left(\alpha_{u} u\right) \cdot \left(\beta_{u'} u'\right)\right) \otimes
\left(\left(\alpha'_{v} v\right) \cdot \left(\beta'_{v'} v'\right)\right) \\
&\qquad=\left(\left(\sum_{u \in U} \alpha_{u} u\right) \cdot
\left(\sum_{u \in U} \beta_{u} u\right)\right) \otimes
\left(\left(\sum_{v \in V} \alpha'_{v} v\right) \cdot
\left(\sum_{v \in V} \beta'_{v} v\right)\right) \\
&\qquad=\left(a \cdot a'\right) \otimes \left(b \cdot b'\right) \; .
\end{align*}

For every two elements of the basis $u \otimes v, u' \otimes v' \in W$, using \eqref{eq:multiplication_between_elementary_tensors_in_proof} and commutativity of $A$ and $A'$, $\left(u \otimes v\right) \cdot \left(u' \otimes v'\right) = \left(u \cdot u'\right) \otimes \left(v \cdot v'\right) = \left(u' \cdot u\right) \otimes \left(v' \cdot v\right) = \left(u' \otimes v'\right) \cdot \left(u \otimes v\right)$. From \citet[Theorem V.2.4]{hebisch1998algebraic}, this equation is sufficient condition for $A \otimes_{S} A'$ to be commutative.

For every $t, t' \in A \otimes_{S} A'$ and $\mu \in S$,
\begin{equation}
\left(\mu t\right) \cdot t' = t \cdot \left(\mu t'\right) = \mu \!\left(t \cdot t'\right)
\label{eq:scalar_multiplication_for_semialgebra}
\end{equation}
holds because, using the unique linear combinations of the basis $W$ for $t$ and $t'$, say, $t = \sum_{\left(u, v\right) \in U \times V} \rho_{u, v} \left(u \otimes v\right)$ and $t' = \sum_{\left(u, v\right) \in U \times V} \rho'_{u, v} \left(u \otimes v\right)$, and \eqref{eq:multiplication_of_tensor_product_of_semialgebras},
\begin{equation*}
\begin{aligned}
&\left(\mu t\right) \cdot t' \\
&\qquad=\left(\mu \left(\sum_{\left(u, v\right) \in U \times V}
                             \rho_{u, v} \left(u \otimes v\right)\right)\right) \cdot
        \left(\sum_{\left(u, v\right) \in U \times V} \rho'_{u, v} \left(u \otimes v\right)\right) \\
&\qquad=\left(\sum_{\left(u, v\right) \in U \times V}
                \left(\mu \cdot \rho_{u, v}\right)\left(u \otimes v\right)\right) \cdot
        \left(\sum_{\left(u, v\right) \in U \times V} \rho'_{u, v} \left(u \otimes v\right)\right) \\
&\qquad=\sum_{\left(u, v\right) \in U \times V} \sum_{\left(u', v'\right) \in U \times V}
        \sum_{\left(u'', v''\right) \in U \times V}
          \left(\mu \cdot \rho_{u, v} \cdot \rho'_{u', v'} \cdot
                \sigma_{u, u'}^{u''} \cdot \tau_{v, v'}^{v''}\right)
          \left(u'' \otimes v''\right) \\
&\qquad=\mu \sum_{\left(u, v\right) \in U \times V} \sum_{\left(u', v'\right) \in U \times V}
            \sum_{\left(u'', v''\right) \in U \times V}
              \left(\rho_{u, v} \cdot \rho'_{u', v'} \cdot
                    \sigma_{u, u'}^{u''} \cdot \tau_{v, v'}^{v''}\right)
              \left(u'' \otimes v''\right) \\
&\qquad=\mu \!\left(\left(\sum_{\left(u, v\right) \in U \times V}
                            \rho_{u, v} \left(u \otimes v\right)\right) \cdot
                    \left(\sum_{\left(u, v\right) \in U \times V}
                            \rho'_{u, v} \left(u \otimes v\right)\right)\right) \\
&\qquad=\mu \!\left(t \cdot t'\right)
\end{aligned}
\end{equation*}
Likewise, one easily obtains $t \cdot \left(\mu t'\right) = \mu \!\left(t \cdot t'\right)$.

For every element of $A \otimes_S A'$ written in the unique linear combinations of the basis $W$, say, $t = \sum_{(u, v) \in U \times V} \rho_{u, v} (u \otimes v)$, using \eqref{eq:scalar_multiplication_for_semialgebra} and \eqref{eq:multiplication_between_elementary_tensors_in_proof},
\begin{align*}
&(1_A \otimes 1_{A'}) \cdot t
 = (1_A \otimes 1_{A'}) \cdot
     \left(\sum_{(u, v) \in U \times V} \rho_{u, v} (u \otimes v)\right)
 = \sum_{(u, v) \in U \times V}
    \rho_{u, v} (\left(1_A \otimes 1_{A'}\right) \cdot (u \otimes v)) \\
&\qquad= \sum_{(u, v) \in U \times V} \rho_{u, v} (\left(1_A \cdot u\right) \otimes (1_{A'} \cdot v)) = \sum_{(u, v) \in U \times V} \rho_{u, v} (u \otimes v) = t \; .
\end{align*}
Therefore, $1_A \otimes 1_{A'}$ is the identity element of $A \otimes_S A'$, and $A \otimes_S A'$ is unital.

From \eqref{eq:multiplication_of_tensor_product_of_semialgebras}, the structure constants of $A \otimes_{S} A'$ with respect to the semialgebra basis $W$ is clearly $\omega_{u \otimes v, u' \otimes v'}^{u'' \otimes v''} = \sigma_{u, u'}^{u''} \cdot \tau_{v, v'}^{v''}$.

Since $A$ and $A'$ are associative semialgebras, by using \citet[Theorem V.2.4]{hebisch1998algebraic}, $\sum_{u''' \in U} \sigma_{u, u'}^{u'''} \cdot \sigma_{u''', u''}^{u''''} = \sum_{u''' \in U} \sigma_{u', u''}^{u'''} \cdot \sigma_{u, u'''}^{u''''}$ holds for every $u, u', u'', u'''' \in U$, and $\sum_{v''' \in V} \tau_{v, v'}^{v'''} \cdot \tau_{v''', v''}^{v''''} = \sum_{v''' \in V} \tau_{v', v''}^{v'''} \cdot \tau_{v, v'''}^{v''''}$ for every $v, v', v'', v'''' \in V$. Therefore,
\begin{align*}
& \sum_{\left(u''', v'''\right) \in U \times V}
    \omega_{u \otimes u, u' \otimes v'}^{u''' \otimes v'''} \cdot
    \omega_{u''' \otimes v''', u'' \otimes v''}^{u'''' \otimes v''''}
 =\sum_{\left(u''', v'''\right) \in U \times V}
    \sigma_{u, u'}^{u'''} \cdot \tau_{v, v'}^{v'''} \cdot
    \sigma_{u''', u''}^{u''''} \cdot \tau_{v''', v''}^{v''''} \\
&\qquad=\left(\sum_{u''' \in U}
                \sigma_{u, u'}^{u'''} \cdot \sigma_{u''', u''}^{u''''}\right) \cdot
        \left(\sum_{v''' \in V} \tau_{v, v'}^{v'''} \cdot \tau_{v''', v''}^{v''''}\right) \\
&\qquad=\left(\sum_{u''' \in U}
                \sigma_{u', u''}^{u'''} \cdot \sigma_{u, u'''}^{u''''}\right) \cdot
        \left(\sum_{v''' \in V} \tau_{v', v''}^{v'''} \cdot \tau_{v, v'''}^{v''''}\right) \\
&\qquad=\sum_{\left(u''', v'''\right) \in U \times V} \sigma_{u', u''}^{u'''} \cdot \tau_{v', v''}^{v'''} \cdot \sigma_{u''', u''}^{u''''} \cdot \tau_{v''', v''}^{v''''} = \sum_{\left(u''', v'''\right) \in U \times V} \omega_{u' \otimes v', u'' \otimes v''}^{u''' \otimes v'''} \cdot \omega_{u \otimes v, u''' \otimes v'''}^{u'''' \otimes v''''}
\end{align*}
holds for every $\left(u, v\right), \left(u', v'\right), \left(u'', v''\right), \left(u'''', v''''\right) \in U \times V$. Again by \citet[Theorem V.2.4]{hebisch1998algebraic}, this equation is sufficient condition for $A \otimes_{S} A'$ to be associative.

Therefore, $A \otimes_{S} A'$ is a commutative unital associative $S$-semialgebra.
\end{proof}

\section{Proof of Theorem \ref{thm:alpha_for_tensor_product_of_semialgebras}}

\begin{proof}
For every $\left(m, n\right), \left(m', n'\right) \in M \times N$, using the homomorphism of $f$ and $g$, and the equation \eqref{eq:multiplication_between_elementary_tensors}, we have
\begin{align*}
&\left(f \otimes g\right)\left(\left(m, n\right) \cdot \left(m, n'\right)\right)
       = \left(f \otimes g\right)\left(m \cdot m', n \cdot n'\right)
       = \left(f\!\left(m \cdot m'\right)\right) \otimes \left(g\!\left(n \cdot n'\right)\right) \\
&\qquad= \left(f\!\left(m\right) \cdot f\!\left(m'\right)\right) \otimes
         \left(g\!\left(m\right) \cdot g\!\left(n'\right)\right)
       = \left(f\!\left(m\right) \otimes g\!\left(n\right)\right) \cdot
         \left(f\!\left(m'\right) \otimes g\!\left(n'\right)\right) \\
&\qquad= \left(\left(f \otimes g\right)\left(m, n\right)\right) \cdot
         \left(\left(f \otimes g\right)\left(m', n'\right)\right) \; ,
\end{align*}
so $f \otimes g$ is a monoid homomorphism from $M \otimes N$ to the multiplicative monoid $(A \otimes_S A', \cdot, 1_A \otimes 1_{A'})$ of the tensor product of the semialgebras $A \otimes_S A'$. Thus the sextuple $\calG = (G, \op, M \times N, \phi \times \psi, A \otimes_S A', f \otimes g)$ specifies the $(f \otimes g)$-parametrized computation graph $(G, \op, A \otimes_S A', (f \otimes g) \circ (\phi \times \psi))$. Therefore, using Theorem \ref{thm:parametrized_alpha}, we have
\begin{equation*}
\begin{aligned}
\alpha_{\mathcal{G}}\!\left(t\right)
=& \sum_{\veci \in \bbNzero^n} c_{t, \veci}\!\left(\left(f \otimes g\right)
     \left(
       \left(\left(\left(\phi \times \psi\right)\left(s_{1}\right)\right)^{i_{1}}\right) \cdots
       \left(\left(\left(\phi \times \psi\right)\left(s_{n}\right)\right)^{i_{n}}\right)
     \right)\right) \\
=& \sum_{\veci \in \bbNzero^n} c_{t, \veci}\!\left(\left(f \otimes g\right)
     \left(
       \left(\phi\!\left(s_{1}\right), \psi\!\left(s_{1}\right)\right)^{i_{1}} \cdots
       \left(\phi\!\left(s_{n}\right), \psi\!\left(s_{n}\right)\right)^{i_{n}}
     \right)\right) \\
=& \sum_{\veci \in \bbNzero^n} c_{t, \veci}\!\left(\left(f \otimes g\right)
     \left(
       \left(\phi\!\left(s_{1}\right)\right)^{i_{1}} \cdots
       \left(\phi\!\left(s_{n}\right)\right)^{i_{n}},
       \left(\psi\!\left(s_{1}\right)\right)^{i_{1}} \cdots
       \left(\psi\!\left(s_{n}\right)\right)^{i_{n}}
     \right)\right) \\
=& \sum_{\veci \in \bbNzero^n} c_{t, \veci}\!\left(
     f\!\left(\left(\phi\!\left(s_{1}\right)\right)^{i_{1}} \cdots
              \left(\phi\!\left(s_{n}\right)\right)^{i_{n}}\right) \otimes
     g\!\left(\left(\psi\!\left(s_{1}\right)\right)^{i_{1}} \cdots
              \left(\psi\!\left(s_{n}\right)\right)^{i_{n}}\right)\right) \; .
\end{aligned}
\end{equation*}
Finally, Definition \ref{def:tensor_product_of_semimodules} guarantees the existence of $L$. In fact, a function $L$ that maps each element of $A \otimes_S A'$ written as the linear combination of the basis $\left\{u \otimes u'\right\}_{u \in U,\, u' \in U'}$, say, $\sum_{u \in U, u' \in U'} \rho_{u, u'} \!\left(u \otimes u'\right)$ where $\rho_{u, u'} \in S$, and $U$ and $U'$ are bases of $A$ and $A'$, respectively, to $\sum_{u \in U, u' \in U'} \rho_{u, u'} B\!\left(u, u'\right)$ is an $S$-homomorphism and satisfies
\begin{equation*}
L\!\left(\alpha_{\mathcal{G}}\!\left(t\right)\right) =
\sum_{\veci \in \bbNzero^n} c_{t, \veci}
B\!\left(f\!\left(\left(\phi\!\left(s_1\right)\right)^{i_1} \cdots
\left(\phi\!\left(s_n\right)\right)^{i_n}\right),
g\!\left(\left(\psi\!\left(s_1\right)\right)^{i_1} \cdots
\left(\psi\!\left(s_n\right)\right)^{i_n}\right)\right) \; .
\end{equation*}
\end{proof}

\section{Proof of Lemma \ref{lemma:zeroth_and_first_projection}}

\begin{proof}
For every element of $A \otimes_{S} \BC_{S}^{1}$ written as the unique linear combination of elements of the basis $\left\{u \otimes \hat{e}_{i}\right\}_{u \in U,\, i \in \left\{0, 1\right\}}$, say, $x = \sum_{u \in U} \sum_{i \in \left\{0, 1\right\}} \sigma_{u, i} \left(u \otimes \hat{e}_{i}\right)$, we have
\begin{equation*}
\begin{split}
x = \sum_{u \in U} \sum_{i \in \left\{0, 1\right\}} \sigma_{u, i} \left(u \otimes \hat{e}_{i}\right)
  = \sum_{u \in U} \sigma_{u, 0} \left(u \otimes \hat{e}_{0}\right)
  + \sum_{u \in U} \sigma_{u, 1} \left(u \otimes \hat{e}_{1}\right) \\
  = \sum_{u \in U} \left(\sigma_{u, 0} u\right) \otimes \hat{e}_{0}
  + \sum_{u \in U} \left(\sigma_{u, 1} u\right) \otimes \hat{e}_{1}
  = \mathsf{P}_{0}\!\left(x\right) \otimes \hat{e}_{0}
  + \mathsf{P}_{1}\!\left(x\right) \otimes \hat{e}_{1} \; .
\end{split}
\end{equation*}
\end{proof}

\section{Proof of Lemma \ref{lemma:linearity_of_projection}}

\begin{proof}
For every two elements of $A \otimes_S \BC_S^1$ written as linear combinations of elements of a basis $\{u \otimes \hat{e}_i\}_{u \in U,\, i \in \{0, 1\}}$, say, $x = \sum_{u \in U} \sum_{i \in \{0, 1\}} \sigma_{u, i}(u \otimes \hat{e}_i)$ and $y = \sum_{u \in U} \sum_{i \in \{0, 1\}} \tau_{u, i}(u \otimes \hat{e}_i)$, we have
\begin{align*}
&\mathsf{P}_{0}\!\left(x + y\right) \\
&\qquad=\mathsf{P}_{0}\!\left(
          \sum_{u \in U} \sum_{i \in \left\{0, 1\right\}}
            \left(\sigma_{u, i} + \tau_{u, i}\right) \left(u \otimes \hat{e}_{i}\right)
        \right) \\
&\qquad=\sum_{u \in U} \left(\sigma_{u, 0} + \tau_{u, 0}\right) u \\
&\qquad=\sum_{u \in U} \sigma_{u, 0} u + \sum_{u \in U} \tau_{u, 0} u \\
&\qquad=\mathsf{P}_{0}\!\left(
          \sum_{u \in U}
            \sum_{i \in \left\{0, 1\right\}}
              \sigma_{u, i}\!\left(u \otimes \hat{e}_{i}\right)\right) +
        \mathsf{P}_{0}\!\left(
          \sum_{u \in U}
            \sum_{i \in \left\{0, 1\right\}}
              \tau_{u, i}\!\left(u \otimes \hat{e}_{i}\right)\right) \\
&\qquad=\mathsf{P}_{0}\!\left(x\right) + \mathsf{P}_{0}\!\left(y\right) \; .
\end{align*}
Likewise, it is easy to show $\mathsf{P}_{1}\!\left(x + y\right) = \mathsf{P}_{1}\!\left(x\right) + \mathsf{P}_{1}\!\left(y\right)$ for every two elements $x, y \in A \otimes_{S} \BC_{S}^{1}$.
\end{proof}

\section{Proof of Lemma \ref{lemma:zeroth_projection_of_alpha}}

\begin{proof}
If $t \in \src\!\left(G\right)$, from Definition \ref{def:alpha}, we have $\mathsf{P}_{0}\!\left(\alpha_{\mathcal{G}}\!\left(t\right)\right) = \mathsf{P}_{0}\!\left(\xi\!\left(t\right)\right) = \left(\mathsf{P}_{0} \circ \xi\right)\left(t\right) = \alpha_{\mathcal{G}'}\!\left(t\right)$.

Consider the case where $t$ is an element of $V$ such that $t \notin \src\!\left(G\right)$. Assume the induction hypothesis that \eqref{eq:zeroth_projection_of_alpha} holds for every element of $E_{G}^{-}\!\left(v\right)$. If $\op\!\left(v\right) = \text{``$+$''}$ then, by Definition \ref{def:alpha}, Lemma \ref{lemma:linearity_of_projection}, and the induction hypothesis, we have
\begin{equation*}
\mathsf{P}_{0}\!\left(\alpha_{\mathcal{G}}\!\left(t\right)\right) =
\mathsf{P}_{0}\!\left(
  \sum_{e \in E_{G}^{-}\!\left(t\right)} \alpha_{\mathcal{G}}\!\left(e\right)\right) =
\sum_{e \in E_{G}^{-}\!\left(t\right)}
  \mathsf{P}_{0}\!\left(\alpha_{\mathcal{G}}\!\left(e\right)\right) =
\sum_{e \in E_{G}^{-}\!\left(t\right)} \alpha_{\mathcal{G}'}\!\left(t\right) \; .
\end{equation*}
Otherwise (i.e., $\op(t) = \text{``$\cdot$''}$), by using Definition \ref{def:alpha}, Lemma \ref{lemma:linearity_of_projection}, the linear combination of elements of a basis $\{u \otimes \hat{e}_i\}_{u \in U,\, i \in \{0, 1\}}$ of $A \otimes_S \BC_S^1$ for $\alpha_\calG(t)$, say, $\alpha_\calG(t) = \sum_{u \in U,\, i \in \{0, 1\}} \eta_{t, u, i}(u \otimes \hat{e}_i)$ where $\eta_{t, u, i} \in S$, and the induction hypothesis, and by noting that $x = \left(\mathsf{P}_{0}\!\left(x\right)\right) \otimes \hat{e}_{0} + \left(\mathsf{P}_{1}\!\left(x\right)\right) \otimes \hat{e}_{1}$, $\mathsf{P}_{0}\!\left(\left(\mathsf{P}_{0}\!\left(x\right)\right) \otimes \hat{e}_{0}\right) = \mathsf{P}_{0}\!\left(x\right)$, and $\mathsf{P}_{0}\!\left(\left(\mathsf{P}_{1}\!\left(x\right)\right) \otimes \hat{e}_{1}\right) = 0_{A}$ for every $x \in A \otimes_{S} \BC_{S}^{1}$, and $\hat{e}_{i_{1}} \cdots \hat{e}_{i_{n}} = \hat{e}_{0}$ if and only if $i_{j} = 0$ for all $j$, we have
\begin{equation*}
\begin{aligned}
\mathsf{P}_{0}\!\left(\alpha_{\mathcal{G}}\!\left(t\right)\right)
&=
\mathsf{P}_{0}\!\left(
  \prod_{e \in E_{G}^{-}\!\left(t\right)} \alpha_{\mathcal{G}}\!\left(e\right)\right) =
\mathsf{P}_{0}\!\left(
  \prod_{e \in E_{G}^{-}\!\left(t\right)}
    \sum_{u \in U,\, i \in \left\{0, 1\right\}}
      \eta_{e, u, i} \left(u \otimes \hat{e}_{i}\right)\right) \\
&=
\prod_{e \in E_{G}^{-}\!\left(t\right)} \left(
  \sum_{u \in U} \eta_{e, u, 0} \left(u \otimes \hat{e}_{0}\right)\right) =
\prod_{e \in E_{G}^{-}\!\left(t\right)}
  \mathsf{P}_{0}\!\left(\alpha_{\mathcal{G}}\!\left(e\right)\right) =
\prod_{e \in E_{G}^{-}\!\left(t\right)} \alpha_{\mathcal{G}'}\!\left(e\right) =
\alpha_{\mathcal{G}'}\!\left(t\right) \; .
\end{aligned}
\end{equation*}

Consider the case where $t$ is an element of $E$. Assume the induction hypothesis that \eqref{eq:zeroth_projection_of_alpha} holds for $\tail\!\left(t\right)$. Then, by using Definition \ref{def:alpha} and the induction hypothesis, we have $\mathsf{P}_{0}\!\left(\alpha_{\mathcal{G}}\!\left(t\right)\right) = \mathsf{P}_{0}\!\left(\alpha_{\mathcal{G}}\!\left(\tail\!\left(t\right)\right)\right) = \alpha_{\mathcal{G}'}\!\left(\tail\!\left(t\right)\right) = \alpha_{\mathcal{G}'}\!\left(t\right)$.

Therefore, we have proven that the equation \eqref{eq:zeroth_projection_of_alpha} holds for every $t \in V \cup E$ by induction on the finite dag $G$.
\end{proof}

\section{Proof of Lemma \ref{lemma:first_projection_of_prod_alpha}}

\begin{proof}
Let $a_{e, 0} = \mathsf{P}_{0}\!\left(\alpha_{\mathcal{G}}\!\left(e\right)\right)$ and $a_{e, 1} = \mathsf{P}_{1}\!\left(\alpha_{\mathcal{G}}\!\left(e\right)\right)$, and let $E_{G}^{-}\!\left(v\right) = \left\{e_{1}, \dots, e_{n}\right\}$. Note that, for the elements $\hat{e}_{0} = \left(1_{S}, 0_{S}\right), \hat{e}_{1} = \left(0_{S}, 1_{S}\right)$ of a basis of $\BC_{S}^{1}$, $\hat{e}_{i_1} \cdot \hat{e}_{i_2} \cdots \hat{e}_{i_n} = \hat{e}_{0}$ if and only if $i_{1} = i_{2} = \cdots = i_{n} = 0$, $\hat{e}_{i_1} \cdot \hat{e}_{i_2} \cdots \hat{e}_{i_n} = \hat{e}_{1}$ if and only if only one of $i_{j}$ is equal to $1$ and all the others are equal to $0$, and $\hat{e}_{i_1} \cdot \hat{e}_{i_2} \cdots \hat{e}_{i_n} = 0_{\BC_{S}^{1}}$ otherwise, because of the definition of the multiplication of $\BC_{S}^{1}$ (cf. Definition \ref{def:binomial_convolution_semiring}). By noting above-mentioned facts, and using the linear combination of elements of the basis $\left\{u, \hat{e}_{i}\right\}_{u \in U,\, i \in \left\{0, 1\right\}}$ for an element of $A \otimes_{S} \BC_{S}^{1}$, and the equation \eqref{eq:multiplication_between_elementary_tensors}, we have
\begin{align*}
&\prod_{e \in E_{G}^{-}\left(v\right)} \alpha_{\mathcal{G}}\!\left(e\right) \\
&\quad=\prod_{e \in E_{G}^{-}\left(v\right)}
          \left(a_{e, 0} \otimes \hat{e}_{0} + a_{e, 1} \otimes \hat{e}_{1}\right) \\
&\quad=\left(a_{e_1, 0} \otimes \hat{e}_{0}\right) \cdot \left(a_{e_2, 0} \otimes \hat{e}_{0}\right)
        \cdots \left(a_{e_{n}, 0} \otimes \hat{e}_{0}\right) \\
&\qquad\phantom{=}\qquad+
        \left(a_{e_1, 0} \otimes \hat{e}_{0}\right) \cdot \left(a_{e_2, 0} \otimes \hat{e}_{0}\right)
        \cdots \left(a_{e_{n}, 1} \otimes \hat{e}_{1}\right) \\
&\quad\phantom{=}\qquad+ \cdots \\
&\quad\phantom{=}\qquad+
        \left(a_{e_1, 1} \otimes \hat{e}_{1}\right) \cdot \left(a_{e_2, 1} \otimes \hat{e}_{1}\right)
        \cdots \left(a_{e_{n}, 1} \otimes \hat{e}_{1}\right) \\
&\quad=\left(a_{e_{0}, 0} \cdot a_{e_{1}, 0} \cdots a_{e_{n}, 0}\right)
          \otimes \left(\hat{e}_{0} \cdot \hat{e}_{0} \cdots \hat{e}_{0}\right) \\
&\quad\phantom{=}\qquad+
        \left(a_{e_{0}, 0} \cdot a_{e_{1}, 0} \cdots a_{e_{n}, 1}\right)
          \otimes \left(\hat{e}_{0} \cdot \hat{e}_{0} \cdots \hat{e}_{1}\right) \\
&\quad\phantom{=}\qquad+ \cdots \\
&\quad\phantom{=}\qquad+
        \left(a_{e_{0}, 1} \cdot a_{e_{1}, 1} \cdots a_{e_{n}, 1}\right)
          \otimes \left(\hat{e}_{1} \cdot \hat{e}_{1} \cdots \hat{e}_{1}\right) \\
&\quad=\left(a_{e_{0}, 0} \cdot a_{e_{1}, 0} \cdots a_{e_{n}, 0}\right) \otimes \hat{e}_{0} \\
&\quad\phantom{=}\qquad+
        \left(a_{e_{1}, 1} \cdot a_{e_{1}, 0} \cdots a_{e_{n}, 0} +
              a_{e_{1}, 0} \cdot a_{e_{1}, 1} \cdots a_{e_{n}, 0} +
              \cdots +
              a_{e_{1}, 0} \cdot a_{e_{1}, 0} \cdots a_{e_{n}, 1}\right) \otimes \hat{e}_{1} \\
&\quad=\left(\prod_{e \in E_{G}^{-}\left(v\right)} a_{e, 0}\right) \otimes \hat{e}_{0} +
        \left(
          \sum_{e \in E_{G}^{-}\left(v\right)} a_{e, 1} \cdot \left(
            \prod_{e' \in E_{G}^{-}\left(v\right) \setminus \left\{e\right\}} a_{e', 0}\right)
        \right) \otimes \hat{e}_{1} \\
&\quad=\left(
          \prod_{e \in E_{G}^{-}\left(v\right)}
            \mathsf{P}_{0}\left(\alpha_{\mathcal{G}}\!\left(e\right)\right)\right) \otimes \hat{e}_{0} +
        \left(
          \sum_{e \in E_{G}^{-}\left(v\right)}
            \mathsf{P}_{1}\!\left(\alpha_{\mathcal{G}}\!\left(e\right)\right) \cdot \left(
              \prod_{e' \in E_{G}^{-}\left(v\right) \setminus \left\{e\right\}}
                \mathsf{P}_{0}\!\left(\alpha_{\mathcal{G}}\!\left(e'\right)\right)\right)
        \right) \otimes \hat{e}_{1} \; .
\end{align*}
Therefore, we finally obtain
\begin{equation*}
\mathsf{P}_{1}\!\left(
  \prod_{e \in E_{G}^{-}\left(v\right)} \alpha_{\mathcal{G}}\!\left(e\right)\right) =
\sum_{e \in E_{G}^{-}\left(v\right)} \mathsf{P}_{1}\!\left(\alpha_{\mathcal{G}}\!\left(e\right)\right)
  \cdot \left(
    \prod_{e' \in E_{G}^{-}\left(v\right) \setminus \left\{e\right\}}
      \mathsf{P}_{0}\!\left(\alpha_{\mathcal{G}}\!\left(e'\right)\right)\right) \; .
\end{equation*}
\end{proof}

\section{Proof of Lemma \ref{lemma:covering_antichain_cutset}}

\begin{proof}
Let $x \nparallel y$ denote that $x$ and $y$ are comparable for every $x, y \in \mathcal{O}$.

Since $C \neq \snk\!\left(G\right)$, it is obvious that $\uparrow\!\!C \setminus C \neq \emptyset$. Since $G$ is finite (and thus so are $\mathcal{O}$ and $\uparrow\!\!C \setminus C$), there exists at least one minimal element $y$ in $\uparrow\!\!C \setminus C$. Because a maximal chain of $\mathcal{O}$ passing through $y$ intersects the antichain cutset $C$, there exists at least one element $x \in C$ such that $x < y$. If there exists $z \in V \cup E$ such that $x < z < y$ then $z$ is an element of $\uparrow\!\!C \setminus C$, but this contradicts the fact that $y$ is a minimal element in $\uparrow\!\!C \setminus C$. Therefore $x \prec y$ is shown by contradiction, and one can obtain $x \in C$ such that there exists $y \in V \cup E$ satisfying $x \prec y$ and $y$ is a minimal element of $\uparrow\!\!C \setminus C$.

\paragraph{1)}
If $x \in V$, it is obvious that $D' = E_{G}^{+}\!\left(x\right)$ and $D = \left\{x\right\}$. For every $d' \in D'$, the covered set of $d'$ is the singleton set $\left\{x\right\}$, so the covered set of $d'$ is a subset of $C$. $C' = \left(C \cup D'\right) \setminus D$ is clearly an antichain cutset. $C < C'$ is obvious. By using Definitions \ref{def:alpha} and \ref{def:beta}, and noting that $\tail\!\left(e\right) = x$ holds in the summand of $\sum_{e \in E_{G}^{+}\!\left(x\right)}$, we have
\begin{align*}
&\sum_{e \in E_{G}^{+}\!\left(x\right)}
  \mathsf{P}_{1}\!\left(\alpha_{\mathcal{G}}\!\left(e\right)\right) \cdot
  \beta_{\mathcal{G}}\!\left(e\right) =
 \sum_{e \in E_{G}^{+}\!\left(x\right)}
   \mathsf{P}_{1}\!\left(\alpha_{\mathcal{G}}\!\left(\tail\!\left(e\right)\right)\right) \cdot
   \beta_{\mathcal{G}}\!\left(e\right) \\
&\qquad=\sum_{e \in E_{G}^{+}\!\left(x\right)}
          \mathsf{P}_{1}\!\left(\alpha_{\mathcal{G}}\!\left(x\right)\right) \cdot
          \beta_{\mathcal{G}}\!\left(e\right)
       =\mathsf{P}_{1}\!\left(\alpha_{\mathcal{G}}\!\left(x\right)\right)
          \cdot \sum_{e \in E_{G}^{+}\!\left(x\right)} \beta_{\mathcal{G}}\!\left(e\right)
       =\mathsf{P}_{1}\!\left(\alpha_{\mathcal{G}}\!\left(x\right)\right) \cdot
        \beta_{\mathcal{G}}\!\left(x\right)
\end{align*}
By noting $D \subseteq C$ and $C \cap D' = \emptyset$, and using the above equation, we finally obtain
\begin{align*}
&\sum_{c \in C'} \mathsf{P}_{1}\!\left(\alpha_{\mathcal{G}}\!\left(c\right)\right) \cdot
                 \beta_{\mathcal{G}}\!\left(c\right) \\
&\qquad=
  \sum_{c \in C} \mathsf{P}_{1}\!\left(\alpha_{\mathcal{G}}\!\left(c\right)\right)
                   \cdot \beta_{\mathcal{G}}\!\left(c\right) +
  \sum_{d' \in D'} \mathsf{P}_{1}\!\left(\alpha_{\mathcal{G}}\!\left(d'\right)\right)
                     \cdot \beta_{\mathcal{G}}\!\left(d'\right) -
  \sum_{d \in D} \mathsf{P}_{1}\!\left(\alpha_{\mathcal{G}}\!\left(d\right)\right)
                   \cdot \beta_{\mathcal{G}}\!\left(d\right) \\
&\qquad=
  \sum_{c \in C} \mathsf{P}_{1}\!\left(\alpha_{\mathcal{G}}\!\left(c\right)\right)
                   \cdot \beta_{\mathcal{G}}\!\left(c\right) +
  \sum_{e \in E_{G}^{+}\!\left(x\right)}
    \mathsf{P}_{1}\!\left(\alpha_{\mathcal{G}}\!\left(e\right)\right)
      \cdot \beta_{\mathcal{G}}\!\left(e\right) -
  \mathsf{P}_{1}\!\left(\alpha_{\mathcal{G}}\!\left(x\right)\right)
    \cdot \beta_{\mathcal{G}}\!\left(x\right) \\
&\qquad=
  \sum_{c \in C} \mathsf{P}_{1}\!\left(\alpha_{\mathcal{G}}\!\left(c\right)\right)
    \cdot \beta_{\mathcal{G}}\!\left(c\right) \; .
\end{align*}

\paragraph{2)}
If $x \in E$, it is obvious that $D' = \left\{\head\!\left(x\right)\right\}$ and $D = E_{G}^{-}\!\left(\head\!\left(x\right)\right)$.

The covered set by an element of $D'$ is $D = E_{G}^{-}\!\left(\head\!\left(x\right)\right)$. Assume that there exists $d \in D$ such that $d \notin C$. Because $C$ intersects every maximal chain of $\mathcal{O}$, there exists $c \in C$ such that $c \nparallel d$. Further assume that $d < c$ then $d \prec \head\!\left(x\right) \leq c$ thus $x \prec d' \leq c$, but this contradicts the fact that $C$ is an antichain. Further assume that $c < d$ then $c < d \prec \head\!\left(x\right)$ but this also contradicts the fact that $\head\!\left(x\right)$ is a minimal element of $\uparrow\!\!C \setminus C$. Therefore we conclude that $d \in D \implies d \in C$ by contradiction and thus $D \subseteq C$.

Because $C$ intersects every maximal chain of $\mathcal{O}$, and every maximal chain of $\mathcal{O}$ intersecting $D$ also intersects $D'$ (i.e., passes through $\head\!\left(x\right)$), $C' = \left(C \cup D'\right) \setminus D$ intersects every maximal chain of $\mathcal{O}$. Assume that $C' = \left(C \cup D'\right) \setminus D$ is not an antichain. Then, because $C$ is an antichain, and $D'$ is also an antichain since it is a singleton set, there exists $x' \in C$ and $y' \in D'$ such that $x' \neq y'$ and $x' \nparallel y'$. $y' < x'$ is not possible, because the covering set of $y'$ is the singleton set consisting only of $y$, and it follows that $x \prec y \leq x'$ holds, but this contradicts the fact that $C$ is an antichain. $x' < y'$ is neither possible, because it follows that $x' < y' \prec y$, but this contradicts the fact that $y$ is a minimal element of $\uparrow\!\!C \setminus C$. Hence, we have proven $C'$ is an antichain by contradiction. Therefore, $C' = \left(C \cup D'\right) \setminus D$ is an antichain cutset. $C < C'$ is obvious.

\paragraph{2-a)}
If $\op\!\left(\head\!\left(x\right)\right) = \text{``$+$''}$, by using Definitions \ref{def:alpha} and \ref{def:beta}, and Lemma \ref{lemma:linearity_of_projection}, and noting that $\head\!\left(x\right) = \head\!\left(e\right)$ holds in the summand of $\sum_{e \in E_{G}^{-}\!\left(\head\!\left(x\right)\right)}$, we have
\begin{align*}
&\mathsf{P}_{1}\!\left(\alpha_{\mathcal{G}}\!\left(\head\!\left(x\right)\right)\right)
  \cdot \beta_{\mathcal{G}}\!\left(\head\!\left(x\right)\right)
 = \mathsf{P}_{1}\!\left(
     \sum_{e \in E_{G}^{-}\!\left(\head\left(x\right)\right)}
       \alpha_{\mathcal{G}}\!\left(e\right)\right)
     \cdot \beta_{\mathcal{G}}\!\left(\head\!\left(x\right)\right) \\
&\qquad= \left(\sum_{e \in E_{G}^{-}\!\left(\head\left(x\right)\right)}
                 \mathsf{P}_{1}\!\left(\alpha_{\mathcal{G}}\!\left(e\right)\right)\right)
         \cdot \beta_{\mathcal{G}}\!\left(\head\!\left(x\right)\right)
 = \sum_{e \in E_{G}^{-}\!\left(\head\left(x\right)\right)} \left(
     \mathsf{P}_{1}\!\left(\alpha_{\mathcal{G}}\!\left(e\right)\right)
     \cdot \beta_{\mathcal{G}}\!\left(\head\!\left(x\right)\right)\right) \\
&\qquad= \sum_{e \in E_{G}^{-}\!\left(\head\left(x\right)\right)} \left(
           \mathsf{P}_{1}\!\left(\alpha_{\mathcal{G}}\!\left(e\right)\right)
         \cdot \beta_{\mathcal{G}}\!\left(\head\!\left(e\right)\right)\right)
 = \sum_{e \in E_{G}^{-}\!\left(\head\left(x\right)\right)}
     \mathsf{P}_{1}\!\left(\alpha_{\mathcal{G}}\!\left(e\right)\right)
     \cdot \beta_{\mathcal{G}}\!\left(e\right)
\end{align*}
By noting $D \subseteq C$ and $C \cap D' = \emptyset$, and using the above equation, we finally obtain
\begin{equation*}
\begin{aligned}
&\sum_{c \in C'} \mathsf{P}_{1}\!\left(\alpha_{\mathcal{G}}\!\left(c\right)\right)
                   \cdot \beta_{\mathcal{G}}\!\left(c\right) \\
&\qquad=\sum_{c \in C} \mathsf{P}_{1}\!\left(\alpha_{\mathcal{G}}\!\left(c\right)\right)
                         \cdot \beta_{\mathcal{G}}\!\left(c\right) +
        \sum_{d' \in D'} \mathsf{P}_{1}\!\left(\alpha_{\mathcal{G}}\!\left(d'\right)\right)
                           \cdot \beta_{\mathcal{G}}\!\left(d'\right) -
        \sum_{d \in D} \mathsf{P}_{1}\!\left(\alpha_{\mathcal{G}}\!\left(d\right)\right)
                         \cdot \beta_{\mathcal{G}}\!\left(d\right) \\
&\qquad=\sum_{c \in C} \mathsf{P}_{1}\!\left(\alpha_{\mathcal{G}}\!\left(c\right)\right)
                         \cdot \beta_{\mathcal{G}}\!\left(c\right) \\
&\qquad\qquad+\mathsf{P}_{1}\!\left(
                \alpha_{\mathcal{G}}\!\left(\head\!\left(x\right)\right)\right)
                \cdot \beta_{\mathcal{G}}\!\left(\head\!\left(x\right)\right) -
              \sum_{e \in E_{G}^{-}\!\left(\head\!\left(x\right)\right)}
                \mathsf{P}_{1}\!\left(\alpha_{\mathcal{G}}\!\left(e\right)\right)
                \cdot \beta_{\mathcal{G}}\!\left(e\right) \\
&\qquad=\sum_{c \in C} \mathsf{P}_{1}\!\left(\alpha_{\mathcal{G}}\!\left(c\right)\right)
                        \cdot \beta_{\mathcal{G}}\!\left(c\right) \; .
\end{aligned}
\end{equation*}

\paragraph{2-b)}
If $\op\!\left(\head\!\left(x\right)\right) = \text{``$\cdot$''}$, by using Definitions \ref{def:alpha} and \ref{def:beta}, and Lemma \ref{lemma:first_projection_of_prod_alpha}, and noting that $\head\!\left(x\right) = \head\!\left(e\right)$ holds in the summand of $\sum_{e \in E_{G}^{-}\!\left(\head\!\left(x\right)\right)}$, we have
\begin{align*}
&\mathsf{P}_{1}\!\left(\alpha_{\mathcal{G}}\!\left(\head\!\left(x\right)\right)\right)
   \cdot \beta_{\mathcal{G}}\!\left(\head\!\left(x\right)\right)
 = \mathsf{P}_{1}\!\left(
     \prod_{e \in E_{G}^{-}\!\left(\head\left(x\right)\right)}
       \alpha_{\mathcal{G}}\!\left(e\right)\right)
     \cdot \beta_{\mathcal{G}}\!\left(\head\!\left(x\right)\right) \\
&\qquad= \left(
           \sum_{e \in E_{G}^{-}\!\left(\head\left(x\right)\right)}
             \mathsf{P}_{1}\!\left(\alpha_{\mathcal{G}}\!\left(e\right)\right)
             \cdot \left(
               \prod_{e' \in E_{G}^{-}\!\left(\head\left(x\right)\right)
                               \setminus \left\{e\right\}}
                 \mathsf{P}_{0}\!\left(\alpha_{\mathcal{G}}\!\left(e'\right)\right)
             \right)
         \right) \cdot \beta_{\mathcal{G}}\!\left(\head\!\left(x\right)\right) \\
&\qquad= \sum_{e \in E_{G}^{-}\!\left(\head\left(x\right)\right)}\left(
           \mathsf{P}_{1}\!\left(\alpha_{\mathcal{G}}\!\left(e\right)\right)
           \cdot \left(
             \prod_{e' \in E_{G}^{-}\!\left(\head\left(x\right)\right)
                             \setminus \left\{e\right\}}
               \mathsf{P}_{0}\!\left(\alpha_{\mathcal{G}}\!\left(e'\right)\right)
           \right) \cdot \beta_{\mathcal{G}}\!\left(\head\!\left(x\right)\right)\right) \\
&\qquad= \sum_{e \in E_{G}^{-}\!\left(\head\left(x\right)\right)}\left(
           \mathsf{P}_{1}\!\left(\alpha_{\mathcal{G}}\!\left(e\right)\right)
           \cdot \left(
             \prod_{e' \in E_{G}^{-}\!\left(\head\left(e\right)\right)
                             \setminus \left\{e\right\}}
               \mathsf{P}_{0}\!\left(\alpha_{\mathcal{G}}\!\left(e'\right)\right)
           \right) \cdot \beta_{\mathcal{G}}\!\left(\head\!\left(e\right)\right)\right) \\
&\qquad= \sum_{e \in E_{G}^{-}\!\left(\head\left(x\right)\right)}
           \mathsf{P}_{1}\!\left(\alpha_{\mathcal{G}}\!\left(e\right)\right)
           \cdot \beta_{\mathcal{G}}\!\left(e\right) \; .
\end{align*}
By noting $D \subseteq C$ and $C \cap D' = \emptyset$, and using the above equation, we finally obtain
\begin{equation*}
\begin{aligned}
&\sum_{c \in C'} \mathsf{P}_{1}\!\left(\alpha_{\mathcal{G}}\!\left(c\right)\right)
                   \cdot \beta_{\mathcal{G}}\!\left(c\right) \\
&\qquad=\sum_{c \in C} \mathsf{P}_{1}\!\left(\alpha_{\mathcal{G}}\!\left(c\right)\right)
                         \cdot \beta_{\mathcal{G}}\!\left(c\right) +
        \sum_{d' \in D'} \mathsf{P}_{1}\!\left(\alpha_{\mathcal{G}}\!\left(d'\right)\right)
                           \cdot \beta_{\mathcal{G}}\!\left(d'\right) -
        \sum_{d \in D} \mathsf{P}_{1}\!\left(\alpha_{\mathcal{G}}\!\left(d\right)\right)
                         \cdot \beta_{\mathcal{G}}\!\left(d\right) \\
&\qquad=\sum_{c \in C} \mathsf{P}_{1}\!\left(\alpha_{\mathcal{G}}\!\left(c\right)\right)
                         \cdot \beta_{\mathcal{G}}\!\left(c\right) \\
&\qquad\qquad+\mathsf{P}_{1}\!\left(
                \alpha_{\mathcal{G}}\!\left(\head\!\left(x\right)\right)\right)
              \cdot \beta_{\mathcal{G}}\!\left(\head\!\left(x\right)\right) -
              \sum_{e \in E_{G}^{-}\!\left(\head\!\left(x\right)\right)}
                \mathsf{P}_{1}\!\left(\alpha_{\mathcal{G}}\!\left(e\right)\right)
                  \cdot \beta_{\mathcal{G}}\!\left(e\right) \\
&\qquad=\sum_{c \in C} \mathsf{P}_{1}\!\left(\alpha_{\mathcal{G}}\!\left(c\right)\right)
                         \cdot \beta_{\mathcal{G}}\!\left(c\right) \; .
\end{aligned}
\end{equation*}
\end{proof}

\section{Proof of Theorem \ref{thm:backward_invariants}}

\begin{proof}
Let $C$ be an antichain cutset of $\mathcal{O}$. Then, by repeating the construction of a covering antichain cutset described in Lemma \ref{lemma:covering_antichain_cutset}, one can obtain a strictly ascending chain of the lattice of antichain cutsets $\left(\AC\left(\mathcal{O}\right), \leq\right)$ starting at $C$, say, $C = C_1 < C_2 < \cdots < C_n = \snk\!\left(G\right)$ where $C_{i + 1}$ is a covering antichain cutset of $C_i$ for $i = 1, \dots, n - 1$, and, as it is written, this ascending chain eventually terminates at $C_n = \snk\!\left(G\right)$ since $G$ is finite. Moreover, again by Lemma \ref{lemma:covering_antichain_cutset}, we have $\sum_{c \in C_i} \mathsf{P}_{1}\!\left(\alpha_{\mathcal{G}}\!\left(c\right)\right) \cdot \beta_{\mathcal{G}}\!\left(c\right) = \sum_{c \in C_{i + 1}} \mathsf{P}_{1}\!\left(\alpha_{\mathcal{G}}\!\left(c\right)\right) \cdot \beta_{\mathcal{G}}\!\left(c\right)$ for $i = 1, \dots, n - 1$. By induction on the chain $C = C_1 < \cdots < C_n = \snk\!\left(G\right)$, one obtains $\sum_{c \in C} \mathsf{P}_{1}\!\left(\alpha_{\mathcal{G}}\!\left(c\right)\right) \cdot \beta_{\mathcal{G}}\!\left(c\right) = \sum_{c \in \snk\!\left(G\right)} \mathsf{P}_{1}\!\left(\alpha_{\mathcal{G}}\!\left(c\right)\right) \cdot \beta_{\mathcal{G}}\!\left(c\right)$. The above discussion can be repeated on another antichain cutset $C' \in \AC\!\left(\mathcal{O}\right)$. Therefore, we finally obtain $\sum_{c \in C} \mathsf{P}_{1}\!\left(\alpha_{\mathcal{G}}\!\left(c\right)\right) \cdot \beta_{\mathcal{G}}\!\left(c\right) = \sum_{c \in \snk\!\left(G\right)} \mathsf{P}_{1}\!\left(\alpha_{\mathcal{G}}\!\left(c\right)\right) \cdot \beta_{\mathcal{G}}\!\left(c\right) = \sum_{c \in C'} \mathsf{P}_{1}\!\left(\alpha_{\mathcal{G}}\!\left(a\right)\right) \cdot \beta_{\mathcal{G}}\!\left(c\right)$. It is obvious that $\src\!\left(G\right)$ and $\snk\!\left(G\right)$ are an antichain cutset of $\mathcal{O}$, so $\sum_{c \in \src\!\left(G\right)} \mathsf{P}_{1}\!\left(\xi\!\left(c\right)\right) \cdot \beta_{\mathcal{G}}\!\left(c\right) = \sum_{c \in \src\!\left(G\right)} \mathsf{P}_{1}\!\left(\alpha_{\mathcal{G}}\!\left(c\right)\right) \cdot \beta_{\mathcal{G}}\!\left(c\right) = \sum_{c \in \snk\!\left(G\right)} \mathsf{P}_{1}\!\left(\alpha_{\mathcal{G}}\!\left(c\right)\right) \cdot \beta_{\mathcal{G}}\!\left(c\right) = \sum_{c \in \snk\!\left(G\right)} \mathsf{P}_{1}\!\left(\alpha_{\mathcal{G}}\!\left(c\right)\right) = \mathsf{P}_{1}\!\left(\sum_{c \in \snk\!\left(G\right)} \alpha_{\mathcal{G}}\!\left(c\right)\right)$ clearly holds.
\end{proof}

%\vskip 0.2in
\bibliography{azuma17a}

\end{document}